\newcommand{\transitiononly}{J}
\newcommand{\transition}{J_t}
\def\eqref#1{equation~\ref{#1}}
\def\1{\bm{1}}
\DeclareMathAlphabet{\mathsfit}{\encodingdefault}{\sfdefault}{m}{sl}
\SetMathAlphabet{\mathsfit}{bold}{\encodingdefault}{\sfdefault}{bx}{n}
\def\gE{{\mathcal{E}}}
\definecolor{customwhite}{HTML}{FCFBF7}
\definecolor{customturq}{HTML}{1D9D79}
\definecolor{customorange}{HTML}{D96002} 
\definecolor{custombeige}{HTML}{d6c9b1} 
\definecolor{custompurple}{HTML}{AEADF0}
\definecolor{customwhite2}{HTML}{fbf9f4}
\definecolor{customblue}{HTML}{4D9DE0}
\definecolor{myred}{RGB}{215,48,39}
\definecolor{mygreen}{RGB}{26,152,80}
\newcommand{\maybe}{\textcolor{gray}{\checkmark\kern-1.1ex\raisebox{.7ex}{\rotatebox[origin=c]{125}{--}}}}
\newcommand{\no}{\raisebox{-.3em}{\rlap{\raisebox{.3em}{\hspace{1.4em}\scriptsize}}\includegraphics[height=0.9em]{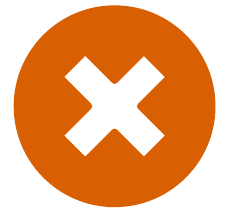}}\xspace}
\newcommand{\yes}{\raisebox{-.3em}{\rlap{\raisebox{.3em}{\hspace{1.4em}\scriptsize}}\includegraphics[height=0.9em]{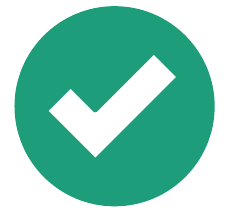}}\xspace}
\definecolor{plotgreen}{RGB}{72, 164, 135}   
\definecolor{plotorange}{RGB}{222, 111, 68}  
\definecolor{plotpurple}{RGB}{111, 130, 173}
\definecolor{myblue}{rgb}{.8, .8, 1}
\definecolor{pastelblue}{RGB}{76,113,175}
\definecolor{pastelgreen}{RGB}{144,238,144}
\definecolor{pastelred}{RGB}{196,78,82}
\definecolor{pastelgrey}{RGB}{230,230,230}
\definecolor{pastelbeige}{RGB}{243,236,221}
\definecolor{pastelpurple}{RGB}{154,139,192}
\definecolor{salmon}{RGB}{250, 128, 114}
\definecolor{darkgreen}{rgb}{0,0.6,0}
\definecolor{darkred}{rgb}{0.5,0,0}
\definecolor{verylightgreen}{HTML}{F6FFF9}
\definecolor{verylightred}{HTML}{FFF4F3}
\definecolor{verylightgray}{HTML}{F4F6F6}
\definecolor{babyblueeyes}{rgb}{0.63, 0.79, 0.95}
\definecolor{lightpink}{rgb}{1.00, 0.714, 0.757}
\definecolor{customwhite}{HTML}{FCFBF7}
\definecolor{customturq}{HTML}{1D9D79}
\definecolor{customorange}{HTML}{D96002} 
\definecolor{custombeige}{HTML}{d6c9b1} 
\definecolor{custompurple}{HTML}{AEADF0}
\definecolor{highlightturq}{HTML}{92CEBD}
\definecolor{highlightpurple}{HTML}{AEADF0}
\definecolor{highlightorange}{HTML}{ECAF80}
\definecolor{highlightgray}{HTML}{E6E6E6}
\definecolor{customblue}{HTML}{4D9DE0} 
\newcommand{\hlturq}[1]{\sethlcolor{highlightturq}\hl{#1}}
\newcommand{\hlpurple}[1]{\sethlcolor{highlightpurple}\hl{#1}}
\newcommand{\hlorange}[1]{\sethlcolor{highlightorange}\hl{#1}}
\newcommand{\hllightgray}[1]{\sethlcolor{highlightgray}\hl{#1}}
\definecolor{some_color}{HTML}{86CFDA}
\colorlet{PastelGreen}{some_color!50!white}
\tikzset{
    -Latex,auto,node distance =1 cm and 1 cm,semithick,
    state/.style ={ellipse, draw, minimum width = 0.7 cm},
    point/.style = {circle, draw, inner sep=0.04cm,fill,node contents={}},
    bidirected/.style={Latex-Latex,dashed},
    el/.style = {inner sep=2pt, align=left, sloped}
}
\def\thmt@refnamewithcomma #1#2#3,#4,#5\@nil{%
	\@xa\def\csname\thmt@envname #1utorefname\endcsname{#3}%
	\ifcsname #2refname\endcsname
	\csname #2refname\expandafter\endcsname\expandafter{\thmt@envname}{#3}{#4}%
	\fi}
\Crefname{conjecture}{Conjecture}{Conjectures}
\Crefname{definition}{Definition}{Definitions}
\Crefname{observation}{Observation}{Observations}
\Crefname{assumption}{Assumption}{Assumptions}
\Crefname{axiom}{Axiom}{Axioms}
\Crefname{case}{Case}{Cases}
\Crefname{claim}{Claim}{Claims}
\Crefname{conclusion}{Conclusion}{Conclusions}
\Crefname{condition}{Condition}{Conditions}
\Crefname{criterion}{Criterion}{Criteria}
\Crefname{exercise}{Exercise}{Exercises}
\Crefname{example}{Example}{Examples}
\Crefname{notation}{Notation}{Notations}
\Crefname{problem}{Problem}{Problems}
\Crefname{property}{Property}{Properties}
\Crefname{remark}{Remark}{Remarks}
\Crefname{solution}{Solution}{Solutions}
\Crefname{summary}{Summary}{Summaries}
\Crefname{motivation}{Motivation}{Motivations}
\Crefname{query}{Query}{Queries}
\newcommand*\dbar[1]{\overline{\overline{\lower0.2ex\hbox{$#1$}}}}
\def\cG{{\mathcal{G}}}
\def\cJ{{\mathcal{J}}}
\def\cL{{\mathcal{L}}}
\def\cX{{\mathcal{X}}}
\DeclareFontFamily{U}{BOONDOX-calo}{\skewchar\font=45 }
\DeclareFontShape{U}{BOONDOX-calo}{m}{n}{
  <-> s*[1.05] BOONDOX-r-calo}{}
\DeclareFontShape{U}{BOONDOX-calo}{b}{n}{
  <-> s*[1.05] BOONDOX-b-calo}{}
\DeclareMathAlphabet{\mathcalb}{U}{BOONDOX-calo}{m}{n}
\SetMathAlphabet{\mathcalb}{bold}{U}{BOONDOX-calo}{b}{n}
\DeclareMathAlphabet{\mathbcalb}{U}{BOONDOX-calo}{b}{n}
\renewcommand{\paragraph}[1]{{\noindent \textbf{#1.}}}
\let\originalleft\left
\let\originalright\right
\renewcommand{\left}{\mathopen{}\mathclose\bgroup\originalleft}
\renewcommand{\right}{\aftergroup\egroup\originalright}
\global\long\def\norm#1{\left\lVert #1\right\rVert }
\global\long\def\inner#1#2{\left\langle \vphantom{1^2} #1, #2\right\rangle}
\newcommand{\vheader}{\vspace*{-.2cm}}
\newcommand{\meas}{p}
\newcommand{\genof}[1]{[#1]}
\newacronym{SNIS}{\textsc{snis}}
{Self-Normalized Importance Sampling}
\newacronym{SMC}{\textsc{smc}}
{Sequential Monte Carlo}
\newacronym{CFG}{\textsc{cfg}}
{classifier-free guidance}
\newacronym{PDE}{\textsc{pde}}
{Partial Differential Equations}
\newacronym{sde}{\textsc{sde}}
{Stochastic Differential Equations}
\newacronym{ode}{\textsc{ode}}
{Ordinary Differential Equations}
\definecolor{antiquefuchsia}{rgb}{0.57, 0.36, 0.51}
\definecolor{amethyst}{rgb}{0.6, 0.4, 0.8}
\newcommand{\deriv}[2]{\frac{\partial #1}{\partial #2}}
\newcommand{\mean}{\mathbb{E}}
\newcommand{\var}{{\rm I\kern-.3em D}}
\newcommand{\cond}{\,|\,}
\newcommand{\Normal}{\mathcal{N}}
\newcommand{\eps}{\varepsilon}
\newtheorem*{theorem*}{Theorem}
\newtheorem*{lemma*}{Lemma}
\newtheorem*{proposition*}{Proposition}
\newtheorem*{corollary*}{Corollary}
\newtheorem*{example*}{Example}
\DeclareMathSymbol{\shortminus}{\mathbin}{AMSa}{"39}
\crefname{equation}{Eq.}{Eqs.}
\crefname{section}{Sec.}{Secs.}
\crefname{corollary}{Cor.}{Cors.}
\crefname{proposition}{Prop.}{Props.}
\crefname{appendix}{App.}{Apps.}
\crefname{theorem}{Thm.}{Thms.}
\crefname{figure}{Fig.}{Figs.}
\crefname{tabular}{Tab.}{Tabs.}
\newcommand{\ifbm}[1]
{#1}
\newcommand{\bbR}{\mathbb{R}}
\newcommand{\ltemp}{T_L}
\newcommand{\stemp}{T_S}
\newtcolorbox{mybox}[1][]{mybox,#1}
\newtcolorbox{halfmybox}[1][]{halfmybox,#1}
\definecolor{pastel_purple}{HTML}{756FB3}
\definecolor{pastel_green}{HTML}{1D9D79}
\colorlet{PastelPurpleLight}{pastel_purple!15!white}
\colorlet{PastelGreenLight}{pastel_green!15!white}
\let\save@mathaccent\mathaccent
\newcommand*\if@single[3]{%
  \setbox0\hbox{${\mathaccent"0362{#1}}^H$}%
  \setbox2\hbox{${\mathaccent"0362{\kern0pt#1}}^H$}%
  \ifdim\ht0=\ht2 #3\else #2\fi
  }
\newcommand*\rel@kern[1]{\kern#1\dimexpr\macc@kerna}
\newcommand*\widebar[1]{\@ifnextchar^{{\wide@bar{#1}{0}}}{\wide@bar{#1}{1}}}
\newcommand*\wide@bar[2]{\if@single{#1}{\wide@bar@{#1}{#2}{1}}{\wide@bar@{#1}{#2}{2}}}
\newcommand*\wide@bar@[3]{%
  \begingroup
  \def\mathaccent##1##2{%
    \let\mathaccent\save@mathaccent
    \if#32 \let\macc@nucleus\first@char \fi
    \setbox\z@\hbox{$\macc@style{\macc@nucleus}_{}$}%
    \setbox\tw@\hbox{$\macc@style{\macc@nucleus}{}_{}$}%
    \dimen@\wd\tw@
    \advance\dimen@-\wd\z@
    \divide\dimen@ 3
    \@tempdima\wd\tw@
    \advance\@tempdima-\scriptspace
    \divide\@tempdima 10
    \advance\dimen@-\@tempdima
    \ifdim\dimen@>\z@ \dimen@0pt\fi
    \rel@kern{0.6}\kern-\dimen@
    \if#31
      \overline{\rel@kern{-0.6}\kern\dimen@\macc@nucleus\rel@kern{0.4}\kern\dimen@}%
      \advance\dimen@0.4\dimexpr\macc@kerna
      \let\final@kern#2%
      \ifdim\dimen@<\z@ \let\final@kern1\fi
      \if\final@kern1 \kern-\dimen@\fi
    \else
      \overline{\rel@kern{-0.6}\kern\dimen@#1}%
    \fi
  }%
  \macc@depth\@ne
  \let\math@bgroup\@empty \let\math@egroup\macc@set@skewchar
  \mathsurround\z@ \frozen@everymath{\mathgroup\macc@group\relax}%
  \macc@set@skewchar\relax
  \let\mathaccentV\macc@nested@a
  \if#31
    \macc@nested@a\relax111{#1}%
  \else
    \def\gobble@till@marker##1\endmarker{}%
    \futurelet\first@char\gobble@till@marker#1\endmarker
    \ifcat\noexpand\first@char A\else
      \def\first@char{}%
    \fi
    \macc@nested@a\relax111{\first@char}%
  \fi
  \endgroup
}
\definecolor{customwhite}{HTML}{FCFBF7}
\definecolor{customturq}{HTML}{1D9D79}
\definecolor{customorange}{HTML}{D96002} 
\definecolor{custombeige}{HTML}{d6c9b1} 
\definecolor{customblue}{HTML}{4D9DE0}
\newcommand{\cut}[1]{}
\newcommand{\FKCs}{\textsc{Feynman-Kac Corrector}\xspace}
\newcommand{\FKC}{\textsc{Feynman-Kac Correctors}\xspace}
\theoremstyle{plain}
\newtheorem{theorem}{Theorem}[section]
\newtheorem{proposition}[theorem]{Proposition}
\newtheorem{lemma}[theorem]{Lemma}
\theoremstyle{definition}
\theoremstyle{remark}
\newcommand{\zerodisplayskips}{%
  \setlength{\abovedisplayskip}{2mm}%
  \setlength{\belowdisplayskip}{2mm}%
  \setlength{\abovedisplayshortskip}{1mm}%
  \setlength{\belowdisplayshortskip}{1mm}}
\appto{\normalsize}{\zerodisplayskips}
\appto{\small}{\zerodisplayskips}
\appto{\footnotesize}{\zerodisplayskips}
\icmltitlerunning{Feynman-Kac Correctors in Diffusion: Annealing, Guidance, and Product of Experts}
\begin{document}

\twocolumn[
\icmltitle{Feynman-Kac Correctors in Diffusion: \\
Annealing, Guidance, and Product of Experts}



\icmlsetsymbol{equal}{*}
\icmlsetsymbol{equals}{$\dagger$}

\begin{icmlauthorlist}
\icmlauthor{Marta Skreta}{equal,uoft,vector}
\icmlauthor{Tara Akhound-Sadegh}{equal,mcgill,mila}
\icmlauthor{Viktor Ohanesian}{equal,icl}
\icmlauthor{Roberto Bondesan}{icl}
\icmlauthor{Alán Aspuru-Guzik}{uoft,vector}
\icmlauthor{Arnaud Doucet}{gdm}
\icmlauthor{Rob Brekelmans}{vector}
\icmlauthor{Alexander Tong}{equals,udem,mila,duke}
\icmlauthor{Kirill Neklyudov}{equals,udem,mila,ic}
\end{icmlauthorlist}

\icmlaffiliation{uoft}{University of Toronto}
\icmlaffiliation{vector}{Vector Institute}
\icmlaffiliation{mcgill}{McGill University}
\icmlaffiliation{mila}{Mila - Quebec AI Institute}
\icmlaffiliation{icl}{Imperial College London}
\icmlaffiliation{gdm}{Google DeepMind}
\icmlaffiliation{udem}{Université de Montréal}
\icmlaffiliation{duke}{Current AT affiliation: Duke University}
\icmlaffiliation{ic}{Institut Courtois}

\icmlcorrespondingauthor{AT}{ayt14@duke.edu}
\icmlcorrespondingauthor{KN}{k.necludov@gmail.com}

\icmlkeywords{Machine Learning, ICML}

\vskip 0.3in
]



\printAffiliationsAndNotice{\icmlEqualContribution,\icmlEqualsContribution} 

\begin{abstract}
While score-based generative models are the model of choice across diverse domains, there are limited tools available for controlling inference-time behavior in a principled manner, e.g. for composing multiple pretrained models. Existing classifier-free guidance methods use a simple heuristic to mix conditional and unconditional scores to approximately sample from conditional distributions. However, such methods do not approximate the intermediate distributions, necessitating additional `corrector' steps. In this work, we provide an efficient and principled method for sampling from a sequence of \textit{annealed}, \textit{geometric-averaged}, or \textit{product} distributions derived from pretrained score-based models. We derive a weighted simulation scheme which we call \FKC (FKCs) based on the celebrated Feynman-Kac formula by carefully accounting for terms in the appropriate partial differential equations (PDEs). To simulate these PDEs, we propose Sequential Monte Carlo (SMC) resampling algorithms that leverage inference-time scaling to improve sampling quality. We empirically demonstrate the utility of our methods by proposing amortized sampling via inference-time temperature annealing, improving multi-objective molecule generation using pretrained models, and improving classifier-free guidance for text-to-image generation. Our code is available at \url{https://github.com/martaskrt/fkc-diffusion}.
\end{abstract}

\vheader 
\vheader 
\section{Introduction} \label{sec:intro}
\looseness=-1
Score-based generative models, also known as diffusion models, have emerged as the model of choice across diverse generative tasks such as image generation, natural language, and protein simulation \citep{saharia2022photorealistic,sahoo2024simple,abramson2024accurate}. These models leverage the ability to estimate scores of the sequence of noise-corrupted distributions and then use the learned scores to reverse the corruption process enabling high-quality generation.
Thus, diffusion models aim to produce new samples from the same distribution as the training data.

\begin{figure}[t]
    \centering
    \includegraphics[width=0.9\linewidth]{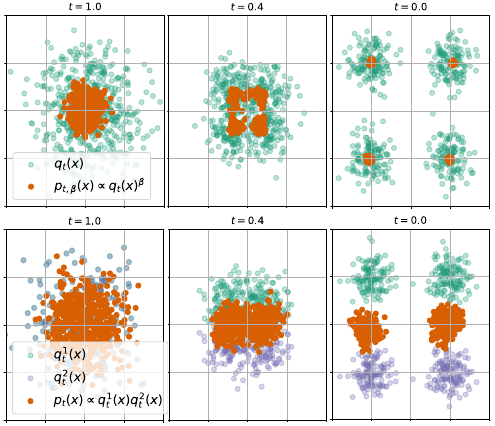}
    \vspace{-2mm}
    \caption{ \FKCs Inference for 
    annealed 
    $\textcolor{plotorange}{p_{t,\beta}(x)} \propto \textcolor{plotgreen}{q_t(x)}^{\textcolor{plotorange}{\beta=10}}$ and 
    product 
    $\textcolor{plotorange}{p_{t}(x)} \propto \textcolor{plotgreen}{q_t^1(x)}\textcolor{plotpurple}{q_t^2(x)}$ densities.}
    \label{fig:enter-label}
\vspace{-15pt}
\end{figure}

\looseness=-1
However, the classical paradigm of generative modeling as the problem of reproducing the training data distribution becomes less relevant for many applications including drug discovery and text-to-image generation.
In practice, generative models demonstrate the best performance when tailored to specific needs at inference time.
For instance, linear combinations of scores allow for concept composition \citep{liu2022compositional} or for increasing image-prompt consistency as in classifier-free guidance (CFG) \citep{ho2021classifier}.
However, by modifying the scores, one loses control over the marginal distributions of the generated samples.
Various approaches from the Monte Carlo sampling literature have been adapted to `correct' samples along a trajectory to more closely match the prescribed intermediate distributions.   
Assuming access to an exact score, additional Langevin corrector steps with the desired invariant distribution can be applied with additional simulation steps as the only practical overhead \citep{song2021scorebased, bradley2024classifier}.  
However, these corrector schemes are only exact in the limit of infinite intermediate steps.
Accept-reject or Sequential Monte Carlo techniques may be used when the score is parameterized through a scalar energy function \citep{du2023reduce, phillips2024particle}, although these parameterizations require extra computation during training and may sacrifice expressivity in practice \citep{salimans2021should, thornton2025controlled}.
While methods for sampling from mixtures or equiprobable regions of diffusion models have been proposed \citep{skreta2024superposition}, general solutions to accurately sample from combinations or temperings of flexibly-parameterized diffusion models with limited computational overhead remain elusive.

\looseness=-1
To address these challenges, we introduce \FKCs (FKCs), which enable efficient and principled sampling from a sequence of \textit{annealed}, \textit{geometric-averaged}, or \textit{product} distributions derived from pretrained diffusion models.
To develop \FKC and test their efficacy, we make the following contributions:
\begin{itemize}[noitemsep,topsep=0pt,parsep=0pt,partopsep=0pt,leftmargin=*]
    \item We propose a flexible recipe for constructing weighted stochastic differential equations (SDEs), which account for additional terms appearing when manipulating the distribution of generated samples.
    \item As our primary examples, we derive the correction terms for multiple heuristic schemes commonly used to approximate annealed, product, or geometric averaged distributions, including \ CFG (\cref{sec:method}).
    \item To simulate these weighted SDEs, we propose a family of Sequential Monte Carlo (SMC) resampling schemes, which `correct' a batch of simulated samples to closely approximate the intermediate target distributions (\cref{sec:smc}). 
    \item For the problem of sampling from an unnormalized density, we demonstrate that FKC allows for sampling from a variety of temperatures without retraining (\cref{sec:samplers}). Moreover, we demonstrate that a high-temperature learning, low-temperature inference scheme can be 
    more efficient than the notoriously difficult task of directly training a sampler at a lower temperature.
    \item For pretrained diffusion models we demonstrate that adding FKC terms enhances compositional generation of molecules with multiple properties (\cref{sec:molecules}) and classifier-free guidance for image generation (\cref{sec:edm2}).
\end{itemize}

\vheader 
\section{Background}
\label{sec:background}
\subsection{Diffusion Models}
\looseness=-1
Generative modeling via diffusion models can be formulated as the simulation of the Stochastic Differential Equation (SDE) corresponding to the reverse-time process.
In particular, during training, one gradually destroys samples from the data-distribution $p_{\text{data}}(x)$ by simulating the following noising SDE:
\vspace{-2pt}
\begin{align}
    dx_\tau = f_\tau(x_\tau)d\tau + \sigma_\tau d\widebar{W}_\tau\,, ~~ x_{\tau = 0} \sim p_{\text{data}}(x)\,,
    \label{eq:noising}
\end{align}
\looseness=-1
where $f_\tau(x_\tau)$ is usually some linear drift function $f_\tau(x_\tau) = \alpha_\tau x_\tau$, $\sigma_\tau$ defines the scale of noise through time, and $d\widebar{W}_\tau$ is the standard Wiener process.
The drift $f_\tau$ and the diffusion coefficient $\sigma_\tau$ are chosen so the final density is close to the standard normal distribution $p_{\tau=1} \approx \Normal(0,I_d)$.

\looseness=-1
The generation process then can be defined as the family of denoising SDEs in the opposite time direction ($t = 1-\tau$),
\begin{align}
    dx_t =~& \left(-f_t(x_t) + \sigma_t^2\nabla\log p_t(x_t)\right)dt + \sigma_t dW_t\,,
    \label{eq:denoising}
\end{align}
where $p_{t} = p_{1-\tau}$ is the density of the marginals induced by the noising process in \cref{eq:noising}; hence, the process starts with $x_0 \sim \Normal(x\cond 0, I_d)$.
By training a model of the score functions $\nabla \log p_t(\cdot)$, one can generate new samples from $p_{\text{data}}(x)$ using \cref{eq:denoising} \citep{song2021scorebased}.

\subsection{Feynman-Kac PDEs}\label{sec:fk}
\looseness=-1
While \cref{eq:denoising} describes a procedure for simulating individual particles, we can also derive Partial Differential Equations (PDEs) which describe the time-evolution of the density of samples $p_t(x)$ under this SDE. 
We begin by describing the relevant equations for the standard SDE case.


\textbf{(1) Continuity Equation}, which describes how the density changes when the samples move in space according to a flow or ODE with drift $v_t$
\small 
\begin{align}
      \hspace*{-.22cm} dx_t = v_t(x_t)dt \implies \deriv{p_t^{\text{ode}}(x)}{t} = -\inner{\nabla}{p^{\text{ode}}_t(x)v_t(x)}.
\label{eq:continuity}
\end{align}
\normalsize
where $p_t^{\text{ode}}$ indicates the evolution only according to a flow.

\looseness=-1
\textbf{(2) Diffusion Equation}, which describes the change of the density for the pure Brownian motion with coefficient $\sigma_t$,
\begin{align}
    dx_t = \sigma_t dW_t ~~\implies~~ \deriv{p_t^{\text{diff}}(x)}{t} = \frac{\sigma^2_t}{2}\Delta p_t^{\text{diff}}(x)\,.
\label{eq:diffusion}
\end{align}
where $p_t^{\text{diff}}$ denotes evolution due to the diffusion term only.

\looseness=-1
The SDE in \cref{eq:denoising} can be viewed as the composition of a flow and diffusion terms, where the corresponding Fokker-Planck PDE describes the combined evolution
\begin{align}
    \deriv{p_t^{\text{sde}}(x)}{t} =~& -\inner{\nabla}{p_t^{\text{sde}}(x)v_t(x)} + \frac{\sigma_t^2}{2}\Delta p_t^{\text{sde}}(x) . \label{eq:fpe}
\end{align}

\looseness=-1
However, our main focus in this work will be to study a third type of PDE, which will yield \textit{weighted} SDEs that we eventually use to simulate a sequence of marginals other those the forward noising process $p_{1-\tau}$ (\cref{sec:method}). 

\textbf{(3) Reweighting Equation}, which describes the change of density when samples have time-dependent log-weights $w_t$ which are updated based on the positions of samples $x_t$,
\looseness=-1
\begin{align}
\begin{split}
    dw_t = \bar{g}_t(x_t)dt ~~\implies~~ \deriv{p_t^w(x)}{t} = \bar{g}_t(x)p_t^w(x)\,,\\
    \text{where} \quad \bar{g}_t(x) = g_t(x) -  \int  g_t(x)p_t^w(x) dx
\end{split}
\label{eq:reweighting}
\end{align}
where the last equation guarantees the conservation of the normalization constant, i.e. $\int dx\; \bar{g}_t(x)p_t^w(x) = 0$.

\textbf{Feynman-Kac Formula}
We now focus on the combination of all three components to describe the \textit{Feynman-Kac PDE},
\begin{align}
    \deriv{p_t^{\textsc{fk}}(x)}{t} =~& -\inner{\nabla}{p_t^{\textsc{fk}}(x)v_t(x)} + \frac{\sigma_t^2}{2}\Delta p_t^{\textsc{fk}}(x) 
    \nonumber\\
    ~&+ \bar{g}_t(x)p_t^{\textsc{fk}}(x)\,, \label{eq:fk_pde}
\end{align}
\normalsize
where to sample from $p^{\textsc{fk}}_t(x)$, one first has to sample $x_t$ via the following SDE
\begin{align}
\begin{split}
    dx_t = v_t(x_t)dt + \sigma_t dW_t\,,\;\; dw_t = \bar{g}_t(x_t) dt\,,
\end{split}
\label{eq:weighted_sde}
\end{align}
\normalsize
and then reweight the obtained samples using $w_t$.
Thus, $p_t^{\textsc{fk}}(x)$ reflects the density of \textit{weighted} samples, which differs from the density $p_t^{\text{sde}}(x)$ obtained via the Fokker-Planck PDE in \cref{eq:fpe} due to the addition of reweighting terms.

\looseness=-1
In practice, we can account for this difference by sampling
\begin{align}
    i \sim \text{Categorical}\Bigg\{\frac{\exp(w_T^k)}{\sum_{j=1}^K \exp(w_T^j)}\Bigg\}_{k=1}^K
    \,,\label{eq:snis0}
\end{align}
and returning 
$x_T^{(i)}$ as an approximate sample from $p_T$.
We discuss more refined resampling techniques in \cref{sec:smc}.
For estimating the expectation of test functions $\phi$, we account for the weights by reweighting a collection of $K$ particles, i.e.,
\begin{align}
    \mean_{p_T}\left[\phi(x)\right] \approx \sum_{k=1}^K \frac{\exp(w_T^k)}{\sum_j \exp(w_T^j)} \phi(x_T^k)\,. \label{eq:snis}
\end{align}
\normalsize
 For justification of the validity of this weighting scheme for Feynman-Kac PDEs,
see \cref{app:pf_expectations}.
The expression in
~ \cref{eq:snis} corresponds to \gls{SNIS} estimation, which converges to exact 
expectation estimators when $K\rightarrow \infty$ (e.g. \citet{naesseth2019elements}).   


\vheader
\subsection{Flexibility of Simulation for Given Marginals}\label{sec:conversion_bg}
Given a PDE describing the time-evolution of a particular density $p_t(x)$, there may exist multiple simulation methods.
For instance, it is well-known that the diffusion equation \labelcref{eq:diffusion} 
can be simulated using an ODE \citep{song2021scorebased}.
 
\textbf{Diffusion $\rightarrow$ Continuity}
~Through simple manipulations, we can rewrite the diffusion equation using a continuity equation and change the simulation scheme accordingly
\small
\begin{align}
    \deriv{p_t(x)}{t}& = \frac{\sigma_t^2}{2}\Delta p_t(x) = -\inner{\nabla}{p_t(x)\left(-\frac{\sigma_t^2}{2}\nabla\log p_t(x)\right)} \nonumber \\
    ~&\implies dx_t = -\frac{\sigma_t^2}{2}\nabla\log p_t(x_t)dt\,. \label{eq:diffusion_as_continuity}
\end{align}
\normalsize

The reweighting equation adds an extra dimension to the interplay between different simulation schemes.

\textbf{Continuity $\rightarrow$ Reweighting} ~ We first recast the continuity equation in terms of reweighting, in which case the simulation changes the density solely by adjusting the weights of samples (without transport),
\small
\begin{align}
   \hspace*{-.22cm} &\deriv{p_t(x)}{t} = -\inner{\nabla}{p_t(x)v_t(x)} = \left(\frac{-1}{p_t(x)}\inner{\nabla}{p_t(x)v_t(x)}\right)p_t(x)\nonumber \\
    &\implies dw_t = (-\inner{\nabla}{v_t(x_t)}-\inner{\nabla\log p_t(x_t)}{v_t(x_t)}) dt \raisetag{11pt} \label{eq:continuity_as_rw}
\end{align}
\normalsize

\textbf{Diffusion $\rightarrow$ Reweighting}
~ We further observe that diffusion terms may be captured in the weights using
\small 
\begin{align}
     &\deriv{p_t(x)}{t} = \frac{\sigma_t^2}{2}\Delta p_t(x) = \frac{\sigma_t^2}{2} p_t(x) \left(\Delta \log p_t(x) + \| \nabla \log p_t(x) \|^2 \right) \nonumber \\
     & \implies dw_t = \frac{\sigma_t^2}{2}(\Delta \log p_t(x_t) +\|\nabla \log p_t(x_t)\|^2 )~ dt \label{eq:diffusion_as_rw}
\end{align}
\normalsize
In particular, using \cref{eq:continuity_as_rw,eq:diffusion_as_rw} we now have an approach for translating arbitrary flow $v_t$ or diffusion $\sigma_t$ terms into the reweighting factors, assuming access to an exact score function $\nabla \log p_t$.
Such manipulations will play a key role in deriving our proposed methods in \cref{sec:method}.

\vheader 
\section{Modifying Diffusion Inference using Feynman-Kac Correctors}
\label{sec:method}
In this section, we propose new sampling tools for combining or modifying diffusion models at inference time using the Feynman-Kac PDEs in \cref{sec:fk}.  To this end, consider several different pretrained diffusion models with marginals $\{q_t^i\}_{i=1}^M$ following
\begin{subequations}
\begin{align}
 \deriv{q_t^i}{t}&= -\inner{\nabla}{q_t^i \big( -f_t + \sigma_t^2 \nabla \log q_t^i\big)} + \frac{\sigma_t^2}{2}\Delta q_t^i\,,\\
  dx_t &= \left(-f_t(x_t) + \sigma_t^2\nabla\log q_t^i(x_t)\right)dt + \sigma_t dW_t\,,
\end{align}\label{eq:q_fpe}
\end{subequations}
\normalsize
which is the denoising SDE from \cref{eq:denoising}.
Note that $q_t^i$ may arise from training on different datasets or correspond to conditional models with different conditioning. Throughout this work, we assume access to an exact score model $s_t^{i}(x;\theta^i) = \nabla \log q_t^i(x)$, in part to facilitate the conversion rules introduced in \cref{sec:conversion_bg} and summarized in \cref{tab:conversion_rules}.

\looseness=-1
At inference time, we would like to sample from a modified target distribution involving these given models.  While other variants are possible, we focus on the following examples:
\looseness=-1
\begin{align}
\hspace*{-.22cm}
\text{\textbf{Annealed:}} \quad  p_{t,\beta}^{\text{anneal}}(x) &= \frac{1}{Z_t(\beta)}  q_t(x)^{\beta} \nonumber \\
\text{\textbf{Product:}}~~ \quad   p_{t}^{\text{prod}}(x) &= \frac{1}{Z_t} q_t^{1}(x) q_t^{2}(x) \label{eq:example} \\
\text{\textbf{Geometric Avg:}}~~~ \quad  p_{t,\beta}^{\text{geo}}(x) &= \frac{1}{Z_t(\beta)} q_t^{1}(x)^{1-\beta} q_t^{2}(x)^{\beta} .  \nonumber
\end{align}
\looseness=-1
A common heuristic for sampling from the distributions in the form of \cref{eq:example} is to simulate according to the score function of the target density.  For example, in classifier-free guidance \citep{ho2021classifier}  we use the score of the geometric average $\nabla \log p_{t,\beta}^{\text{geo}} = (1-\beta) \nabla \log q_t^1 + \beta \nabla \log q_t^2$ to simulate the following SDE
\begin{align}
dx_t = (-f_t(x_t) + \sigma_t^2  \nabla \log p^{\text{geo}}_{t,\beta}(x_t) )dt + \sigma_t dW_t\,. \label{eq:unscaled_scores}
\end{align}
However, despite the similarity to \cref{eq:denoising}, this heuristic does not sample from the prescribed marginals (including the final distribution), except in special cases.   
We proceed by using the $p_{t,\beta}^{\text{geo}}$ example to illustrate our approach.


\vheader 
\subsection{Outline of Our Approach}
\looseness=-1
To remedy this, we inspect the PDE corresponding to $p_{t,\beta}^{\text{geo}}$, which can be written in terms of the evolution of $q_t^1$ and $q_t^2$
\begin{align}
 \deriv{p^{\text{geo}}_{t,\beta}(x)}{t} = \deriv{}{t}\frac{1}{Z_t(\beta)}~q_t^1(x)^{(1-\beta)}q_t^2(x)^\beta. \label{eq:starting_PDE}
\end{align}
Expanding and using our expressions for the Fokker-Planck equation of $q_t^i$ in \labelcref{eq:q_fpe}, we proceed to locate terms corresponding to the simulation of an SDE with the drift 
$v_t(x_t) = -f_t(x_t) + \sigma_t^2  \nabla \log p_{t,\beta}^{\text{geo}}(x_t)$.
Collecting all remaining terms of PDE \labelcref{eq:starting_PDE} into weights $\bar{g}_t(x_t)$ we obtain the following Feynman-Kac PDE, which can be simulated using the weighted SDE in \cref{eq:weighted_sde}, along with the resampling schemes described in \cref{sec:smc}
\begin{align}
  \hspace*{-.22cm} 
  \deriv{p^{\text{geo}}_{t,\beta}}{t} = -\inner{\nabla}{p^{\text{geo}}_{t,\beta} ~ v_t} + \frac{\sigma_t^2}{2}\Delta p^{\text{geo}}_{t,\beta} + p^{\text{geo}}_{t,\beta} ~ \bar{g}_t\,.  \label{eq:fk_pde_geo} 
\end{align}

\looseness=-1
\paragraph{Conversion Rules}
To facilitate the construction of Feynman-Kac PDEs corresponding to existing simulation schemes, in \cref{tab:conversion_rules} we present the conversion rules that describe how the corresponding PDEs change for the annealed densities and the product of densities.
We use these rules as building blocks when deriving our practical schemes. 

\vheader
\looseness=-1
\paragraph{Computational Considerations} 
Our recipe above can yield many different weighted PDEs for a given sequence of target distributions.
In practice, we would like our simulation scheme to closely approximate the intermediate targets distributions to limit the need for correction.   On the other hand, for computational efficiency, we hope to obtain weights which avoid expensive divergence $\inner{\nabla}{v_t(x)}$ or
Laplacian terms $\inner{\nabla}{\nabla \log q_t^i(x_t)}$.
Remarkably, for linear drift functions $f_t(x)$ commonly used in diffusion models \citep{song2021scorebased}, we find that 
simulating according
to the common heuristic in \cref{eq:unscaled_scores} 
yields a Feynman-Kac PDE whose weights can be estimated with no additional overhead.   We focus on these schemes in our examples.

\begin{table*}[t]
\vspace*{-.15cm}
    \centering
    \resizebox{\textwidth}{!}{%
    \begin{tabular}{cc|ccc|c}
         Original FK-PDE & Original wSDE & Annealed PDE & Annealed SDE $dx_t=$ & FK Corrector $dw_t~\texttt{+=}$ & Proof  \\[1mm]
         \midrule
         \multirow{2}{*}{$-\inner{\nabla}{q_t v_t}$} & \multirow{2}{*}{$v_t(x_t)dt$} & $-\inner{\nabla}{p_{t,\beta} v_t}$ & $v_t(x_t)dt$ & $-(\beta-1)\inner{\nabla}{v_t}dt$ & \cref{prop:anneled_conteq} \\[2mm]
         & & $-\inner{\nabla}{p_{t,\beta} \beta v_t}$ & $\beta v_t(x_t)dt$ & $\beta(\beta-1)\inner{\nabla\log q_t}{v_t}dt$ & \cref{prop:anneled_conteq_scaled} \\[2mm]
         \multirow{2}{*}{$\frac{\sigma_t^2}{2}\Delta q_t$} & \multirow{2}{*}{$\sigma_t dW_t$} & $\frac{\sigma_t^2}{2}\Delta p_{t,\beta}$ & $\sigma_t dW_t$ & $-\beta(\beta-1) \frac{\sigma_t^2}{2}\norm{\nabla \log q_t}^2dt$ & \cref{prop:anneled_diffeq} \\[2mm]
         & & $\frac{\sigma_t^2}{2\beta}\Delta p_{t,\beta}$ & $\frac{\sigma_t}{\sqrt{\beta}} dW_t$ & $(\beta-1) \frac{\sigma_t^2}{2}\Delta \log q_tdt$ & \cref{prop:anneled_diffeq_scaled} \\[2mm]
         $g_tq_t$ & $dw_t = g_tdt$ & $\beta g_t p_{t,\beta}$ & --- & $\beta g_tdt$ & \cref{prop:anneled_reweighting} \\[2mm]
         --- & --- & \multicolumn{2}{c}{time-dependent annealing: $\beta \to \beta_t$} & $\deriv{\beta_t}{t}\log q_tdt$ & \cref{prop:time_beta}\\[2mm]
         \midrule
         Original FK-PDE & Original wSDE & Product PDE & Product SDE $dx_t=$ & FK Corrector $dw_t~\texttt{+=}$ &  \\[1mm]
         \midrule
         $-\inner{\nabla}{q_t v^{1,2}_t}$ & $v_t^{1,2}dt$ & $-\inner{\nabla}{p_{t} (v^1_t+v_t^2)}$ & $(v_t^1+v_t^2)dt$ & $(\inner{\nabla\log q^1_t}{v_t^2} + \inner{\nabla\log q^2_t}{v_t^1})dt$ & \cref{prop:product_conteq} \\[2mm]
         $\frac{\sigma_t^2}{2}\Delta q^{1,2}_t$ & $\sigma_t dW_t$ & $\frac{\sigma_t^2}{2}\Delta p_t$ & $\sigma_t dW_t$ & $-\sigma_t^2\inner{\nabla \log q_t^1}{\nabla \log q_t^2}dt$ & \cref{prop:product_diffusion} \\[2mm]
         $g^{1,2}_t q^{1,2}_t$ & $dw_t = g_t^{1,2}dt$ & $(g^{1}_t+g^{2}_t)p_t$ & --- & $(g_t^1+g_t^2)dt$ & \cref{prop:product_reweighting}
    \end{tabular}}
    \vspace*{-.1cm}
    \caption{Conversion rules for different terms of the original Feynman-Kac PDEs (FK-PDEs) and the corresponding weighted SDE (wSDE). For every term term corresponding to the original densities $q_t$ (first two columns), we present the terms corresponding to the annealed marginals $p_{t,\beta}(x) \propto q_t(x)^{\beta}$ (top part) and the terms corresponding to the product of marginals $p_t(x) \propto q_t^{1}(x)q_t^{2}(x)$ (bottom part). Importantly, the \textit{correctors are additive} in the weight space, e.g. when transforming the Fokker-Planck equation, we transform both the continuity $\&$ diffusion equation terms and sum the corresponding correctors.  References to proofs are provided in the right-most column.
    }
    \vspace{-4mm}
    \label{tab:conversion_rules}
\end{table*}

\vheader 
\subsection{Classifier-Free Guidance (CFG)}
\looseness=-1
CFG \citep{ho2021classifier} is a widely-used procedure that simulates an SDE combining the scores of conditional and unconditional models with a guidance weight $\beta$,

\begin{align*}
    \nabla \log p_{t,\beta}(x) = (1-\beta) \nabla \log q_t^1(x\cond \emptyset)+ \beta \nabla \log q_t^2(x\cond c)\label{eq:cfg_geo_mix_sim}
\end{align*}
In practice, $q_t^1(x|\emptyset)$ may represent an unconditional model (or a model with an empty prompt) whereas $q_t^2(x|c)$ is conditioned on a text prompt, class, or other random variables \citep{ho2021classifier}.   
Alternatively, in autoguidance techniques, $q_t^1$ may be an undertrained version of a stronger conditional or unconditional model $q_t^2$ \citep{karras2024guiding}.

\looseness=-1
For our purposes, we will view CFG as 
an attempt to sample from the geometric average distributions $p_{t,\beta}^{\text{geo}}(x) \propto q_t^1(x)^{1-\beta} q_t^2(x)^{\beta}$.
Using the conversion rules in \cref{tab:conversion_rules}, we derive the reweighting terms which facilitate consistent sampling along the trajectory.
\begin{halfmybox}
\begin{proposition}[Classifier-Free Guidance + FKC]
\label{prop:cfg}
    Consider two diffusion models $q_t^1(x), q_t^{2}(x)$ defined via \labelcref{eq:q_fpe}.
    The weighted SDE corresponding to the geometric average of the marginals $p_{t,\beta}^{\text{geo}}(x)\propto q_t^1(x)^{1-\beta}q_t^2(x)^\beta$ is
    \begin{align}
        dx_t =~& \sigma_t^2((1-\beta)\nabla \log q_t^{1}(x_t)+\beta\nabla \log q_t^{2}(x_t))dt \nonumber\\
        ~&-f_t(x_t)dt + \sigma_t dW_t\,,\\
        dw_t =~& \frac{\sigma_t^2}{2}\beta(\beta-1)\norm{\nabla \log q_t^1(x_t)-\nabla \log q_t^2(x_t)}^2dt\,. \nonumber 
    \end{align}
\end{proposition}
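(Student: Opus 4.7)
The plan is to derive the Feynman-Kac PDE governing $p_{t,\beta}^{\text{geo}}$ directly and identify the corrector by comparing it against the Fokker-Planck contribution of the proposed simulation SDE. Since $\log p_{t,\beta}^{\text{geo}} = (1-\beta)\log q_t^1 + \beta \log q_t^2 - \log Z_t(\beta)$, I would first write $\partial_t p_{t,\beta}^{\text{geo}} = p_{t,\beta}^{\text{geo}} \, \partial_t \log p_{t,\beta}^{\text{geo}}$ and express $\partial_t \log q_t^i$ via the Fokker-Planck equation \labelcref{eq:q_fpe}. Using $\Delta q_t^i / q_t^i = \Delta \log q_t^i + \norm{\nabla \log q_t^i}^2$ and $\inner{\nabla}{q_t^i v_t^i}/q_t^i = \inner{\nabla}{v_t^i} + \inner{\nabla \log q_t^i}{v_t^i}$ with $v_t^i = -f_t + \sigma_t^2 \nabla \log q_t^i$, a short calculation gives
\begin{align*}
\partial_t \log q_t^i = \inner{\nabla \log q_t^i}{f_t} + \inner{\nabla}{f_t} - \tfrac{\sigma_t^2}{2}\bigl(\norm{\nabla \log q_t^i}^2 + \Delta \log q_t^i\bigr).
\end{align*}

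Next I would compute the Fokker-Planck contribution $[-\inner{\nabla}{p_{t,\beta}^{\text{geo}} u_t} + \tfrac{\sigma_t^2}{2}\Delta p_{t,\beta}^{\text{geo}}]/p_{t,\beta}^{\text{geo}}$ for the proposed drift $u_t = -f_t + \sigma_t^2 \nabla \log p_{t,\beta}^{\text{geo}}$. This contribution has exactly the same functional form as $\partial_t \log q_t^i$ above but with $q_t^i$ replaced by $p_{t,\beta}^{\text{geo}}$, so the corrector $\bar g_t$ is obtained by subtracting the simulation contribution from $\partial_t \log p_{t,\beta}^{\text{geo}}$. All $f_t$ terms and all Laplacian terms cancel exactly because they are linear in $\log q_t^i$ and $(1-\beta)\Delta \log q_t^1 + \beta \Delta \log q_t^2 = \Delta \log p_{t,\beta}^{\text{geo}}$. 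Only the quadratic score-norm terms survive, leaving
\begin{align*}
\bar g_t(x) = \tfrac{\sigma_t^2}{2}\bigl[\,\norm{\nabla\log p_{t,\beta}^{\text{geo}}}^2 - (1-\beta)\norm{\nabla\log q_t^1}^2 - \beta\norm{\nabla\log q_t^2}^2\,\bigr] - \partial_t \log Z_t(\beta).
\end{align*}

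The final step is the algebraic identity $\norm{(1-\beta) a + \beta b}^2 - (1-\beta)\norm{a}^2 - \beta \norm{b}^2 = -\beta(1-\beta)\norm{a - b}^2$, applied with $a = \nabla \log q_t^1$ and $b = \nabla \log q_t^2$. This collapses the bracket to $-\beta(1-\beta)\norm{\nabla \log q_t^1 - \nabla \log q_t^2}^2$ and yields the claimed $g_t(x) = \tfrac{\sigma_t^2}{2}\beta(\beta - 1)\norm{\nabla \log q_t^1(x) - \nabla \log q_t^2(x)}^2$. The remaining $x$-independent constant $-\partial_t \log Z_t(\beta)$ is precisely the mean-correction relating $g_t$ to its centered version $\bar g_t$ in \labelcref{eq:reweighting}, so it drops out under self-normalization of the weights in \labelcref{eq:snis} and may be omitted from the SDE update stated in the proposition.

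The main obstacle is disciplined bookkeeping rather than conceptual difficulty: many $f_t$, Laplacian, and score-norm terms must be tracked and shown to cancel or combine cleanly. A complementary derivation routes the argument through \cref{tab:conversion_rules} by first applying the annealing rules to $q_t^1$ and $q_t^2$ with exponents $1-\beta$ and $\beta$ respectively and then the product rules; after the same squared-norm telescoping, one recovers the identical FK corrector.
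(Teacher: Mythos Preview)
Your proof is correct. The direct subtraction you perform—writing $\partial_t\log p_{t,\beta}^{\text{geo}}$ as the convex combination of the two $\partial_t\log q_t^i$, then subtracting the Fokker--Planck contribution of the proposed SDE and invoking the variance-style identity $\norm{(1-\beta)a+\beta b}^2-(1-\beta)\norm{a}^2-\beta\norm{b}^2=-\beta(1-\beta)\norm{a-b}^2$—yields the claimed corrector cleanly, and your treatment of the $-\partial_t\log Z_t(\beta)$ term as the mean correction is exactly right.

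The paper's own proof takes the more modular route you sketch at the end of your proposal: it first applies the annealing conversion rules (\cref{propapp:annealed_sde}) separately to build $p_{t,1-\beta}^1\propto (q_t^1)^{1-\beta}$ and $p_{t,\beta}^2\propto (q_t^2)^{\beta}$, shows that matching the diffusion coefficients forces $a_1=a_2=0$, then applies the product rules from \cref{tab:conversion_rules}, and finally reinterprets one copy of $\inner{\nabla}{p_{t,\beta}f_t}$ as a reweighting term. Your approach is shorter and more transparent for this particular target because the linear structure of $\log p_{t,\beta}^{\text{geo}}$ makes the $f_t$ and Laplacian cancellations immediate; the paper's route, by contrast, showcases the compositionality of the conversion-rule framework and explains \emph{why} no tempered-noise variant (analogous to \cref{eq:tempered_noise_SDE}) survives here, a point your direct argument does not surface.
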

\end{halfmybox}
In \cref{propapp:cfg}, we provide a more general formulation of this proposition outlining a continuous family of weighted SDEs sampling from the geometric average $p_{t,\beta}^{\text{geo}}(x)\propto q_t^1(x)^{1-\beta}q_t^2(x)^\beta$. As a further example, we combine CFG with a product of experts in \cref{propapp:poe_cfg}. 

\vheader
\looseness=-1
\subsection{Annealed Distribution}\label{sec:annealed}
\looseness=-1
Next, we consider a single diffusion model with the learned score $\nabla \log q_t(x)$, which we use to sample from the \textit{annealed} or \textit{tempered} density
\looseness=-1
\begin{align}
    p_{t,\beta}^{\text{anneal}}(x) = q_t(x)^{\beta}/Z_t(\beta)\,.
\end{align}
\looseness=-1
For $\beta > 1$, this can be used to generate samples from modes or high-probability regions of given models \citep{karczewski2024diffusion}, while in \cref{sec:samplers} we explore the use of annealed inference in learning diffusion samplers from Boltzmann densities.
The annealed target can be shown to admit the following Feynman-Kac weighted simulation scheme.
\looseness=-1
\begin{halfmybox}
\begin{proposition}[Annealed SDE + FKC]
\label{prop:annealed_sde}
Consider a diffusion model $q_t(x)$ defined via \labelcref{eq:q_fpe}.
Sampling from the annealed marginals $p^{\text{anneal}}_{t,\beta}(x) \propto q_t(x)^{\beta}\,, \beta > 0$ can be performed by simulating the following weighted SDE
\begin{align}
    dx_t =~& (-f_t(x_t) + \eta\sigma_t^2\nabla\log q_t(x_t))dt+ \zeta \sigma_tdW_t\,,\nonumber\\
    dw_t =~& (\beta-1)\left(\inner{\nabla}{f_t(x_t)} +\frac{\sigma_t^2}{2}\beta\norm{\nabla \log q_t(x_t)}^2\right)dt\,,\nonumber
\end{align}
with the coefficients (for $(\beta + (1-\beta)2a)/\beta \geq 0$)
\looseness=-1
\begin{align}
    \eta = \beta + (1-\beta)a\,,\;\;\zeta = \sqrt{(\beta + (1-\beta)2a)/\beta}\,.
\end{align}
\looseness=-1
\end{proposition}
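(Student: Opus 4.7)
The plan is to compute the PDE satisfied by the annealed marginals $p_{t,\beta}(x) = q_t(x)^\beta/Z_t(\beta)$ and match it against the Fokker--Planck equation of the candidate SDE plus a reweighting term, thereby identifying a Feynman--Kac PDE \labelcref{eq:fk_pde} whose $\bar g_t$ yields the claimed $dw_t$. First I would differentiate $p_{t,\beta}$ in time and substitute the equivalent simplified form of the denoising Fokker--Planck equation, $\deriv{q_t}{t} = \inner{\nabla}{q_tf_t} - \tfrac{\sigma_t^2}{2}\Delta q_t$, into $\deriv{p_{t,\beta}}{t} = \beta q_t^{\beta-1}\deriv{q_t}{t}/Z_t(\beta) - p_{t,\beta}\deriv{Z_t}{t}/Z_t$. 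Using the two pointwise identities
\begin{align*}
\beta q_t^{\beta-1}\inner{\nabla}{q_tf_t} &= \inner{\nabla}{q_t^\beta f_t} + (\beta-1)q_t^\beta\inner{\nabla}{f_t},\\
\beta q_t^{\beta-1}\Delta q_t &= \Delta q_t^\beta - \beta(\beta-1)q_t^\beta\|\nabla\log q_t\|^2,
\end{align*}
I would obtain, after dividing by $Z_t(\beta)$,
\begin{align*}
\deriv{p_{t,\beta}}{t} = \inner{\nabla}{p_{t,\beta}f_t} - \tfrac{\sigma_t^2}{2}\Delta p_{t,\beta} + (\beta-1)p_{t,\beta}\Big(\inner{\nabla}{f_t} + \tfrac{\sigma_t^2}{2}\beta\|\nabla\log q_t\|^2\Big) - p_{t,\beta}\tfrac{\deriv{Z_t}{t}}{Z_t},
\end{align*}
where the last (spatially constant) term will be absorbed by the mean-centering of $\bar g_t$ in \cref{eq:reweighting}.

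Next, I would write down the Fokker--Planck RHS of the proposed SDE with drift $-f_t+\eta\sigma_t^2\nabla\log q_t$ and diffusion $\zeta\sigma_t$ evaluated on $p_{t,\beta}$. Using $\nabla\log p_{t,\beta} = \beta\nabla\log q_t$, which gives $\inner{\nabla}{p_{t,\beta}\nabla\log q_t} = \tfrac{1}{\beta}\Delta p_{t,\beta}$, this RHS collapses to $\inner{\nabla}{p_{t,\beta}f_t} + \sigma_t^2\big(\tfrac{\zeta^2}{2}-\tfrac{\eta}{\beta}\big)\Delta p_{t,\beta}$. Subtracting from the annealed PDE leaves a residual $\Delta p_{t,\beta}$ coefficient of $-\sigma_t^2\big(\tfrac{1}{2}+\tfrac{\zeta^2}{2}-\tfrac{\eta}{\beta}\big)$, which I would force to vanish by imposing $2\eta = \beta(1+\zeta^2)$; this is precisely the constraint needed to keep the weights free of the expensive Laplacian $\Delta\log q_t$. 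A direct plug-in check confirms that $\eta = \beta + (1-\beta)a$ and $\zeta^2 = (\beta+(1-\beta)2a)/\beta$ satisfy this identity for every $a$, and the range $a\in[0,1/2]$ is what keeps $\zeta^2 \ge 0$.

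With the Laplacian term eliminated, the remaining discrepancy is purely multiplicative in $p_{t,\beta}$: one reads off $g_t(x)=(\beta-1)\big(\inner{\nabla}{f_t(x)}+\tfrac{\sigma_t^2}{2}\beta\|\nabla\log q_t(x)\|^2\big)$, and the subtracted $\deriv{Z_t}{t}/Z_t$ piece is exactly $\mathbb{E}_{p_{t,\beta}}[g_t]$, so the mismatch equals $\bar g_t\,p_{t,\beta}$ as required. Invoking \cref{eq:fk_pde,eq:weighted_sde} then establishes that the stated weighted SDE samples from $p_{t,\beta}^{\text{anneal}}$. The main obstacle is the bookkeeping of signs and the clean identification of exactly which combination of $(\eta,\zeta)$ makes the residual $\Delta p_{t,\beta}$ coefficient vanish; conceptually, the one-parameter family indexed by $a$ is just the ``Continuity $\leftrightarrow$ Reweighting'' and ``Diffusion $\leftrightarrow$ Reweighting'' conversion rules of \cref{tab:conversion_rules} applied simultaneously, interpolating between routing the score annealing through pure drift ($a=0$) and partially through a rescaled diffusion ($a=1/2$).
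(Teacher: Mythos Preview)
Your argument is correct. You compute $\partial_t p_{t,\beta}$ directly from $\partial_t q_t = \inner{\nabla}{q_tf_t}-\tfrac{\sigma_t^2}{2}\Delta q_t$, write the Fokker--Planck operator of the candidate SDE on $p_{t,\beta}$, and read off both the Laplacian-cancellation constraint $2\eta=\beta(1+\zeta^2)$ and the weight $g_t$ in one pass. The paper instead proceeds modularly: it splits the score term as $(1-a)+a$ and the diffusion as $(1-b)+b$, applies the four annealed conversion rules of \cref{tab:conversion_rules} (Props.\ C.1--C.4) to each piece, and then cancels the two Laplacian contributions by imposing $2a=1-b$. Your route is more self-contained and makes the single relation $2\eta=\beta(1+\zeta^2)$ transparent; the paper's route exhibits the result as an assembly of reusable building blocks, which is what they need for the later product and CFG propositions.

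One small remark: your justification of $a\in[0,1/2]$ via $\zeta^2\ge 0$ is only a sufficient condition (for $\beta>1$ the actual positivity bound is $a\le \beta/(2(\beta-1))>1/2$). In the paper the range comes from requiring the auxiliary splitting coefficient $b=1-2a$ to lie in $[0,1]$. Either way the stated interval is valid, just be aware that ``$\zeta^2\ge 0$'' is not the tight source of the constraint.
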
    
\end{halfmybox}
 See \cref{propapp:annealed_sde} for proof, and note that linear drifts $f_t(x)$ will lead to constant divergence terms which cancel upon reweighting in \labelcref{eq:snis0,eq:snis}. We detail two choices of $a$.

\vheader
\looseness=-1
\paragraph{Target Score Simulation}
For $a=0$, we have $\eta = \beta$ and $\zeta = 1$, which yields the \textit{target score} SDE whose drift corresponds to the score of the annealed target,
\begin{align}
    dx_t =~& (-f_t(x_t) + \beta\sigma_t^2\nabla\log q_t(x_t))dt+  \sigma_tdW_t\,.
    \label{eq:target_score_SDE}
\end{align}

\vheader
\looseness=-1
\paragraph{Tempered Noise Simulation}  For $a=1/2$, we have $\eta = (1+\beta)/2, \zeta =1/\sqrt{\beta}$).   We refer to this as an SDE with \textit{tempered noise}, namely
\small
\begin{align}
    dx_t =~& (-f_t(x_t) + \frac{\beta+1}{2}\sigma_t^2\nabla\log q_t(x_t))dt+ \frac{\sigma_t}{\sqrt{\beta}}dW_t\,.
    \label{eq:tempered_noise_SDE}
\end{align}
\normalsize
We focus on these two choices of $a$, but note that for different $\beta$, we found that either target score or tempered-noise simulation could perform better in practice (\cref{sec:experiments}).


\vheader
\subsection{Product of Experts (PoE)}\label{sec:product}
\looseness=-1
Intuitively, samples from the product of densities correspond to the generations that have high likelihood values under \textit{both} models.  The product can also be interpreted as unanimous vote of experts, since a sample is not accepted if one of the densities is zero.
Formally, consider the density 
\begin{align}
    p_{t}^{\text{prod}}(x) = q_t^{1}(x) q_t^{2}(x)/Z_t\,. \label{eq:prod_mix}
\end{align}
For conditional generative models, the product of densities can describe samples satisfying several conditions.  For example, in image generation, we could use $q(x\cond ``\texttt{horse}")q(x\cond ``\texttt{a sandy beach}")$ to generate images of ``a horse on a sandy beach'' \citep{du2023reduce}.
In \cref{sec:molecules}, we demonstrate that the PoE target can be used to improve molecule generations which satisfy multiple conditions simultaneously.

\looseness=-1
Again, a natural heuristic is to use the 
score of the target product density in the reverse-time SDE \labelcref{eq:denoising},
\begin{align}
    \nabla \log p_{t}^{\text{prod}}(x) = \nabla \log q_t^{1}(x_t) + \nabla \log q_t^{2}(x_t)\,,
\end{align}
In the following proposition, we further combine these rules with the annealing procedure to present the weighted SDE that samples from the marginals $p_{t,\beta}^{\text{prod}}(x) \propto (q_t^1(x)q_t^2(x))^\beta$.
\begin{halfmybox}
\begin{proposition}[Product of Experts + FKC]
\label{prop:poe}
    Consider two diffusion models $q_t^1(x), q_t^{2}(x)$ defined via \labelcref{eq:q_fpe}.
    The weighted SDE corresponding to the product of the marginals $p_{t,\beta}^{\text{prod}}(x)\propto (q_t^1(x)q_t^2(x))^\beta\,,$ with $\beta > 0$ is
    \begin{align}
        dx_t =~& \sigma_t^2\eta\left(\nabla \log q_t^{1}(x_t)+\nabla \log q_t^{2}(x_t)\right)dt  \nonumber\\
        ~&-f_t(x_t)dt + \zeta\sigma_t dW_t\,,\\
        dw_t =~& \beta(\beta-1)\frac{\sigma_t^2}{2}\norm{\nabla \log q_t^1(x_t)+ \nabla \log q_t^2(x_t)}^2dt  \nonumber\\
        ~&+\beta\sigma_t^2\inner{\nabla\log q_t^{1}(x_t)}{\nabla\log q_t^{2}(x_t)}dt \nonumber\\
        ~&+(2\beta-1)\inner{\nabla}{f_t(x_t)}dt\,,
        \label{eq:poe_weights}
    \end{align}
    with the coefficients (for $(\beta + (1-\beta)2a)/\beta \geq 0$)
    \begin{align}
        \eta = \beta + (1-\beta)a\,,\;\;\zeta = \sqrt{(\beta + (1-\beta)2a)/\beta}\,.
    \end{align}
\end{proposition}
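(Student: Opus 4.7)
The plan is to derive the Feynman-Kac PDE for the annealed product density $p_{t,\beta}^{\text{prod}}(x) \propto (q_t^1(x) q_t^2(x))^\beta$ and match the resulting weighted SDE to the stated form. The approach is compositional: I would apply the conversion rules of \cref{tab:conversion_rules} in two stages, first viewing $p_{t,\beta}^{\text{prod}}$ as a $\beta$-annealing of the product of experts $q_t^1 q_t^2/Z_t$.

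In the first stage, I would apply the bottom part of \cref{tab:conversion_rules} to build an intermediate Feynman-Kac PDE for the product $q_t^1 q_t^2/Z_t$, starting from the two Fokker-Planck equations in \cref{eq:q_fpe}. This yields an SDE with drift $v_t^1 + v_t^2 = -2f_t + \sigma_t^2(\nabla\log q_t^1 + \nabla\log q_t^2)$, diffusion $\sigma_t dW_t$, and weights containing $-\sigma_t^2\inner{\nabla\log q_t^1}{\nabla\log q_t^2}$ from the product-diffusion rule together with the continuity cross-inner-products $\inner{\nabla\log q_t^1}{v_t^2} + \inner{\nabla\log q_t^2}{v_t^1}$. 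I would then apply the continuity-to-reweighting conversion \cref{eq:continuity_as_rw} to move one copy of the $-f_t$ drift into the weights, so that the simulated drift has only a single $-f_t$ as in the proposition; this contributes an additional $\inner{\nabla}{f_t}$ term to the weights.

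In the second stage, I would apply the top part of \cref{tab:conversion_rules} to raise the target to the $\beta$-th power. The stated one-parameter family arises from convex combinations of the variants of each rule: weight $a$ on row~1 (drift $v_t$, corrector $-(\beta-1)\inner{\nabla}{v_t}$) and $(1-a)$ on row~2 (drift $\beta v_t$, corrector $\beta(\beta-1)\inner{\nabla\log q_t}{v_t}$) gives the effective drift coefficient $\eta = \beta + (1-\beta)a$; weight $(1-2a)$ on row~3 and $2a$ on row~4 of the diffusion variants gives $\zeta^2 = (1-2a) + 2a/\beta = (\beta + (1-\beta)\cdot 2a)/\beta$, since diffusion variances combine linearly. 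Substituting, summing the correctors from both stages, and converting any residual Laplacians via $\Delta q = q(\Delta\log q + \norm{\nabla\log q}^2)$ should reproduce the stated $dw_t$ up to an $x$-independent constant that is absorbed by self-normalization \cref{eq:snis}.

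The main obstacle is the algebraic bookkeeping in the second stage. Specifically, the expensive Laplacian terms $\sigma_t^2 \Delta\log q_t^i$ that emerge from row~1's divergence-of-drift corrector (since $\inner{\nabla}{v_t^i} = -\inner{\nabla}{f_t} + \sigma_t^2\Delta\log q_t^i$) must cancel exactly against the $\Delta\log q_t^i$ terms contributed by row~4 for every $a\in[0,1/2]$. The choice of weight $a$ for the drift and $2a$ for the diffusion is precisely what makes these two Laplacian contributions (each linear in $a$, with opposite signs after collecting the overall $(\beta-1)$) cancel identically, leaving only the tractable $\norm{\nabla\log q_t^1 + \nabla\log q_t^2}^2$, $\inner{\nabla\log q_t^1}{\nabla\log q_t^2}$, and $\inner{\nabla}{f_t}$ terms displayed in \cref{eq:poe_weights}. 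The range $a\in[0,1/2]$ is needed both for this cancellation (it corresponds to convex weights $2a\in[0,1]$) and for reality of $\zeta$.
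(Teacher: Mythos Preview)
Your plan is essentially the paper's: take the product via the lower block of \cref{tab:conversion_rules}, then $\beta$-anneal with the one-parameter convex family of continuity/diffusion rules, absorbing the spare copy of $-f_t$ via the continuity-to-reweighting identity; the paper merely swaps the last two steps, carrying $-2f_t$ through the annealing with row~1 alone and peeling one copy off at the very end. One caution in your ordering: the $(a,1{-}a)$ split must be applied only to the score portion $\sigma_t^2(\nabla\log q_t^1+\nabla\log q_t^2)$ while the surviving $-f_t$ is handled by row~1 alone---applying the convex split to the whole drift would scale it to $-\eta f_t$, and your claim that $a\in[0,1/2]$ is needed for the Laplacian cancellation is slightly off (the cancellation is identically algebraic; the range is only required for $\zeta$ to be real and the diffusion split to be convex).
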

\end{halfmybox}
\vspace*{-.1cm}
See proof in \cref{propapp:poe}.
{Again, note that for linear drifts, the divergence term $\inner{\nabla}{f_t(x)}$ is constant and can be ignored.}  Further, for $\beta = 1$, the first term in the weight evolution vanishes to leave only the inner product of score vectors.
Similarly to \cref{eq:target_score_SDE,eq:tempered_noise_SDE} for annealing, we have the \textit{target score} SDE ($a=0,\eta=\beta,\zeta=1$) and the \textit{tempered noise} SDE ($a=1/2,\eta=(\beta+1)/2,\zeta=1/\sqrt{\beta}$). 

More generally, we derive the weighted SDE that samples from $p_{t,\beta}(x)\propto \prod_i q_t^i(x)^{\beta_i}\,,$ i.e. the weighted product of marginal densities $q_t^i(x)$ for arbitrary number of diffusion models (see \cref{propapp:target_score_poe}).

\subsection{Reward-tilted Target Density}

Finally, our framework can easily incorporate a reward function $r(x)$ defined on the state-space at inference time. Namely, we assume that the function $\exp(\beta_t r(x))$ is normalizable and consider the reward-tilted density $p^{\text{reward}}_t(x) \propto q_t(x)\exp(\beta_t r(x))$. Despite its similarity to the product of densities, this case is different as we do not assume $\exp(\beta_t r(x))$ changes according to the diffusion process.
\begin{halfmybox}
\begin{proposition}[Reward-tilted Target + FKC]
\label{prop:reward-tilted}
    Consider a diffusion model $q_t(x)$ defined via \labelcref{eq:q_fpe}. Sampling from the reward-tilted marginals $p^{\text{reward}}_t(x) \propto q_t(x)\exp(\beta_t r(x))$ is performed by the following weighted SDE
    \begin{align}
  \begin{split}  dx_t =~& \sigma_t^2 (\nabla \log q_t(x_t) + \frac{\beta_t}{2}\nabla r(x_t))dt - \\
    ~&-f_t(x_t)dt + \sigma_t dW_t\,,
    \end{split} \\
     \begin{split}
    dw_t =~& \deriv{\beta_t}{t}r(x_t)dt - \inner{\beta_t\nabla r(x_t)}{f_t(x_t)}dt + \\
    ~& + \inner{\beta_t\nabla r(x_t)}{\frac{\sigma_t^2}{2}\nabla\log q_t(x_t)}dt\,.
    \end{split}
    \end{align}
\end{proposition}
\end{halfmybox}
See proof in \cref{propapp:reward_tilted}. Here, the weights increase when the vector field of the diffusion models aligns with the gradient of the reward function.

\vheader
\section{Resampling Methods}\label{sec:smc}
In this section, we describe several options for utilizing the weights to improve sampling with a batch of $K$ particles.   While the simplest technique would be to simulate the weighted SDE in \cref{eq:weighted_sde} for $K$ independent particles across the full time interval $t\in[0,1]$ and reweight using \gls{SNIS} in \labelcref{eq:snis}, we expect these full-trajectory weights to have high variance in practice due to error accumulation.


\vheader 
\looseness=-1
\paragraph{Sequential Monte Carlo} Since our weights provide a proper weighting scheme for all intermediate distributions (\citep{naesseth2019elements}, \cref{app:pf_expectations}), we can leverage SMC techniques which reweight particles along our trajectories. 
\looseness=-1

In practice, we find that resampling only over an `active interval' $t \in [t_{\text{min}}, t_{\text{max}}]$ is useful for improving sample quality and preserving diversity, and set weights to zero outside of this interval.
Within the active interval, we resample at each step based on the increment $w_t^{(k)} = g_t(x_t^{(k)}) dt$, using systematic sampling proportional to $\exp\{w_{t}^{(k)} \}$ \citep{douc2005comparison}. For small discretizations $dt$, we might expect relatively low-variance weights.  From this perspective, systematic resampling is an attractive selection mechanism as all particles are preserved in the case of uniform weights.



\begin{figure*}[t]
    \centering
     \vspace*{-.2cm}
    \includegraphics[width=1\linewidth]{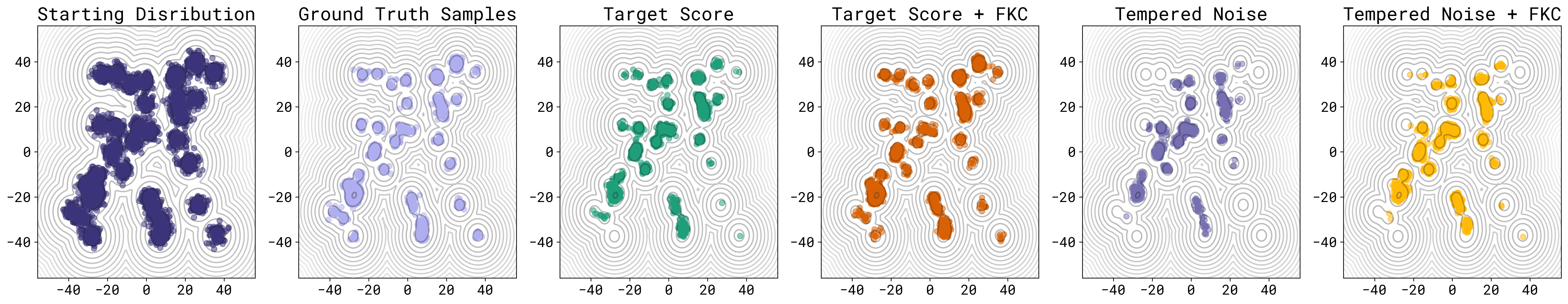}
    \vspace*{-.85cm}
    \caption{Samples from Mixture of 40 Gaussians.}
    \label{fig:gmm_samples_plot_main}
    \vspace{-5mm}
\end{figure*}

\vheader
\paragraph{Jump Process Interpretation of Reweighting}
Finally, by 
reframing the reweighting equation in terms of a Markov jump process (\citet[Ch. 4.2]{ethier2009markov}), a variety of further simulation algorithms for Feynman-Kac PDEs are possible (\citet[Ch. 1.2.2, 5]{del2013mean};  \citet{rousset2006equilibrium}; \citet{angeli2020interacting}).     

\looseness=-1
A Markov jump process is determined by a rate function $\lambda_t(x)$, which governs the frequency of jump events, and a Markov transition kernel $\transition(y|x)$, which is used to sample the next state when a jump occurs.  
The forward Kolmogorov equation for a jump process is given by
\begin{align}
    \deriv{p_t^{\text{jump}}(x)}{t} = \left(\int \lambda_t(y) \transition(x|y)  \meas_t(y) dy \right) -   \meas_t(x) \lambda_t(x)    \nonumber
\end{align}
where the two terms can intuitively be seen to measure the inflow and outflow of probability due to jumps.

\looseness=-1
Our goal is to find $\lambda_t(x), \transition(y|x)$ such that 
$p_t^{\text{jump}}$ matches the evolution of $p_t^w$ in \cref{eq:reweighting} for a given choice of $g_t$.
In fact, there are many possible jump processes which satisfy this property (\citet[Ch. 5]{del2013mean};  \citet{angeli2019rare}) 
We present a particular choice here, with proof in \cref{app:reweighting_as_jump}.
\begin{halfmybox}
\begin{proposition}
    For a given $g_t$ in \cref{eq:reweighting}, define the jump process rate and transition as
    \begin{subequations}
    \begin{align}
        \lambda_t(x) &= \big( g_t(x) - \mathbb{E}_{\meas_t}[g_t] \big)^- \\
    \transition(y|x) &= \frac{\big( g_t(y) - \mathbb{E}_{\meas_t}[g_t] \big)^+ \meas_t(y)}{\int \big( g_t(z) - \mathbb{E}_{\meas_t}[g_t] \big)^+ \meas_t(z) dz} 
    \end{align}
     \end{subequations}
    where $(u)^- \coloneqq \text{max}(0,-u)$ and $(u)^+\coloneqq \text{max}(0,u)$.  Then, 
    \begin{align}
        \deriv{p_t^{\text{jump}}(x)}{t} = \deriv{p_t^{\text{w}}(x)}{t} = p_t(x) \big( g_t(x) - \mathbb{E}_{\meas_t}[g_t]  \big)
    \end{align}
    which matches \cref{eq:reweighting}.
\end{proposition}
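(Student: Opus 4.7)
The plan is to verify that when we substitute the proposed rate $\lambda_t$ and transition kernel $\transition$ into the forward Kolmogorov equation for the jump process, the resulting expression reduces to $p_t(x)\bar{g}_t(x)$, matching \cref{eq:reweighting}. The core algebraic identity I will exploit is the following: for the mean-zero function $u_t(x) \coloneqq g_t(x) - \mathbb{E}_{\meas_t}[g_t]$, we have $\mathbb{E}_{\meas_t}[u_t^+] = \mathbb{E}_{\meas_t}[u_t^-]$, since the positive and negative parts of any mean-zero random variable integrate to the same value. Everything else follows by direct substitution.

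First, I would substitute $\lambda_t$ and $\transition$ into the jump Kolmogorov equation and separate it into an inflow and outflow term. The outflow term is immediate: $\meas_t(x)\lambda_t(x) = \meas_t(x) u_t(x)^-$. For the inflow term, the transition kernel $\transition(x|y)$ has no $y$-dependence in its numerator, so the integral factors as
\begin{align*}
\int \lambda_t(y)\transition(x|y)\meas_t(y)\,dy = u_t(x)^+ \meas_t(x) \cdot \frac{\int u_t(y)^-\meas_t(y)\,dy}{\int u_t(z)^+\meas_t(z)\,dz}.
\end{align*}
Here I use the mean-zero identity to conclude that the ratio equals $1$, so the inflow simplifies to $\meas_t(x) u_t(x)^+$.

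Subtracting outflow from inflow gives $\meas_t(x)(u_t(x)^+ - u_t(x)^-) = \meas_t(x) u_t(x)$, which is exactly $p_t(x)\bar{g}_t(x)$, matching the right-hand side of \cref{eq:reweighting}. A minor bookkeeping step is to note that $\meas_t$ in the jump Kolmogorov equation and $p_t^w$ in the reweighting PDE refer to the same density at time $t$ (both governed by the same evolution equation), so the proof is a pointwise identity of the right-hand sides that propagates to an identity of the solutions given matching initial conditions.

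The main obstacle, and really the only nontrivial step, is the mean-zero identity $\mathbb{E}[u^+] = \mathbb{E}[u^-]$; I would either state this as a one-line lemma (from $\mathbb{E}[u] = \mathbb{E}[u^+] - \mathbb{E}[u^-] = 0$) or absorb it directly into the computation. I would also briefly remark on well-posedness --- the rate $\lambda_t$ is bounded whenever $g_t$ is bounded, and the transition kernel is well-defined provided $\int u_t^+\meas_t\,dz > 0$, i.e., $g_t$ is not $\meas_t$-a.s.\ constant (in the degenerate constant case, $\bar{g}_t \equiv 0$ and both sides of the claimed identity vanish trivially, so there is nothing to prove).
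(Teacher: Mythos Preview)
Your proposal is correct and follows essentially the same approach as the paper's proof: substitute the given $\lambda_t$ and $\transition$ into the jump Kolmogorov equation, factor the inflow integral, invoke the mean-zero identity $\mathbb{E}_{\meas_t}[u_t^+]=\mathbb{E}_{\meas_t}[u_t^-]$ to cancel the ratio, and conclude via $(u)^+-(u)^-=u$, with the degenerate constant-$g_t$ case handled separately. Your treatment is slightly cleaner notationally (introducing $u_t$) and your explicit remark on well-posedness and on identifying $\meas_t$ with $p_t^w$ is a small expository addition, but there is no substantive difference in strategy.
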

\end{halfmybox}
\vspace*{-.2cm}
In continuous time and the mean-field limit, this jump process formulation of reweighting corresponds to simulating
\begin{align}
    \hspace*{-.22cm} x_{t+dt} = \begin{cases} x_t  \quad  &\text{w.p.~~} 1- \lambda_t(x_t) dt + o(dt)\phantom{.} \\
    \sim \transition(y|x_t)
    ~~ 
    &\text{w.p.~~} \lambda_t(x_t) dt  + o(dt). 
    \end{cases} 
    \label{eq:jump_simulation} 
\end{align}
\normalsize
We expect this process to improve the sample population in efficient fashion, since jump events are triggered only in states where $( g_t(x) - \mathbb{E}_{\meas_t}[g_t])^{-} \geq 0 \implies g_t(x) \leq  \mathbb{E}_{\meas_t}[g_t]$, and transitions are more likely to jump to states with high excess weight $( g_t(y) - \mathbb{E}_{\meas_t}[g_t])^{+} > 0$.

\looseness=-1
In practice, we use an empirical approximation $\meas_t^K(z) = \frac{1}{K}\sum_{k=1}^K \delta_{z}(x^{(k)})$ to approximate the jump rate $\lambda_t(x)$ and transition $\transition(y|x)$.
Instead of simulating \cref{eq:jump_simulation} directly, one can also adopt an implementation based on birth-death `exponential clocks' (BDC, \citet[Ch. 5.3-4]{del2013mean}, see \cref{app:simulation}).

\vheader 
\section{Empirical Study}\label{sec:experiments}
In this section, we compare our Feynman-Kac corrector (FKC) resampling schemes against their corresponding SDEs without resampling.   We consider both target score and tempered noise SDEs.   While we show results for BDC sampling in \cref{app:gmm_results} \cref{tab:gmm_results}, we proceed with systematic resampling throughout the remainder of our experiments.

\begin{figure}
    \centering
    \includegraphics[width=0.98\linewidth]{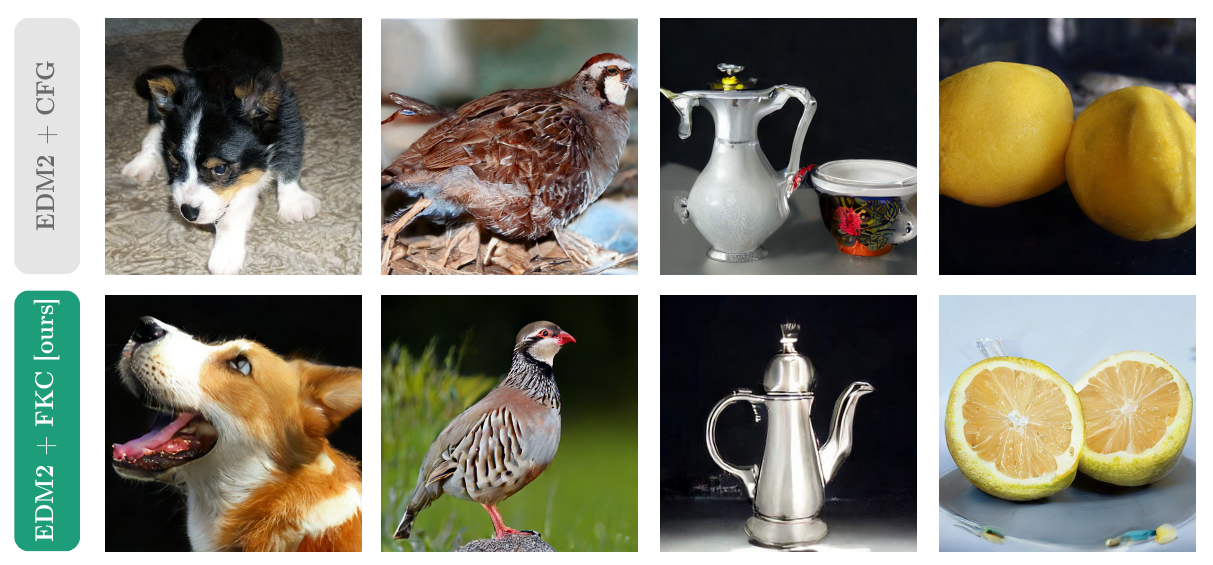}
    \vspace*{-.35cm}
    \caption{Samples from EDM2+CFG (top), EDM2+FKC (bottom).}
    \vspace{-5mm}
    \label{fig:sdxl_maintext}
\end{figure}

\vheader
\subsection{Image Generation with EDM2}
\label{sec:edm2}

In this section, we study the effect of FKC resampling for image generation in RGB pixel space, using CFG with an EDM2-XS model trained on ImageNet-512 \citep{karras2024analyzing}. In particular, we test whether resampling to more closely match the intermediate geometric average distributions translates to improvement in two downstream image quality metrics: CLIP Score \citep{radford2021learning} and ImageReward \citep{xu2024imagereward}. 
CLIP Score measures the cosine similarity between the image and text prompt embeddings; ImageReward assigns a score that reflects human preferences (aesthetic quality and prompt adherence).

For a fixed simulation scheme, we compare the effect of adding FKC resampling ($\yes$) versus the standard baseline without resampling ($\no$).
We report results across various simulation parameters, namely the number of sampling steps $N$ and churn parameter $\gamma$ ({which controls how the SDE integration scheme adds noise}).   For FKC, we additionally sweep over the batch size or number of particles $K$, whereas $K=1$ corresponds to the no-resampling baseline ($\no$).
To calculate metrics on a single image for FKC, we resample from among $K$ particles according to the weights since the last resampling step.
Note that we often observe that final-step images from a single batch with FKC are nearly identical visually due to 
weight degeneracy. 

In \cref{tab:edm2_results}, we compare the quantitative performance of our \textsc{FKC} resampling against vanilla CFG. We find that
adding FKC ($\yes$) improves performance in both ImageReward and CLIP score, indicating both higher prompt adherence and aesthetically better images.    While this comes at the cost of extra computation due to $K>1$, we find that FKC demonstrates benefits even for $K=2$, with $K=8$ performing the best (\cref{tab:edm2_results_batchsize}).  Qualitative results in \cref{fig:sdxl_maintext} further support the finding that FKC can improve image quality. 

 In \cref{app:sdxl}, we provide an additional analysis on using FKC with latent diffusion image models. 

\begin{table}[t]
    \vspace{-2mm}
    \centering
    \captionof{table}{Comparison of EDM2+FKC ($\yes$) with EDM2+CFG ($\no$) for image generation using EDM2. We sweep over noise level ($\gamma$) and steps (\texttt{N}). For all metrics, we report CLIP and ImageReward (IR) scores averaged over 10,000 images.
    }
    \resizebox{1\linewidth}{!}{%
    \begin{tabular}{ccccr|ccccr}
    \toprule
    FKC &$\gamma$ & \texttt{N} & CLIP ($\uparrow$) & IR ($\uparrow$) & FKC &$\gamma$ & \texttt{N} & CLIP ($\uparrow$) & IR ($\uparrow$)\\
    \midrule
    \no & \cellcolor{highlightturq} $10$ & $32$ & $28.74$ & $-0.25$ &
    \no & $40$ &  \cellcolor{highlightorange} 16  & $28.67$ & $-0.30$ \\
    \yes & \cellcolor{highlightturq} $10$ & $32$ & $28.97$ & $0.03$ &
    \yes & $40$ & \cellcolor{highlightorange} $16$  & $29.12$ & $-0.01$ \\
    \no & \cellcolor{highlightturq} $40$ & $32$ & $28.75$ & $-0.24$ &
    \no & $40$ &  \cellcolor{highlightorange} $32$  & $28.75$ & $-0.24$ \\
    \yes & \cellcolor{highlightturq} $40$ & $32$ & $\mathbf{29.00}$ & $\mathbf{0.04}$ &
    \yes & $40$ & \cellcolor{highlightorange} $32$  & $\mathbf{29.14}$ & $0.05$ \\
    \no & \cellcolor{highlightturq} $80$ & $32$ & $28.75$ & $-0.24$ &
    \no & $40$ &  \cellcolor{highlightorange} $64$  & $28.81$ & $-0.19$ \\
    \yes & \cellcolor{highlightturq} $80$ & $32$ & $28.99$ & $\mathbf{0.04}$ &
    \yes & $40$ & \cellcolor{highlightorange} $64$  & $29.12$ & $\mathbf{0.07}$ \\
\bottomrule
    \end{tabular}%
   }
    \vspace{-5mm}
    \label{tab:edm2_results}
\end{table}
\subsection{Samplers from the Boltzmann Density}
\label{sec:samplers}
As described in the \cref{sec:intro}, our FKC inference techniques suggest flexible schemes for learning diffusion samplers at a given temperature and sampling according to a different temperature.  
Since we are given an energy function in these settings, we are not restricted to learning with temperature 1 for for our base model $q_t$.   Thus, we use $(\ltemp, \stemp)$ to refer to the learning ($q_t$) and sampling target ($p_{t,\beta}$) distributions, with $\beta = \ltemp / \stemp$ in the notation of \cref{sec:annealed}.

\begin{figure*}
    \centering
    \begin{minipage}{0.49\textwidth}
        \centering
        \captionof{table}{LJ-13 sampling task with various SDEs, with performance measured by mean $\pm$ standard deviation over 3 seeds. The starting temperature is $T_L=2$, annealed to target temperatures $T_S=0.8$ and $T_S=1.5$. The DEM samples are generated with a model trained at those corresponding target temperatures.} 
        \vspace*{.1cm}
        \resizebox{\linewidth}{!}{%
        \begin{tabular}{llcccc}
        \toprule
        Target Temp. & SDE Type & FKC & Distance-$\mathcal{W}_2$ & Energy-$\mathcal{W}_1$ & Energy-$\mathcal{W}_2$ \\
        \midrule
        \multirow[t]{6}{*}{\makecell[c]{ \\ \\ \\ \\ 0.8 \\ $(\beta=2.5$)}} 
            & \multirow{2}{*}{Target Score} & \no   & $0.189 \pm 0.002$ & $14.730 \pm 0.029$ & $15.556 \pm 0.045$ \\
            &  & \yes                               & $\mathbf{0.048 \pm 0.019}$ & $\mathbf{6.252 \pm 2.710}$ & $\mathbf{6.356 \pm 2.673}$ \\
        \arrayrulecolor{gray}\cline{2-6}
            & \multirow{2}{*}{Tempered Noise} & \no & $0.108 \pm 0.007$ & $\mathbf{6.487 \pm 0.056}$ & $8.501 \pm 0.283$ \\
            &  & \yes                               & $\mathbf{0.047 \pm 0.006}$ & $7.016 \pm 0.538$ & $7.111 \pm 0.535$ \\
        \arrayrulecolor{gray}\cline{2-6}
            & DEM & ---                             & $0.103 \pm 0.001$ & $9.794 \pm 0.100$ & $9.804 \pm 0.101$ \\
        \arrayrulecolor{black} \midrule
        \multirow[t]{6}{*}{\makecell[c]{ \\ \\ \\ \\ 1.5 \\ $(\beta=1.33$)}} 
            & \multirow{2}{*}{Target Score} & \no   & $0.168 \pm 0.009$ & $5.340 \pm 0.054$ & $6.210 \pm 0.254$ \\
            &  & \yes                               & $0.083 \pm 0.003$ & $3.366 \pm 0.083$ & $3.386 \pm 0.090$ \\
        \arrayrulecolor{gray}\cline{2-6}
            & \multirow{2}{*}{Tempered Noise} & \no & $0.095 \pm 0.006$ & $2.154 \pm 0.048$ & $3.920 \pm 0.258$ \\
            &  & \yes                               & $\mathbf{0.066 \pm 0.002}$ & $\mathbf{0.765 \pm 0.156}$ & $\mathbf{0.939 \pm 0.171}$ \\
        \arrayrulecolor{gray}\cline{2-6} 
            & DEM & ---                             & $0.268 \pm 0.005$ & $4.471 \pm 0.105$ & $5.211 \pm 0.017$ \\
        \arrayrulecolor{black}\bottomrule
        \end{tabular}%
        }
        \label{tab:lj_results}
    \end{minipage} \hfill
       \begin{minipage}{0.49\textwidth}
        \centering
        \includegraphics[width=.7\linewidth]{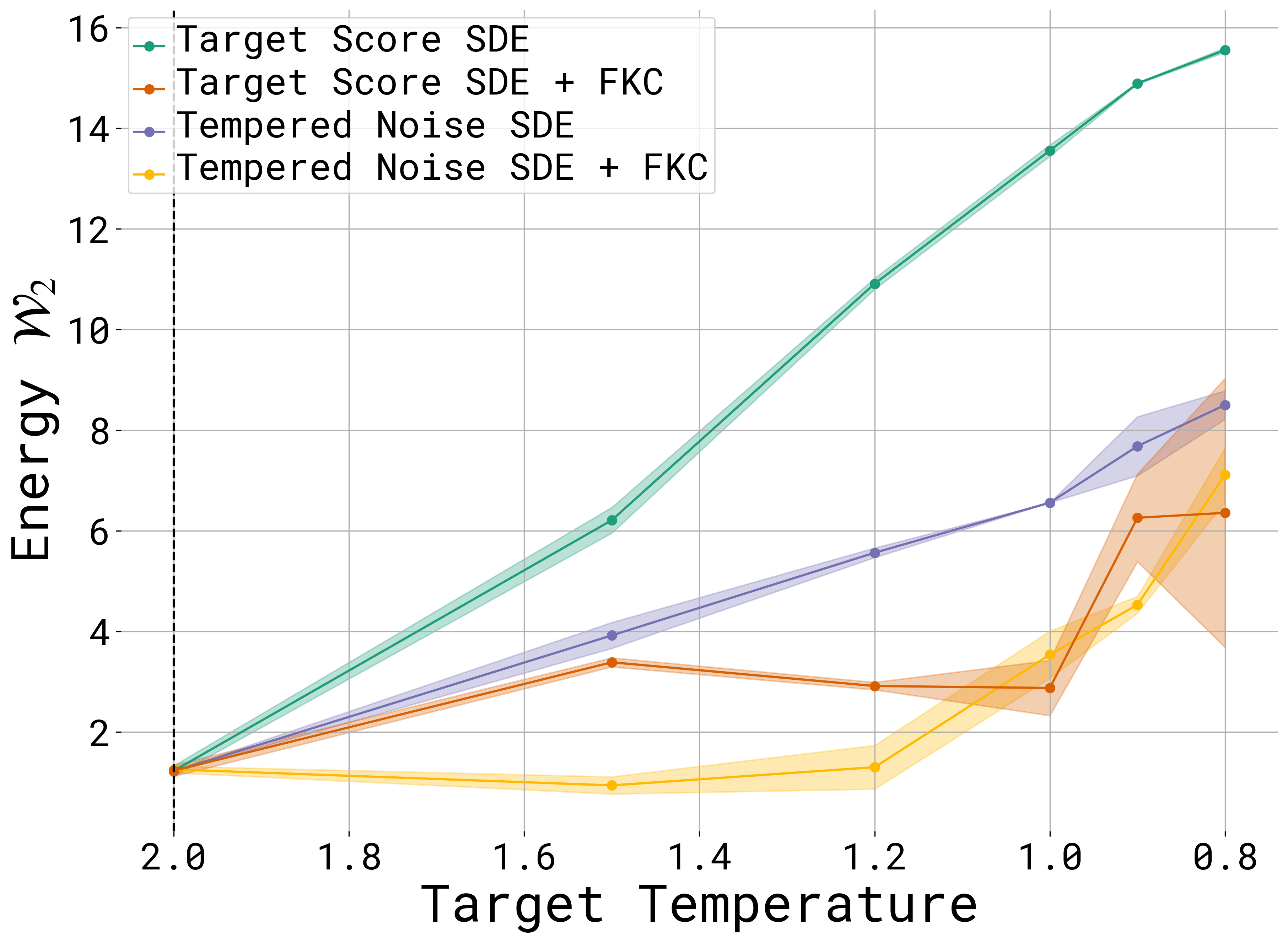}
        \vspace*{-.4cm}
        \caption{2-Wasserstein between energy distributions of MCMC samples from the annealed target distribution and our methods at different temperatures. Note the training temperature $\ltemp=2$.}
        \label{fig:LJ_energy_W2}
    \end{minipage}%
    \vspace{-3mm}
\end{figure*}

\vheader
\paragraph{Mixture of 40 Gaussians with Ground-Truth $q_t^\beta$} 
To verify our tools in a tractable setting,
we consider a highly multimodal distribution where we can calculate the optimal $q_t$ and $\nabla \log q_t$ for (small) integer $\ltemp$. We show qualitative results in \cref{fig:gmm_samples_plot_main}. We find that target score + FKC performs best, while tempered noise has a tendency to drop modes. We also find that FKC outperforms SDE-only simulation in both tempered noise and target score settings. This is further supported by quantitative results in \cref{tab:gmm_results}.


\vheader
\paragraph{Sampling LJ-13} To demonstrate the utility of first learning a sampler at a high temperature then annealing to a lower temperature vs.\ directly learning at a lower temperature, we consider a Lennard-Jones (LJ) system of 13 particles at a base temperature $\ltemp=2$. We train a Denoising Energy Matching (DEM) model~\citep{akhound2024iterated} at this base temperature and perform temperature-annealed inference to lower temperatures. In \cref{tab:lj_results} and \labelcref{tab:lj_results_appendix} we compare the performance of a DEM model trained at a lower temperature against a DEM model trained at a higher temperature and annealed to the lower temperature using various SDEs. We evaluate methods using the 2-Wasserstein metric between 
distance distributions, and the 1- and 2-Wasserstein metrics between energy histograms to a reference (\cref{app:additional_LJ_results}).
We find that tempered noise+FKC performs best at higher temperatures. However, at lower temperatures, the target score SDE+FKC performs best.  
Both methods outperform DEM directly trained at the lower temperature for temperatures $\stemp \in [2.0, 0.8]$ (\cref{fig:LJ_energy_W2}). 
We find DEM is qualitatively easier to learn at higher temperatures requiring much less tuning compared to lower temperatures (\cref{fig:LJ_energy_plot}). 
This makes the train-then-anneal approach attractive in this setting. For extended results and discussion see \cref{app:results}. 

\vheader
\subsection{Multi-Target Structure-Based Drug Design}
\label{sec:molecules}

\looseness=-1
We apply FKC to the setting of structure-based drug design (SBDD), where the goal is to design molecules (or ligands) using the three-dimensional structure of a biological target---typically a protein---as a guide~\citep{anderson2003process}. The ligands are then evaluated based on how well they fit into the protein's binding site. We focus on dual-target drug design, where a molecule should interact with two proteins simultaneously. Dual-target drug design has become increasingly investigated for targeting complex disease pathways such as in various cancers and neurodegeneration~\citep{ramsay2018perspective},  as well as for diminishing drug resistance mechanisms~\citep{yang2024rethinking}. 

\looseness=-1
We investigate the performance of PoE using both target score and tempered noise SDEs at various $\beta$, with ($\yes$) and without ($\no$) FKC. Ligand performance is determined by docking scores to each protein target using AutoDock Vina~\citep{eberhardt2021autodock}. We evaluate 100 protein pairs and average our results over tasks. We sampled 5 molecule sizes from the original training set from ~\citet{guan20233d}: \{15, 19, 23, 27, 35\} generating 32 molecules per size. We showcase our best results in \cref{tab:dualtarget_best} and the full ablation in \cref{sec:dualtarget_abl}. We evaluate the generated molecules on their docking scores to a protein pair, \texttt{P$_1$} and \texttt{P$_2$}. We report the average of docking score products for each target, as well as the average maximum docking score for a pair. Lower docking scores are better, and so lower maximum docking scores indicate the molecule is better at binding to \textit{both} targets. We compute the percentage of molecules that have better docking scores than known binders, as well as the number of valid and unique molecules generated, their diversity, their drug-likeness (QED ~\citep{bickerton2012quantifying}), and their synthetic accessibility (SA ~\citep{ertl2009estimation}).


\begin{table*}[t]
\centering
\vspace{-0.9em}
\caption{Docking scores of generated ligands for 100 protein target pairs (\texttt{P$_1$}, \texttt{P$_2$}). We generate 32 ligands for 5 molecule lengths for each protein pair using the Target Score SDE. Lower docking scores are better. Values are reported as averages over all generated molecules in each run. "Better than ref." is the percentage of ligands with better docking scores than known reference molecules for \textit{both} targets (the mean docking score for the reference molecules is $-7.915_{\pm 2.841}$). We also report the diversity, validity \& uniqueness, SA score, and QED. $^1$TargetDiff from ~\citet{guan20233d}, $^2$DualDiff from ~\citet{zhou2024reprogramming}.} 
\resizebox{\linewidth}{!}{
\begin{tabular}{ccccccccccc}
\toprule

  & 
 & (\texttt{P$_1$} * \texttt{P$_2$}) ($\uparrow$) & max(\texttt{P$_1$}, \texttt{P$_2$}) ($\downarrow$) & \texttt{P$_1$} ($\downarrow$) & \texttt{P$_2$} ($\downarrow$) & Better than ref. ($\uparrow$) & Div. ($\uparrow$) & Val. \& Uniq. ($\uparrow$) & SA ($\downarrow$) & QED ($\uparrow$) \\
 \midrule


  \multicolumn{2}{l}{\texttt{P$_1$} only$^1$} &  $59.355_{\pm 30.169}$ & $-6.961_{\pm 2.774}$ & $-8.090_{\pm 1.783}$ & $-7.213_{\pm 2.746}$ & $0.321_{\pm 0.371}$ & $\mathbf{0.886_{\pm 0.013}}$ & $0.918_{\pm 0.107}$ & $\mathbf{0.588_{\pm 0.086}}$ & $0.531_{\pm 0.150}$  \\
 \midrule
 \midrule
 
 $\beta$ & FKC
 & (\texttt{P$_1$} * \texttt{P$_2$}) ($\uparrow$) & max(\texttt{P$_1$}, \texttt{P$_2$}) ($\downarrow$) & \texttt{P$_1$} ($\downarrow$) & \texttt{P$_2$} ($\downarrow$) & Better than ref. ($\uparrow$) & Div. ($\uparrow$) & Val. \& Uniq. ($\uparrow$) & SA ($\downarrow$) & QED ($\uparrow$) \\ 
 \midrule

\multirow[c]{2}{*}{0.5} & \no$^2$ & $64.554_{\pm 28.225}$ & $-7.030_{\pm 2.556}$ & $-7.950_{\pm 2.212}$ & $-8.028_{\pm 2.154}$ & $0.306_{\pm 0.346}$ & $0.883_{\pm 0.012}$ & $0.943_{\pm 0.124}$ & $0.609_{\pm 0.084}$ & $0.575_{\pm 0.134}$ \\

 & \yes$\,\,$ & $66.380_{\pm 35.747}$ & $-6.966_{\pm 3.291}$ & $-8.085_{\pm 2.832}$ & $-8.098_{\pm 2.638}$ & $0.341_{\pm 0.377}$ & $0.870_{\pm 0.021}$ & $0.951_{\pm 0.096}$ & $0.596_{\pm 0.094}$ & $0.587_{\pm 0.129}$ \\
 
\midrule

\multirow[c]{2}{*}{1.0} & \no$\,\,$ & $68.851_{\pm 30.153}$ & $-7.256_{\pm 2.622}$ & $-8.206_{\pm 2.385}$ & $-8.287_{\pm 2.123}$ & $0.363_{\pm 0.375}$ & $0.880_{\pm 0.013}$ & $\mathbf{0.964_{\pm 0.100}}$ & $0.611_{\pm 0.090}$ & $0.589_{\pm 0.126}$ \\

 & \yes$\,\,$ & $76.036_{\pm 33.835}$ & $-7.649_{\pm 2.605}$ & $-8.658_{\pm 2.347}$ & $-8.660_{\pm 2.349}$ & $0.434_{\pm 0.416}$ & $0.844_{\pm 0.029}$ & $0.939_{\pm 0.106}$ & $0.627_{\pm 0.095}$ & $0.591_{\pm 0.128}$ \\
 
\midrule

\multirow[c]{2}{*}{2.0} & \no$\,\,$ & $71.186_{\pm 30.799}$ & $-7.421_{\pm 2.497}$ & $-8.365_{\pm 2.336}$ & $-8.401_{\pm 2.051}$ & $0.383_{\pm 0.389}$ & $0.877_{\pm 0.015}$ & $0.961_{\pm 0.115}$ & $0.642_{\pm 0.086}$  & $\mathbf{0.594_{\pm 0.124}}$ \\

& \yes$\,\,$ & $\mathbf{77.271_{\pm 34.268}}$ & $\mathbf{-7.720_{\pm 2.562}}$ & $\mathbf{-8.682_{\pm 2.488}}$ & $\mathbf{-8.735_{\pm 2.187}}$ & $\mathbf{0.450_{\pm 0.438}}$ & $0.806_{\pm 0.048}$ & $0.862_{\pm 0.174}$ & $0.641_{\pm 0.112}$ & $0.592_{\pm 0.146}$ \\

\bottomrule
\end{tabular}
}
\vspace{-5mm}
\label{tab:dualtarget_best}
\end{table*}


\begin{figure}[t]
\centering  
\vspace{-2mm}
      \centering 
      \includegraphics[width=\linewidth]{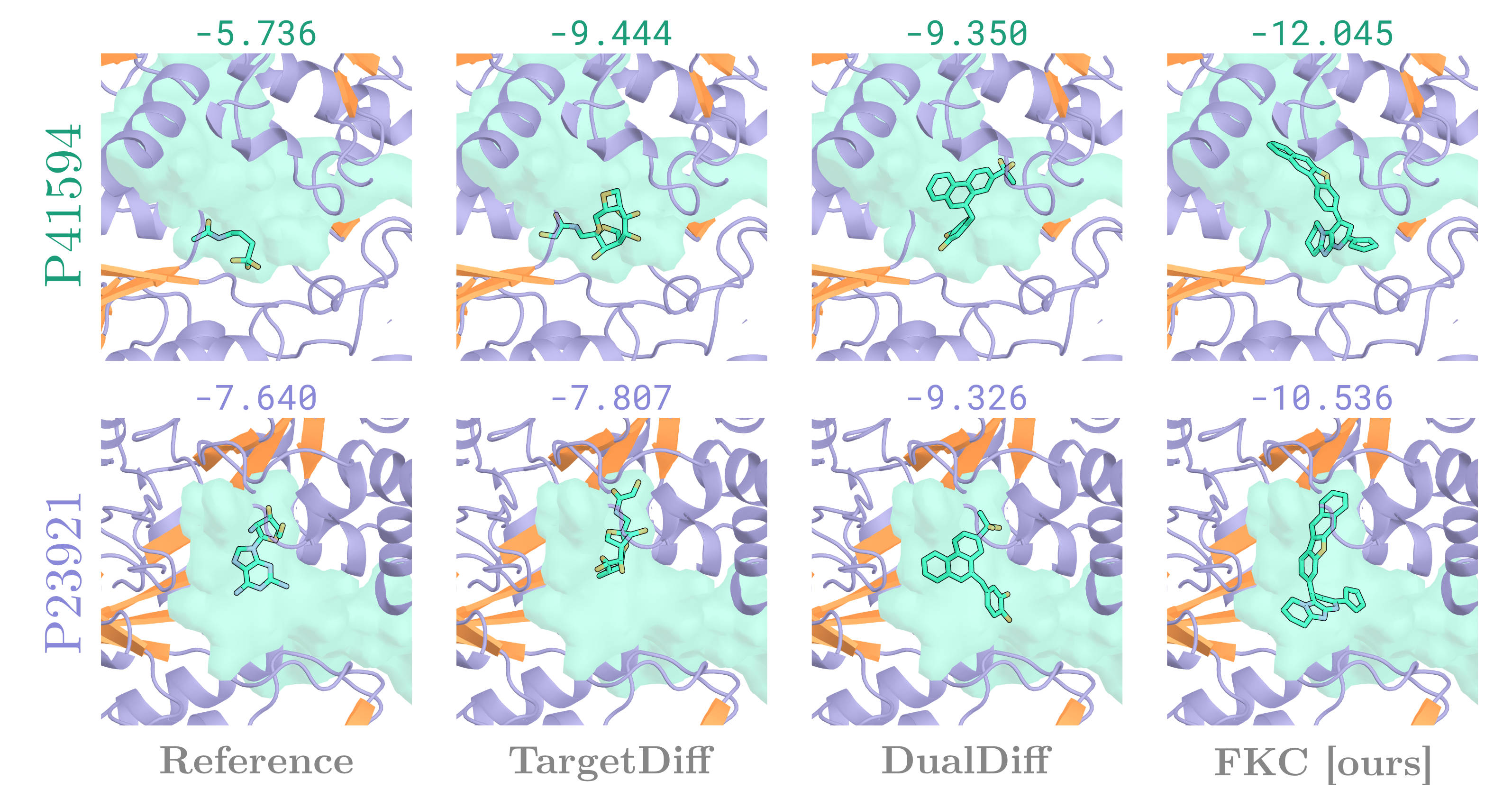}     
\vspace{-8mm}
     \caption{Molecules generated from our method (target score SDE with $\beta=2.0$ and FKC resampling) and baselines in the binding pockets of two proteins: GRM5 (top row, UniProt ID P41594) and RRM1 (bottom row, UniProt ID P23921). Docking scores for each molecule and target are above each image; lower docking scores are better. Here, we display molecules with the best docking scores that have a QED $\geq 0.4$; more generations are in \cref{sec:dualtarget_abl}. The binding pocket is shaded in light green.}
     \label{fig:docking_viz}
 \vspace{-5.5mm}
 \end{figure}

 We find that the target noise SDE at $\beta > 0.5$ generates molecules with better average docking scores for each of the target proteins compared with both baselines DualDiff~\citep{zhou2024reprogramming} and TargetDiff~\citep{guan20233d}. When we incorporate FKC, the average docking scores improve further. In \cref{fig:weightsvsscore}, we observe a positive correlation between the FKC weights and docking scores. There is a slight sacrifice in terms of diversity and uniqueness when resampling with FKC, although this is a common trade-off for an increase in quality. Notably, our method achieves the lowest maximum docking score, meaning that generated ligands are able to better bind to both proteins (on average across tasks). Our method also generates the highest fraction of molecules that are better than known binders (reference molecules), which could motivate using our model in \textit{de novo} drug design settings (the mean docking score of reference molecules is $-7.915_{\pm 2.841}$). We visualize ligands for a sample target pair in \cref{fig:docking_viz} and \cref{fig:dockedmols_allsizes}. 

 In \cref{sec:smiles_experiments}, we further investigate the utility of PoE in generating molecule SMILES using a latent diffusion model, and show that FKC resampling improves generation for small molecules satisfying multiple functional properties.  

\vheader
\looseness=-1
\section{Related Work}
\looseness=-1
Sequential Monte Carlo methods have proven useful across a wide range of tasks involving diffusion models, including
for reward-guided generation \citep{uehara2024understanding,uehara2025inference, singhal2025general, kim2025alignment}, conditional generation \citep{wu2024practical}, or inverse problems \citep{dou2024diffusion, cardoso2024monte}.

For compositional generation, \citet{du2023reduce} learn an energy-based score function and use the energy within MCMC procedures.   \citet{thornton2025controlled} improve training of the energy-based score function by distilling an unconditional score model, where the resulting energy can be used for SMC resampling from annealed or product densities. 

 
 \looseness=-1
 Within the context of diffusion samplers from Boltzmann densities, \citet{phillips2024particle} consider SMC for energy-based score parameterizations.
  \citet{chen2024sequential, albergo2024nets} consider SMC resampling along trajectories with respect to a prescribed geometric annealing path, where \citet{albergo2024nets} is presented through the Feynman-Kac perspective. 
 The approaches in \citet{vargas2024transport, albergo2024nets} correspond to the \textit{escorted} Jarynski equality \citep{vaikuntanathan2008escorted,vaikuntanathan2011escorted}, where additional transport terms are learned to more closely match the evolution of a given density path \citep{arbel2021annealed,chemseddine2024neural,mate2023learning, tian2024liouville, fan2024path, maurais2024sampling, vargas2024transport}.  Indeed, the celebrated Jarzynski equality \citep{jarzynski1997equilibrium, crooks1999excursions} and its variants admit an elegant proof using the Feynman-Kac formula (\citet[Ch. 4]{lelievre2010free},\citet{vaikuntanathan2008escorted}).
\looseness=-1

Predictor-corrector simulation \citep{song2021scorebased} performs additional Langevin steps to promote matching the intermediate marginals of $p_t$ of a diffusion model.   These schemes can be adapted for annealed or product targets, although \citet{du2023reduce} found best performance using Metropolis corrections.  \citet{bradley2024classifier} interpret standard CFG SDE simulation \labelcref{eq:cfg_geo_mix_sim} as a predictor-corrector where the corrector targets a different guidance or geometric mixture weight $\beta^{\prime}= \frac{1}{2}(1+\beta)$.  Our resampling correctors are instead tailored to the original guidance weight $\beta$.

Finally, SMC methods have recently been extended to discrete diffusion models \citep{singhal2025general, li2024derivative,  uehara2025inference, lee2025debiasing}, where the approach of \citet{lee2025debiasing} is analogous to FKC for discrete settings.
\vheader
\looseness=-1
\section{Conclusion}
\looseness=-1
In this work, we proposed \FKC, an array of tools allowing for fine control over the sample distributions of diffusion processes.
These target distributions may arise in compositional generative modeling \citep{du2024compositional}, where we seek to combine specialist models capturing various chemical properties of molecules or different aspects of a complex prompt.  
Geometric averaging appears in widely-used CFG techniques while, via annealing, we demonstrate that an approach of first learning an amortized sampler at a higher temperature and then annealing using FKCs down to a lower temperature opens up a new dimension for the construction of amortized samplers. 

Finally, our framework allows for the use of reward models (\cref{propapp:reward_tilted}) and for a time-dependent annealing schedule $\beta_t$ (\cref{prop:time_beta}), where the 
log-density terms needed for weights can be estimated using methods from \citet{skreta2024superposition}. 
\clearpage
\section*{Impact Statement}
This paper presents work whose goal is to advance the field of Machine Learning. There are many potential societal consequences of our work, none of which we feel must be specifically highlighted here.

\section*{Acknowledgments}
This project was partially sponsored by Google through the Google \& Mila projects program. The authors acknowledge funding from UNIQUE, CIFAR, NSERC, Intel, and Samsung. The research was
enabled in part by computational resources provided by
the Digital Research Alliance of Canada (\url{https://alliancecan.ca}), Mila (\url{https://mila.quebec}), the Acceleration Consortium (\url{https://acceleration.utoronto.ca/}), and NVIDIA. KN was supported by IVADO and Institut Courtois. MS thanks Ella Rajaonson for assistance with docking visualizations, as well as Austin Cheng and Cher-Tian Ser for providing feedback on molecule generation.

\bibliography{bibliography}

\begin{thebibliography}{82}
\providecommand{\natexlab}[1]{#1}
\providecommand{\url}[1]{\texttt{#1}}
\expandafter\ifx\csname urlstyle\endcsname\relax
  \providecommand{\doi}[1]{doi: #1}\else
  \providecommand{\doi}{doi: \begingroup \urlstyle{rm}\Url}\fi

\bibitem[Abramson et~al.(2024)Abramson, Adler, Dunger, Evans, Green, Pritzel, Ronneberger, Willmore, Ballard, Bambrick, et~al.]{abramson2024accurate}
Abramson, J., Adler, J., Dunger, J., Evans, R., Green, T., Pritzel, A., Ronneberger, O., Willmore, L., Ballard, A.~J., Bambrick, J., et~al.
\newblock Accurate structure prediction of biomolecular interactions with alphafold 3.
\newblock \emph{Nature}, pp.\  1--3, 2024.

\bibitem[Akhound-Sadegh et~al.(2024)Akhound-Sadegh, Rector-Brooks, Bose, Mittal, Lemos, Liu, Sendera, Ravanbakhsh, Gidel, Bengio, et~al.]{akhound2024iterated}
Akhound-Sadegh, T., Rector-Brooks, J., Bose, J., Mittal, S., Lemos, P., Liu, C.-H., Sendera, M., Ravanbakhsh, S., Gidel, G., Bengio, Y., et~al.
\newblock Iterated denoising energy matching for sampling from {B}oltzmann densities.
\newblock In \emph{International Conference on Machine Learning}, 2024.

\bibitem[Albergo \& Vanden-Eijnden(2024)Albergo and Vanden-Eijnden]{albergo2024nets}
Albergo, M.~S. and Vanden-Eijnden, E.
\newblock Nets: A non-equilibrium transport sampler.
\newblock \emph{arXiv preprint arXiv:2410.02711}, 2024.

\bibitem[Anderson(2003)]{anderson2003process}
Anderson, A.~C.
\newblock The process of structure-based drug design.
\newblock \emph{Chemistry \& Biology}, 10\penalty0 (9):\penalty0 787--797, 2003.

\bibitem[Angeli(2020)]{angeli2020interacting}
Angeli, L.
\newblock \emph{Interacting particle approximations of {F}eynman-{K}ac measures for continuous-time jump processes}.
\newblock PhD thesis, University of Warwick, 2020.

\bibitem[Angeli et~al.(2019)Angeli, Grosskinsky, Johansen, and Pizzoferrato]{angeli2019rare}
Angeli, L., Grosskinsky, S., Johansen, A.~M., and Pizzoferrato, A.
\newblock Rare event simulation for stochastic dynamics in continuous time.
\newblock \emph{Journal of Statistical Physics}, 176\penalty0 (5):\penalty0 1185--1210, 2019.

\bibitem[Arbel et~al.(2021)Arbel, Matthews, and Doucet]{arbel2021annealed}
Arbel, M., Matthews, A., and Doucet, A.
\newblock Annealed flow transport {M}onte {C}arlo.
\newblock In \emph{International Conference on Machine Learning}, 2021.

\bibitem[Bickerton et~al.(2012)Bickerton, Paolini, Besnard, Muresan, and Hopkins]{bickerton2012quantifying}
Bickerton, G.~R., Paolini, G.~V., Besnard, J., Muresan, S., and Hopkins, A.~L.
\newblock Quantifying the chemical beauty of drugs.
\newblock \emph{Nature Chemistry}, 4\penalty0 (2):\penalty0 90--98, 2012.

\bibitem[Bingham et~al.(2019)Bingham, Chen, Jankowiak, Obermeyer, Pradhan, Karaletsos, Singh, Szerlip, Horsfall, and Goodman]{bingham2018pyrodeepuniversalprobabilistic}
Bingham, E., Chen, J.~P., Jankowiak, M., Obermeyer, F., Pradhan, N., Karaletsos, T., Singh, R., Szerlip, P.~A., Horsfall, P., and Goodman, N.~D.
\newblock Pyro: Deep universal probabilistic programming.
\newblock \emph{J. Mach. Learn. Res.}, 20:\penalty0 28:1--28:6, 2019.

\bibitem[Bradley \& Nakkiran(2024)Bradley and Nakkiran]{bradley2024classifier}
Bradley, A. and Nakkiran, P.
\newblock Classifier-free guidance is a predictor-corrector.
\newblock \emph{arXiv preprint arXiv:2408.09000}, 2024.

\bibitem[Cardoso et~al.(2024)Cardoso, El~Idrissi, Le~Corff, and Moulines]{cardoso2024monte}
Cardoso, G.~V., El~Idrissi, Y.~J., Le~Corff, S., and Moulines, E.
\newblock Monte {C}arlo guided diffusion for {B}ayesian linear inverse problems.
\newblock In \emph{International Conference on Learning Representations}, 2024.

\bibitem[Chang \& Ye(2024)Chang and Ye]{chang2024ldmol}
Chang, J. and Ye, J.~C.
\newblock Ldmol: Text-conditioned molecule diffusion model leveraging chemically informative latent space.
\newblock \emph{arXiv preprint arXiv:2405.17829}, 2024.

\bibitem[Chemseddine et~al.(2025)Chemseddine, Wald, Duong, and Steidl]{chemseddine2024neural}
Chemseddine, J., Wald, C., Duong, R., and Steidl, G.
\newblock Neural sampling from {B}oltzmann densities: {F}isher-{R}ao curves in the {W}asserstein geometry.
\newblock In \emph{International Conference on Learning Representations}, 2025.

\bibitem[Chen et~al.(2025)Chen, Richter, Berner, Blessing, Neumann, and Anandkumar]{chen2024sequential}
Chen, J., Richter, L., Berner, J., Blessing, D., Neumann, G., and Anandkumar, A.
\newblock Sequential controlled {L}angevin diffusions.
\newblock In \emph{International Conference on Machine Learning}, 2025.

\bibitem[Chizat et~al.(2018)Chizat, Peyr{\'e}, Schmitzer, and Vialard]{chizat2018interpolating}
Chizat, L., Peyr{\'e}, G., Schmitzer, B., and Vialard, F.-X.
\newblock An interpolating distance between optimal transport and {F}isher--{R}ao metrics.
\newblock \emph{Foundations of Computational Mathematics}, 18:\penalty0 1--44, 2018.

\bibitem[Crooks(1999)]{crooks1999excursions}
Crooks, G.~E.
\newblock \emph{Excursions in Statistical Dynamics}.
\newblock University of California, Berkeley, 1999.

\bibitem[Davis(1984)]{davis1984piecewise}
Davis, M.~H.
\newblock Piecewise-deterministic {M}arkov processes: A general class of non-diffusion stochastic models.
\newblock \emph{Journal of the Royal Statistical Society: Series B (Methodological)}, 46\penalty0 (3):\penalty0 353--376, 1984.

\bibitem[De~Bortoli et~al.(2024)De~Bortoli, Hutchinson, Wirnsberger, and Doucet]{debortoli2024targetscorematching}
De~Bortoli, V., Hutchinson, M., Wirnsberger, P., and Doucet, A.
\newblock Target score matching.
\newblock \emph{arXiv preprint arXiv:2402.08667}, 2024.

\bibitem[Del~Moral(2013)]{del2013mean}
Del~Moral, P.
\newblock \emph{Mean Field Simulation for {M}onte {C}arlo Integration}.
\newblock Chapman and Hall, CRC press, 2013.

\bibitem[Dou \& Song(2024)Dou and Song]{dou2024diffusion}
Dou, Z. and Song, Y.
\newblock Diffusion posterior sampling for linear inverse problem solving: A filtering perspective.
\newblock In \emph{International Conference on Learning Representations}, 2024.

\bibitem[Douc \& Capp{\'e}(2005)Douc and Capp{\'e}]{douc2005comparison}
Douc, R. and Capp{\'e}, O.
\newblock Comparison of resampling schemes for particle filtering.
\newblock In \emph{ISPA 2005. Proceedings of the 4th International Symposium on Image and Signal Processing and Analysis}, pp.\  64--69, 2005.

\bibitem[Du \& Kaelbling(2024)Du and Kaelbling]{du2024compositional}
Du, Y. and Kaelbling, L.
\newblock Compositional generative modeling: A single model is not all you need.
\newblock 2024.

\bibitem[Du et~al.(2023)Du, Durkan, Strudel, Tenenbaum, Dieleman, Fergus, Sohl-Dickstein, Doucet, and Grathwohl]{du2023reduce}
Du, Y., Durkan, C., Strudel, R., Tenenbaum, J.~B., Dieleman, S., Fergus, R., Sohl-Dickstein, J., Doucet, A., and Grathwohl, W.~S.
\newblock Reduce, reuse, recycle: Compositional generation with energy-based diffusion models and {MCMC}.
\newblock In \emph{International Conference on Machine Learning}, 2023.

\bibitem[Eberhardt et~al.(2021)Eberhardt, Santos-Martins, Tillack, and Forli]{eberhardt2021autodock}
Eberhardt, J., Santos-Martins, D., Tillack, A.~F., and Forli, S.
\newblock Autodock vina 1.2. 0: New docking methods, expanded force field, and python bindings.
\newblock \emph{Journal of Chemical Information and Modeling}, 61\penalty0 (8):\penalty0 3891--3898, 2021.

\bibitem[Ertl \& Schuffenhauer(2009)Ertl and Schuffenhauer]{ertl2009estimation}
Ertl, P. and Schuffenhauer, A.
\newblock Estimation of synthetic accessibility score of drug-like molecules based on molecular complexity and fragment contributions.
\newblock \emph{Journal of Cheminformatics}, 1:\penalty0 1--11, 2009.

\bibitem[Ethier \& Kurtz(2009)Ethier and Kurtz]{ethier2009markov}
Ethier, S.~N. and Kurtz, T.~G.
\newblock \emph{Markov {P}rocesses: {C}haracterization and {C}onvergence}.
\newblock John Wiley \& Sons, 2009.

\bibitem[Fan et~al.(2024)Fan, Zhou, Tian, and Qian]{fan2024path}
Fan, M., Zhou, R., Tian, C., and Qian, X.
\newblock Path-guided particle-based sampling.
\newblock In \emph{International Conference on Machine Learning}, 2024.

\bibitem[Gardiner(2009)]{gardiner2009stochastic}
Gardiner, C.
\newblock \emph{Stochastic {M}ethods}.
\newblock Springer, 2009.

\bibitem[Ghosh et~al.(2023)Ghosh, Hajishirzi, and Schmidt]{ghosh2023geneval}
Ghosh, D., Hajishirzi, H., and Schmidt, L.
\newblock Geneval: An object-focused framework for evaluating text-to-image alignment.
\newblock In \emph{Advances in Neural Information Processing Systems}, 2023.

\bibitem[Guan et~al.(2023)Guan, Qian, Peng, Su, Peng, and Ma]{guan20233d}
Guan, J., Qian, W.~W., Peng, X., Su, Y., Peng, J., and Ma, J.
\newblock 3d equivariant diffusion for target-aware molecule generation and affinity prediction.
\newblock \emph{arXiv preprint arXiv:2303.03543}, 2023.

\bibitem[Ho \& Salimans(2021)Ho and Salimans]{ho2021classifier}
Ho, J. and Salimans, T.
\newblock Classifier-free diffusion guidance.
\newblock In \emph{NeurIPS 2021 Workshop on Deep Generative Models and Downstream Applications}, 2021.

\bibitem[Hoffman \& Gelman(2014)Hoffman and Gelman]{hoffman2011nouturnsampleradaptivelysetting}
Hoffman, M.~D. and Gelman, A.
\newblock The {N}o-{U}-{T}urn sampler: Adaptively setting path lengths in {H}amiltonian {M}onte {C}arlo.
\newblock \emph{Journal of Machine Learning Research}, 2014.

\bibitem[Holderrieth et~al.(2025)Holderrieth, Havasi, Yim, Shaul, Gat, Jaakkola, Karrer, Chen, and Lipman]{holderrieth2024generator}
Holderrieth, P., Havasi, M., Yim, J., Shaul, N., Gat, I., Jaakkola, T., Karrer, B., Chen, R.~T., and Lipman, Y.
\newblock Generator matching: Generative modeling with arbitrary {M}arkov processes.
\newblock In \emph{International Conference on Learning Representations}, 2025.

\bibitem[Huang et~al.(2021)Huang, Fu, Gao, Zhao, Roohani, Leskovec, Coley, Xiao, Sun, and Zitnik]{Huang2021tdc}
Huang, K., Fu, T., Gao, W., Zhao, Y., Roohani, Y., Leskovec, J., Coley, C.~W., Xiao, C., Sun, J., and Zitnik, M.
\newblock Therapeutics data commons: Machine learning datasets and tasks for drug discovery and development.
\newblock \emph{Proceedings of Neural Information Processing Systems, NeurIPS Datasets and Benchmarks}, 2021.

\bibitem[Jarzynski(1997)]{jarzynski1997equilibrium}
Jarzynski, C.
\newblock Equilibrium free-energy differences from nonequilibrium measurements: A master-equation approach.
\newblock \emph{Physical Review E}, 56\penalty0 (5):\penalty0 5018, 1997.

\bibitem[Karczewski et~al.(2025)Karczewski, Heinonen, and Garg]{karczewski2024diffusion}
Karczewski, R., Heinonen, M., and Garg, V.
\newblock Diffusion models as cartoonists! the curious case of high density regions.
\newblock In \emph{International Conference on Learning Representations}, 2025.

\bibitem[Karras et~al.(2024{\natexlab{a}})Karras, Aittala, Kynk{\"a}{\"a}nniemi, Lehtinen, Aila, and Laine]{karras2024guiding}
Karras, T., Aittala, M., Kynk{\"a}{\"a}nniemi, T., Lehtinen, J., Aila, T., and Laine, S.
\newblock Guiding a diffusion model with a bad version of itself.
\newblock In \emph{Neural Informaton Processing Systems}, 2024{\natexlab{a}}.

\bibitem[Karras et~al.(2024{\natexlab{b}})Karras, Aittala, Lehtinen, Hellsten, Aila, and Laine]{karras2024analyzing}
Karras, T., Aittala, M., Lehtinen, J., Hellsten, J., Aila, T., and Laine, S.
\newblock Analyzing and improving the training dynamics of diffusion models.
\newblock In \emph{Proceedings of the IEEE/CVF Conference on Computer Vision and Pattern Recognition}, pp.\  24174--24184, 2024{\natexlab{b}}.

\bibitem[Kim et~al.(2025)Kim, Kim, and Park]{kim2025alignment}
Kim, S., Kim, M., and Park, D.
\newblock Test-time alignment of diffusion models without reward over-optimization.
\newblock In \emph{International Conference on Learning Representations}, 2025.

\bibitem[K{\"o}hler et~al.(2020)K{\"o}hler, Klein, and No{\'e}]{kohler2020equivariant}
K{\"o}hler, J., Klein, L., and No{\'e}, F.
\newblock Equivariant flows: exact likelihood generative learning for symmetric densities.
\newblock In \emph{International Conference on Machine Learning}, 2020.

\bibitem[Kondratyev et~al.(2016)Kondratyev, Monsaingeon, and Vorotnikov]{kondratyev2015new}
Kondratyev, S., Monsaingeon, L., and Vorotnikov, D.
\newblock A new optimal transport distance on the space of finite {R}adon measures.
\newblock \emph{Adv. Differential Equations}, 2016.

\bibitem[Lee et~al.(2025{\natexlab{a}})Lee, Jeha, Frellsen, Lio, Albergo, and Vargas]{lee2025debiasing}
Lee, C.~K., Jeha, P., Frellsen, J., Lio, P., Albergo, M.~S., and Vargas, F.
\newblock Debiasing guidance for discrete diffusion with sequential {M}onte {C}arlo.
\newblock \emph{arXiv preprint arXiv:2502.06079}, 2025{\natexlab{a}}.

\bibitem[Lee et~al.(2025{\natexlab{b}})Lee, Kreis, Veccham, Liu, Reidenbach, Peng, Paliwal, Nie, and Vahdat]{lee2025genmol}
Lee, S., Kreis, K., Veccham, S.~P., Liu, M., Reidenbach, D., Peng, Y., Paliwal, S., Nie, W., and Vahdat, A.
\newblock Genmol: A drug discovery generalist with discrete diffusion.
\newblock 2025{\natexlab{b}}.

\bibitem[Leli{\`e}vre et~al.(2010)Leli{\`e}vre, Rousset, and Stoltz]{lelievre2010free}
Leli{\`e}vre, T., Rousset, M., and Stoltz, G.
\newblock \emph{Free Energy Computations: A Mathematical Perspective}.
\newblock World Scientific, 2010.

\bibitem[Li et~al.(2024)Li, Zhao, Wang, Scalia, Eraslan, Nair, Biancalani, Ji, Regev, Levine, et~al.]{li2024derivative}
Li, X., Zhao, Y., Wang, C., Scalia, G., Eraslan, G., Nair, S., Biancalani, T., Ji, S., Regev, A., Levine, S., et~al.
\newblock Derivative-free guidance in continuous and discrete diffusion models with soft value-based decoding.
\newblock \emph{arXiv preprint arXiv:2408.08252}, 2024.

\bibitem[Liero et~al.(2018)Liero, Mielke, and Savar{\'e}]{liero2018optimal}
Liero, M., Mielke, A., and Savar{\'e}, G.
\newblock Optimal entropy-transport problems and a new {H}ellinger--{K}antorovich distance between positive measures.
\newblock \emph{Inventiones Mathematicae}, 211\penalty0 (3):\penalty0 969--1117, 2018.

\bibitem[Liu et~al.(2022)Liu, Li, Du, Torralba, and Tenenbaum]{liu2022compositional}
Liu, N., Li, S., Du, Y., Torralba, A., and Tenenbaum, J.~B.
\newblock Compositional visual generation with composable diffusion models.
\newblock In \emph{European Conference on Computer Vision}, pp.\  423--439. Springer, 2022.

\bibitem[Lu et~al.(2019)Lu, Lu, and Nolen]{lu2019accelerating}
Lu, Y., Lu, J., and Nolen, J.
\newblock Accelerating {L}angevin sampling with birth-death.
\newblock \emph{arXiv preprint arXiv:1905.09863}, 2019.

\bibitem[M{\'a}t{\'e} \& Fleuret(2023)M{\'a}t{\'e} and Fleuret]{mate2023learning}
M{\'a}t{\'e}, B. and Fleuret, F.
\newblock Learning interpolations between {B}oltzmann densities.
\newblock \emph{Transactions on Machine Learning Research}, 2023.

\bibitem[Maurais \& Marzouk(2024)Maurais and Marzouk]{maurais2024sampling}
Maurais, A. and Marzouk, Y.
\newblock Sampling in unit time with kernel {F}isher-{R}ao flow.
\newblock In \emph{Forty-first International Conference on Machine Learning}, 2024.

\bibitem[Midgley et~al.(2023)Midgley, Stimper, Simm, Sch{\"o}lkopf, and Hern{\'a}ndez-Lobato]{midgley2022flow}
Midgley, L.~I., Stimper, V., Simm, G.~N., Sch{\"o}lkopf, B., and Hern{\'a}ndez-Lobato, J.~M.
\newblock Flow annealed importance sampling bootstrap.
\newblock \emph{International Conference on Learning Representations}, 2023.

\bibitem[Naesseth et~al.(2019)Naesseth, Lindsten, Sch{\"o}n, et~al.]{naesseth2019elements}
Naesseth, C.~A., Lindsten, F., Sch{\"o}n, T.~B., et~al.
\newblock Elements of sequential {M}onte {C}arlo.
\newblock \emph{Foundations and Trends{\textregistered} in Machine Learning}, 12\penalty0 (3):\penalty0 307--392, 2019.

\bibitem[Neal(2001)]{neal2001annealed}
Neal, R.~M.
\newblock Annealed importance sampling.
\newblock \emph{Statistics and Computing}, 11:\penalty0 125--139, 2001.

\bibitem[OuYang et~al.(2024)OuYang, Qiang, and Hernández-Lobato]{ouyang2024bnemboltzmannsamplerbased}
OuYang, R., Qiang, B., and Hernández-Lobato, J.~M.
\newblock {BNEM}: A {B}oltzmann sampler based on bootstrapped noised energy matching.
\newblock \emph{arXiv preprint arXiv:2409.09787}, 2024.

\bibitem[Phillips et~al.(2024)Phillips, Dau, Hutchinson, De~Bortoli, Deligiannidis, and Doucet]{phillips2024particle}
Phillips, A., Dau, H.-D., Hutchinson, M.~J., De~Bortoli, V., Deligiannidis, G., and Doucet, A.
\newblock Particle denoising diffusion sampler.
\newblock In \emph{International Conference on Machine Learning}, 2024.

\bibitem[Radford et~al.(2021)Radford, Kim, Hallacy, Ramesh, Goh, Agarwal, Sastry, Askell, Mishkin, Clark, et~al.]{radford2021learning}
Radford, A., Kim, J.~W., Hallacy, C., Ramesh, A., Goh, G., Agarwal, S., Sastry, G., Askell, A., Mishkin, P., Clark, J., et~al.
\newblock Learning transferable visual models from natural language supervision.
\newblock In \emph{International Conference on Machine Learning}, 2021.

\bibitem[Ramsay et~al.(2018)Ramsay, Popovic-Nikolic, Nikolic, Uliassi, and Bolognesi]{ramsay2018perspective}
Ramsay, R.~R., Popovic-Nikolic, M.~R., Nikolic, K., Uliassi, E., and Bolognesi, M.~L.
\newblock A perspective on multi-target drug discovery and design for complex diseases.
\newblock \emph{Clinical and Translational Medicine}, 7:\penalty0 1--14, 2018.

\bibitem[Richter \& Berner(2024)Richter and Berner]{richter2024improved}
Richter, L. and Berner, J.
\newblock Improved sampling via learned diffusions.
\newblock In \emph{International Conference on Learning Representations}, 2024.

\bibitem[Rogers \& Hahn(2010)Rogers and Hahn]{rogers2010extended}
Rogers, D. and Hahn, M.
\newblock Extended-connectivity fingerprints.
\newblock \emph{Journal of Chemical Information and Modeling}, 50\penalty0 (5):\penalty0 742--754, 2010.

\bibitem[Rousset(2006)]{rousset2006control}
Rousset, M.
\newblock On the control of an interacting particle estimation of {S}chr{\"o}dinger ground states.
\newblock \emph{SIAM Journal on Mathematical Analysis}, 38\penalty0 (3):\penalty0 824--844, 2006.

\bibitem[Rousset \& Stoltz(2006)Rousset and Stoltz]{rousset2006equilibrium}
Rousset, M. and Stoltz, G.
\newblock Equilibrium sampling from nonequilibrium dynamics.
\newblock \emph{Journal of Statistical Physics}, 123:\penalty0 1251--1272, 2006.

\bibitem[Saharia et~al.(2022)Saharia, Chan, Saxena, Li, Whang, Denton, Ghasemipour, Gontijo~Lopes, Karagol~Ayan, Salimans, et~al.]{saharia2022photorealistic}
Saharia, C., Chan, W., Saxena, S., Li, L., Whang, J., Denton, E.~L., Ghasemipour, K., Gontijo~Lopes, R., Karagol~Ayan, B., Salimans, T., et~al.
\newblock Photorealistic text-to-image diffusion models with deep language understanding.
\newblock In \emph{Advances in Neural Information Processing Systems}, 2022.

\bibitem[Sahoo et~al.(2024)Sahoo, Arriola, Gokaslan, Marroquin, Rush, Schiff, Chiu, and Kuleshov]{sahoo2024simple}
Sahoo, S.~S., Arriola, M., Gokaslan, A., Marroquin, E.~M., Rush, A.~M., Schiff, Y., Chiu, J.~T., and Kuleshov, V.
\newblock Simple and effective masked diffusion language models.
\newblock In \emph{Advances in Neural Information Processing Systems}, 2024.

\bibitem[Salimans \& Ho(2021)Salimans and Ho]{salimans2021should}
Salimans, T. and Ho, J.
\newblock Should {EBM}s model the energy or the score?
\newblock In \emph{Energy Based Models Workshop-ICLR 2021}, 2021.

\bibitem[Singhal et~al.(2025)Singhal, Horvitz, Teehan, Ren, Yu, McKeown, and Ranganath]{singhal2025general}
Singhal, R., Horvitz, Z., Teehan, R., Ren, M., Yu, Z., McKeown, K., and Ranganath, R.
\newblock A general framework for inference-time scaling and steering of diffusion models.
\newblock \emph{arXiv preprint arXiv:2501.06848}, 2025.

\bibitem[Skreta et~al.(2025)Skreta, Atanackovic, Bose, Tong, and Neklyudov]{skreta2024superposition}
Skreta, M., Atanackovic, L., Bose, A.~J., Tong, A., and Neklyudov, K.
\newblock The superposition of diffusion models using the {I}t\^o density estimator.
\newblock In \emph{International Conference on Learning Representations}, 2025.

\bibitem[Song et~al.(2021)Song, Sohl-Dickstein, Kingma, Kumar, Ermon, and Poole]{song2021scorebased}
Song, Y., Sohl-Dickstein, J., Kingma, D.~P., Kumar, A., Ermon, S., and Poole, B.
\newblock Score-based generative modeling through stochastic differential equations.
\newblock In \emph{International Conference on Learning Representations}, 2021.

\bibitem[Thornton et~al.(2025)Thornton, B{\'e}thune, ZHANG, Bradley, Nakkiran, and Zhai]{thornton2025controlled}
Thornton, J., B{\'e}thune, L., ZHANG, R., Bradley, A., Nakkiran, P., and Zhai, S.
\newblock Controlled generation with distilled diffusion energy models and sequential {M}onte {C}arlo.
\newblock In \emph{International Conference on Artificial Intelligence and Statistics}, 2025.

\bibitem[Tian et~al.(2024)Tian, Panda, and Lin]{tian2024liouville}
Tian, Y., Panda, N., and Lin, Y.~T.
\newblock Liouville flow importance sampler.
\newblock \emph{International Conference on Machine Learning}, 2024.

\bibitem[Uehara et~al.(2024)Uehara, Zhao, Biancalani, and Levine]{uehara2024understanding}
Uehara, M., Zhao, Y., Biancalani, T., and Levine, S.
\newblock Understanding reinforcement learning-based fine-tuning of diffusion models: A tutorial and review.
\newblock \emph{arXiv preprint arXiv:2407.13734}, 2024.

\bibitem[Uehara et~al.(2025)Uehara, Zhao, Wang, Li, Regev, Levine, and Biancalani]{uehara2025inference}
Uehara, M., Zhao, Y., Wang, C., Li, X., Regev, A., Levine, S., and Biancalani, T.
\newblock Inference-time alignment in diffusion models with reward-guided generation: Tutorial and review.
\newblock \emph{arXiv preprint arXiv:2501.09685}, 2025.

\bibitem[Vaikuntanathan \& Jarzynski(2008)Vaikuntanathan and Jarzynski]{vaikuntanathan2008escorted}
Vaikuntanathan, S. and Jarzynski, C.
\newblock Escorted free energy simulations: Improving convergence by reducing dissipation.
\newblock \emph{Physical Review Letters}, 100\penalty0 (19):\penalty0 190601, 2008.

\bibitem[Vaikuntanathan \& Jarzynski(2011)Vaikuntanathan and Jarzynski]{vaikuntanathan2011escorted}
Vaikuntanathan, S. and Jarzynski, C.
\newblock Escorted free energy simulations.
\newblock \emph{The Journal of Chemical Physics}, 134\penalty0 (5), 2011.

\bibitem[Vargas et~al.(2023)Vargas, Grathwohl, and Doucet]{vargas2023denoising}
Vargas, F., Grathwohl, W.~S., and Doucet, A.
\newblock Denoising diffusion samplers.
\newblock In \emph{International Conference on Learning Representations}, 2023.

\bibitem[Vargas et~al.(2024)Vargas, Padhy, Blessing, and Nusken]{vargas2024transport}
Vargas, F., Padhy, S., Blessing, D., and Nusken, N.
\newblock Transport meets variational inference: Controlled {M}onte {C}arlo diffusions.
\newblock In \emph{International Conference on Learning Representations}, 2024.

\bibitem[Wang et~al.(2025)Wang, Skreta, Ser, Gao, Kong, Strieth-Kalthoff, Duan, Zhuang, Yu, Zhu, et~al.]{wang2024efficient}
Wang, H., Skreta, M., Ser, C.-T., Gao, W., Kong, L., Strieth-Kalthoff, F., Duan, C., Zhuang, Y., Yu, Y., Zhu, Y., et~al.
\newblock Efficient evolutionary search over chemical space with large language models.
\newblock In \emph{International Conference on Learning Representations}, 2025.

\bibitem[Woo \& Ahn(2024)Woo and Ahn]{woo2024iteratedenergybasedflowmatching}
Woo, D. and Ahn, S.
\newblock Iterated energy-based flow matching for sampling from {B}oltzmann densities.
\newblock \emph{arXiv preprint arXiv:2408.16249}, 2024.

\bibitem[Wu et~al.(2024)Wu, Trippe, Naesseth, Blei, and Cunningham]{wu2024practical}
Wu, L., Trippe, B., Naesseth, C., Blei, D., and Cunningham, J.~P.
\newblock Practical and asymptotically exact conditional sampling in diffusion models.
\newblock In \emph{Advances in Neural Information Processing Systems}, 2024.

\bibitem[Xu et~al.(2024)Xu, Liu, Wu, Tong, Li, Ding, Tang, and Dong]{xu2024imagereward}
Xu, J., Liu, X., Wu, Y., Tong, Y., Li, Q., Ding, M., Tang, J., and Dong, Y.
\newblock Imagereward: Learning and evaluating human preferences for text-to-image generation.
\newblock \emph{Advances in Neural Information Processing Systems}, 36, 2024.

\bibitem[Yang et~al.(2024)Yang, Mou, Wan, Zhu, Wang, Gao, and Liu]{yang2024rethinking}
Yang, Y., Mou, Y., Wan, L.-X., Zhu, S., Wang, G., Gao, H., and Liu, B.
\newblock Rethinking therapeutic strategies of dual-target drugs: An update on pharmacological small-molecule compounds in cancer.
\newblock \emph{Medicinal Research Reviews}, 44\penalty0 (6):\penalty0 2600--2623, 2024.

\bibitem[Zhang \& Chen(2022)Zhang and Chen]{zhang2022path}
Zhang, Q. and Chen, Y.
\newblock Path integral sampler: A stochastic control approach for sampling.
\newblock In \emph{International Conference on Learning Representations}, 2022.

\bibitem[Zhou et~al.(2024)Zhou, Guan, Zhang, Peng, Wang, and Ma]{zhou2024reprogramming}
Zhou, X., Guan, J., Zhang, Y., Peng, X., Wang, L., and Ma, J.
\newblock Reprogramming pretrained target-specific diffusion models for dual-target drug design.
\newblock In \emph{Neural Information Processing Systems}, 2024.

\end{thebibliography}
\bibliographystyle{icml2025}

\newpage
\appendix
\onecolumn
\section{Expectation Estimation under Feynman-Kac PDEs}\label{app:pf_expectations}
We proceed in two steps, first finding a Kolmogorov backward equation corresponding to evolution under a weighted Feynman-Kac SDE.  We then use this identity to derive the expectation estimator in \cref{eq:snis}.   Throughout, we consider the evolution of density $p_t$ defined via the following Feynman-Kac PDE,
\begin{align}
    \frac{\partial}{\partial t} p_t(x_t) = - \inner{\nabla}{p_t(x_t) v_t(x_t)} + \frac{\sigma_t^2}{2}\Delta p_t(x_t) + p_t(x_t) \left( g_t(x_t) - \int g_t(x_t) p_t(x_t) dx_t \right)  \label{eq:fk_pde_appendix0}
\end{align}

Our proof follows similar derivations as in \citet[Prop 4.1, Ch. 4.1.4.3]{lelievre2010free}  (see also \citep{vaikuntanathan2008escorted, vaikuntanathan2011escorted} and references therein), where the authors are interested in sampling from a sequence of unnormalized distributions $\tilde{p}_t$ specified via a time-varying energy or Hamiltonian.  The proofs often rely on Langevin dynamics that leave $p_{t}$ invariant.   We adopt a similar proof technique, but focus directly on simulation with arbitrary $v_t, g_t$ derived via our methods in \cref{sec:method}.

\begin{proposition}\label{prop:unbiased}
  For a bounded test function $\phi: \cX \rightarrow \bbR$ and $p_t$ satisfying \cref{eq:fk_pde_appendix0}, we have
\begin{align}
    & \mathbb{E}_{p_T(x_T)}[ \phi(x_T) ] = \frac{1}{Z_T} \mathbb{E}\left[ e^{\int_0^T g_s(x_s) ds} \phi(x_T) \right]  \label{eq:unbiased_result} \\
    & \text{where} \quad dx_t = v_t(x_t) dt + \sigma_t dW_t, \qquad x_0 \sim p_0 \nonumber 
\end{align}
where $Z_T$ is a normalization constant independent of $x$.
\cref{eq:unbiased_result} which suggests that the self-normalized importance sampling approximation in \cref{eq:snis} is consistent as $K \rightarrow \infty$.
\end{proposition}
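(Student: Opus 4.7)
The plan is to reduce the nonlinear Feynman--Kac PDE in \cref{eq:fk_pde_appendix0} to a linear one by passing to an unnormalized density, and then to apply the standard duality between the (forward) Kolmogorov PDE with a killing/creation term and the (backward) Kolmogorov equation whose solution admits a Feynman--Kac representation along SDE trajectories.

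\noindent\textbf{Step 1: Pass to an unnormalized density.} First I would introduce $\tilde p_t(x) \coloneqq Z_t \, p_t(x)$, where the scalar normalization $Z_t$ is chosen to absorb the nonlinear mean-field term $\mathbb{E}_{p_t}[g_t]$. Differentiating in $t$ and substituting \cref{eq:fk_pde_appendix0} gives
\begin{align*}
\partial_t \tilde p_t = -\inner{\nabla}{\tilde p_t v_t} + \tfrac{\sigma_t^2}{2}\Delta \tilde p_t + g_t \tilde p_t + \tilde p_t\bigl(\partial_t \log Z_t - \mathbb{E}_{p_t}[g_t]\bigr),
\end{align*}
so setting $\partial_t \log Z_t = \mathbb{E}_{p_t}[g_t]$, i.e.\ $Z_t = \exp\bigl(\int_0^t \mathbb{E}_{p_s}[g_s]\,ds\bigr)$ with $Z_0=1$, yields the \emph{linear} forward PDE $\partial_t \tilde p_t = \mathcal{L}_t^\ast \tilde p_t + g_t \tilde p_t$, where $\mathcal{L}_t^\ast$ is the Fokker--Planck adjoint of the generator $\mathcal{L}_t u = \inner{v_t}{\nabla u} + \tfrac{\sigma_t^2}{2}\Delta u$ of the SDE $dx_t = v_t(x_t)\,dt + \sigma_t dW_t$.

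\noindent\textbf{Step 2: Backward equation and duality.} Next I would define, for the test function $\phi$, the function
\begin{align*}
u_t(x) = \mathbb{E}\!\left[\exp\!\Bigl(\int_t^T g_s(x_s)\,ds\Bigr)\phi(x_T)\,\Big|\, x_t = x\right],
\end{align*}
which (by the classical Feynman--Kac formula, assuming $g_t$ bounded above and $\phi$ bounded) solves the backward PDE $\partial_t u_t + \mathcal{L}_t u_t + g_t u_t = 0$ with terminal condition $u_T = \phi$. Then I would differentiate $t \mapsto \int u_t(x) \tilde p_t(x)\,dx$ and substitute both PDEs; after integration by parts to move $\mathcal{L}_t$ onto $\tilde p_t$, the advective, diffusive, and killing contributions cancel exactly, giving $\tfrac{d}{dt}\int u_t \tilde p_t\,dx = 0$. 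Equating the values at $t=0$ and $t=T$ yields
\begin{align*}
\int \phi(x) \tilde p_T(x)\,dx \;=\; \mathbb{E}\!\left[\exp\!\Bigl(\int_0^T g_s(x_s)\,ds\Bigr)\phi(x_T)\right],
\end{align*}
where the right-hand expectation is over the unweighted SDE started at $x_0 \sim p_0$. Dividing by $Z_T$ and using $p_T = \tilde p_T / Z_T$ delivers \cref{eq:unbiased_result}.

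\noindent\textbf{Step 3: Consistency of SNIS.} Finally, specializing to $\phi \equiv 1$ gives $Z_T = \mathbb{E}[\exp(\int_0^T g_s(x_s)\,ds)]$, so the ratio in \cref{eq:unbiased_result} is exactly the limit of the self-normalized importance sampling estimator in \cref{eq:snis}; consistency as $K \to \infty$ then follows from the strong law of large numbers applied separately to numerator and denominator, together with the continuous mapping theorem.

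\noindent\textbf{Main obstacle.} The main technical point is the integration-by-parts step: we must justify vanishing boundary terms at infinity for $\int u_t\, \mathcal{L}_t^\ast \tilde p_t\, dx - \int \tilde p_t\, \mathcal{L}_t u_t\, dx = 0$, which requires mild decay/growth assumptions on $p_0$, $v_t$, $\sigma_t$, and $g_t$ (for example, $g_t$ bounded from above and $\tilde p_t$ with sufficient tail decay so that the weight $\exp(\int_0^T g_s\,ds)$ has finite expectation). Once these regularity conditions are in place the remainder is essentially a bookkeeping exercise.
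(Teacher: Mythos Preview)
Your proposal is correct and follows essentially the same approach as the paper: both pass to the unnormalized density $\tilde p_t$ to linearize the PDE, introduce the backward function $u_t$ (the paper calls it $\Phi_T$) satisfying $\partial_t u_t + \mathcal{L}_t u_t + g_t u_t = 0$, and show $\frac{d}{dt}\int u_t\,\tilde p_t\,dx=0$ via integration by parts before evaluating at the endpoints. The only cosmetic difference is that the paper derives the backward PDE for $u_t$ from scratch via It\^o's lemma rather than invoking the classical Feynman--Kac formula, and it orders the steps backward-then-forward rather than forward-then-backward.
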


\begin{proof}
The proof proceeds in three steps, delineated with bold paragraph headers.    We first derive the backward Kolmogorov equation for appropriate functions, then specify the evolution of the Feynman-Kac PDE for the unnormalized density, before combining these results to prove the result in \cref{prop:unbiased}.

\textbf{Backward PDE:~}
For a given test function $\phi(x)$, consider \textit{defining} the following function
\begin{align}
  \Phi_{T}(x,t) =  \mathbb{E}\left[ 
  e^{\int_{t}^T g_s(x_s) ds} \phi(x_T) 
  \mid x_{t} = x \right] , \qquad  \Phi_{T}(x, T) = \phi(x) \label{eq:phi_def}
\end{align}
where expectations are taken under the evolution of the SDE $dx_t = v_t(x_t) dt + \sigma_t dW_t$. 

In particular, for $\tau > t$, we have
\begin{align}
    \Phi_{T}(x,t) =  \mathbb{E}\left[ 
  e^{\int_{t}^\tau g_s(x_s) ds}e^{\int_{\tau}^T g_s(x_s) ds} \phi(x_T) 
  \mid x_{t} = x \right] =  \mathbb{E}\left[ e^{\int_{t}^\tau g_s(x_s) ds} \Phi_T(x_\tau, \tau)   \mid x_{t} = x  \right] \label{eq:phi_identity}
\end{align}

We will leverage this identity to derive a PDE which $\Phi_T(x,t)$ must satisfy.   Note, to link $\Phi_T(x,t)$ and (the expected value of) $\Phi_T(x_\tau,\tau)$, we should account for the weights $e^{\int_{t}^\tau g_s(x_s)ds}$.   Thus, we apply Ito's product rule and Ito's lemma to capture how $ e^{\int_{t}^\tau g_s(x_s)ds} \Phi_T(x_\tau,\tau)$ evolves with $\tau$,
\begin{align}
    d \left( e^{\int_{t}^\tau g_s(x_s) ds} \Phi_{T}(x_\tau,\tau)  \right) =  e^{\int_{t}^\tau g_s(x_s) ds} d\Phi_{T}(x_\tau,\tau) + \Phi_{T}(x_\tau,\tau) de^{\int_{t}^\tau g_s(x_s) ds} + d\langle \Phi_{T}(x_\tau,\tau), e^{\int_{t}^\tau g(x_s) ds} \rangle
\end{align}
In the final term,  $e^{\int_{t}^\tau g_s(x_s) ds}$ is non-stochastic and, assuming it has finite variation, the term $d\langle  \Phi_{T}(x,t), e^{\int_{t}^\tau g_s(x_s) ds} \rangle $ vanishes.
We can use Ito's lemma to expand $d \Phi_{T}(x_\tau,\tau)$ and simple differentiation for $d e^{\int_{t}^\tau g_s(x_s) ds}$,
\begin{align}
     d  & \left(e^{\int_{t}^\tau g_s(x_s) ds}\Phi_{T}(x_\tau,\tau) \right) =  e^{\int_{t}^\tau g_s(x_s) ds} \left( \deriv{ \Phi_{T}(x_\tau,\tau)}{\tau} + \inner{v_\tau(x_\tau)}{\nabla  \Phi_{T}(x_\tau,\tau)} + \frac{\sigma_\tau^2}{2}\Delta  \Phi_{T}(x_\tau,\tau) \right) d\tau \nonumber \\
    &\phantom{\left(e^{\int_{t}^\tau g_s(x_s) ds}\Phi_{T}(x_\tau,\tau) \right)}
    + e^{\int_{t}^\tau g_s(x_s) ds}  \sigma_t \inner{\nabla  \Phi_{T}(x_\tau,\tau)}{dW_t} +  \Phi_{T}(x_\tau,\tau) e^{\int_{t}^\tau g_s(x_s) ds} \Big(  g_\tau(x_\tau) \Big) d\tau  \\[1.5ex]
    & \quad = e^{\int_{t}^\tau g_s(x_s) ds} \left( \deriv{ \Phi_{T}(x_\tau,\tau)}{t} + \inner{v_t(x)}{\nabla  \Phi_{T}(x_\tau,\tau)} + \frac{\sigma_t^2}{2}\Delta  \Phi_{T}(x_\tau,\tau) +  \Phi_{T}(x_\tau,\tau) g_t(x)  \right) dt \nonumber \\
     &\phantom{===} + e^{\int_{t}^\tau g_s(x_s) ds}  \sigma_t \inner{\nabla  \Phi_{T}(x_\tau,\tau)}{dW_t} \label{eq:last_in_fk_pf} 
\end{align} 
Integrating \cref{eq:last_in_fk_pf} $\tau=t$ to $\tau=T$ and taking expectations under the simulated process from initial point $x_{t} = x$, the stochastic term vanishes and we obtain
\begin{align}
     &\mathbb{E}\left[e^{\int_{t}^T g_s(x_s) ds}  \Phi_{T}(x_T,T) \mid x_{t}=x \right] - \mathbb{E}\left[ e^{\int_{t}^t g_s(x_s) ds}\Phi_T(x,t) \mid x_t = x \right]  \label{eq:pre_pde} \\
     &\qquad =  \mathbb{E}\left[ \int_{\tau=t}^T e^{\int_t^{\tau} g_s(x_s) ds} \left( \deriv{\Phi_T(x_\tau, \tau)}{\tau} + \inner{v_\tau(x)}{\nabla \Phi_T(x_\tau, \tau)} + \frac{\sigma_\tau^2}{2}\Delta \Phi_T(x_\tau, \tau) + \Phi_T(x_\tau, \tau) g_\tau(x)  \right)  d\tau \right]  \nonumber 
\end{align}
Finally, we simplify the first line in \cref{eq:pre_pde}. Considering the definition and endpoint condition in \cref{eq:phi_def}, we have
\begin{align}
\begin{split}
  & \mathbb{E}\left[e^{\int_{t}^T g_s(x_s) ds}  \Phi_{T}(x_T,T) \mid x_{t}=x \right] - \mathbb{E}\left[ e^{\int_{t}^t g_s(x_s) ds}\Phi_T(x,t) \mid x_t = x \right]  \\
   &\quad =\mathbb{E}\left[e^{\int_{t}^T g_s(x_s) ds}  \phi(x_T) \mid x_{t}=x \right] - \Phi_T(x,t) = 0
   \end{split}
\end{align}
by definition in \cref{eq:phi_def}.
Since $e^{\int_t^{\tau} g_s(x_s) ds} > 0$, this implies that the integrand in the second line of \cref{eq:pre_pde} should be zero for any $\tau$.  Thus, we obtain a backward PDE which is often used directly in the statement of the Feynman-Kac formula,
\begin{align}
    \deriv{\Phi_T(x_\tau, \tau)}{\tau} + \inner{v_\tau(x)}{\nabla \Phi_T(x_\tau, \tau)} + \frac{\sigma_\tau^2}{2}\Delta \Phi_T(x_\tau, \tau) + \Phi_T(x_\tau, \tau) g_\tau(x)  = 0 \label{eq:backward_pde}
\end{align}

\paragraph{Evolution of Unnormalized Density}
In practice, we cannot exactly calculate $\int g_t(x_t) p_t(x_t) dx_t$, which appears in the reweighting equation in \cref{eq:reweighting} (or \cref{eq:fk_pde_appendix} below) to ensure normalization.  Eventually, we will account for normalization using \gls{SNIS} as in \cref{eq:snis}.

  For now, consider the evolution of unnormalized density $\tilde{p}_t(x) = p_t(x) Z_t$ for a particular $v_t, \sigma_t, g_t$ and some normalization constant $Z_t$.   With foresight, we define
\begin{align}
    \frac{\partial}{\partial t} \tilde{p}_t(x_t) = - \inner{\nabla}{\tilde{p}_t(x_t) v_t(x_t)} + \frac{\sigma_t^2}{2}\Delta \tilde{p}_t(x_t) + \tilde{p}_t(x_t)  g_t(x_t)  \label{eq:fwd_pde}
\end{align}
which we justify by noting that only the reweighting term does not preserve normalization.   In particular, let
\begin{align}
    \partial_t \log Z_t \coloneqq \int p_t(x) g_t(x) dx. \label{eq:logz_def}
\end{align}
which seems to be a natural candidate from inspecting a general, reweighting-only evolution $\partial_t p_t^{w}(x) = p_t^w(x) \left( g_t(x) - \int p_t^w(x) g_t(x) dx \right)$, which implies $\partial_t \log p_t^{w}(x) = g_t(x) - \int p_t^w(x) g_t(x) dx$.
Defining terms such that $\partial_t \log p_t^{w}(x) = \partial_t \log \tilde{p}_t^w(x) - \partial_t \log Z_t$ yields \cref{eq:logz_def}.
We finally confirm that the definitions in \cref{eq:fwd_pde} and \cref{eq:logz_def} are consistent with the original Feynman-Kac PDE,
\begin{align}
    \frac{\partial}{\partial t} p_t(x_t) = - \inner{\nabla}{p_t(x_t) v_t(x_t)} + \frac{\sigma_t^2}{2}\Delta p_t(x_t) + p_t(x_t) \left( g_t(x_t) - \int g_t(x_t) p_t(x_t) dx_t \right)  \label{eq:fk_pde_appendix}
\end{align}
Namely, since $p_t(x_t) = \tilde{p}_t(x_t) Z_t^{-1}$, the definitions in \cref{eq:fwd_pde}-\labelcref{eq:logz_def} should satisfy
\begin{subequations}
\begin{align}
\frac{\partial}{\partial t} p_t(x_t) &= \frac{\partial}{\partial t}\left( \tilde{p}_t(x_t) Z_t^{-1} \right) \\
&= Z_t^{-1} \frac{\partial}{\partial t} \tilde{p}_t(x_t) + \tilde{p}_t(x_t) Z_t^{-1} ~ \partial_t \log (Z_t^{-1}) \\
&= Z_t^{-1} \frac{\partial}{\partial t} \tilde{p}_t(x_t) - \tilde{p}_t(x_t) Z_t^{-1} ~ \partial_t \log Z_t \\
&= Z_t^{-1} \left(- \inner{\nabla}{\tilde{p}_t(x_t) v_t(x_t)} + \frac{\sigma_t^2}{2}\Delta \tilde{p}_t(x_t) + \tilde{p}_t(x_t)  g_t(x_t)  \right) - \tilde{p}_t(x_t) Z_t^{-1} ~ \int p_t(x_t) g_t(x_t) dx \\
\intertext{Noting that $\nabla_{x_t} Z_t =0$, we can pull  $Z_t^{-1}$ inside differential operators to obtain}
&=   - \inner{\nabla}{\frac{\tilde{p}_t(x_t)}{Z_t} v_t(x_t)} + \frac{\sigma_t^2}{2}\Delta \frac{\tilde{p}_t(x_t)}{Z_t} + \frac{\tilde{p}_t(x_t)}{Z_t}  g_t(x_t)  - \frac{\tilde{p}_t(x_t)}{Z_t} ~ \int p_t(x_t) g_t(x_t) dx  \\
&= - \inner{\nabla}{{p}_t(x_t) v_t(x_t)} + \frac{\sigma_t^2}{2}\Delta {p}_t(x_t) + {p}_t(x_t) \left( g_t(x_t)   -  \int p_t(x_t) g_t(x_t) dx \right)
\end{align} 
\end{subequations}
as desired.

\textbf{Expectation Estimation:~}
Now, we use \cref{eq:backward_pde} to write the total derivative of the following integral under the unnormalized density $\tilde{p}_t(x)$,
\begin{subequations}
\begin{align}
    \frac{d}{dt}\left( \int \Phi_T(x,t) \tilde{p}_t(x) dx\right) &= \int \left( \deriv{\Phi_T(x,t)}{t} \right) \tilde{p}_t(x) dx + \int \Phi_T(x,t) \left(\deriv{\tilde{p}_t(x)}{t} \right) dx \\
    \intertext{Using \cref{eq:backward_pde} and \cref{eq:fwd_pde}, we have }
    &= \int \left( -\inner{v_t(x)}{\nabla \Phi_T(x, t)} - \frac{\sigma_\tau^2}{2}\Delta \Phi_T(x, t) - \Phi_T(x, t) g_\tau(x)\right) \tilde{p}_t(x) dx \\
    &\phantom{=} \quad + \int \Phi_T(x,t) \left(- \inner{\nabla}{\tilde{p}_t(x) v_t(x)} + \frac{\sigma_t^2}{2}\Delta \tilde{p}_t(x) + \tilde{p}_t(x)  g_t(x) \right) dx \nonumber \\
    \intertext{Integrating by parts in the second line, we have}
    &= \int \left( -\inner{v_t(x)}{\nabla \Phi_T(x, t)} - \frac{\sigma_\tau^2}{2}\Delta \Phi_T(x, t) - \Phi_T(x, t) g_\tau(x)\right) \tilde{p}_t(x) dx \\
    &\phantom{=} \quad + \int   \left( \inner{ v_t(x)}{\nabla \Phi_T(x,t)} + \frac{\sigma_t^2}{2}\Delta \Phi_T(x,t) +    \Phi_T(x,t)g_t(x) \right) \tilde{p}_t(x) dx  \nonumber \\
    &= 0
\end{align}
\end{subequations}

Integrating on the interval $t=0$ to $t=T$, we obtain
\begin{align}
  \int \Phi_T(x_T,T) \tilde{p}_T(x_T) dx_T - \int \Phi_T(x_0,0) \tilde{p}_0(x_0) dx_0   = \int_0^T \frac{d}{dt}\left( \int \Phi_T(x,t) \tilde{p}_t(x) dx\right) dt = 0 \label{eq:final_integration}
\end{align}
Thus, we can set these two quantities equal to each other.  Using the identity $\tilde{p}_t(x) = p_t(x) Z_t$ and assuming we initialize simulation with normalized $p_0(x) = \tilde{p}_0(x)$ with $Z_0 =1$, we can finally use the definitions in \cref{eq:phi_def} (namely $\Phi_t(x_T,T) =\phi(x_T)$) to write
\begin{align}
    \int \Phi_T(x_0,0) \tilde{p}_0(x_0) dx_0  &= \int \Phi_T(x_T,T) \tilde{p}_T(x_T) dx_T \\
   Z_0 \int \left( \mathbb{E}[e^{\int_0^T g_s(x_s) ds} \phi(x_T) \mid x_0 ] \right) p_0(x_0) dx_0    &= Z_T \int \phi(x_T) {p}_T(x_T) dx_T \label{eq:phi_expectations} \\
   \frac{1}{Z_T} \mathbb{E}\left[ e^{\int_0^T g_s(x_s) ds} \phi(x_T) \right] &= \mathbb{E}_{p_T(x_T)}[ \phi(x_T) ]
\end{align}
which is the desired identity.
In practice, we could estimate $Z_T \approx \frac{1}{K}\sum_{k=1}^K e^{\int_0^T g_s(x_s^{(k)}) ds} =\frac{1}{K}\sum_{k=1}^K e^{w_T^{(k)}}$ and $\mathbb{E}[e^{\int_0^T g_s(x_s)ds}\phi(x_T)] \approx \frac{1}{K} \sum_{k=1}^K e^{ w_T^{(k)} } \phi(x_T^{(k)})$, which yields \cref{eq:snis}.
\end{proof}

Note that our choice of upper limit  $T$ in $\Phi_T$ was arbitrary, suggesting that we could repurpose the same reasoning for estimating expectations at intermediate $t$ from initialization at time 0.   
This suggests that our samples are properly weighted for estimating expectations and normalization constants $Z_t$ for intermediate $p_t$ \citep{naesseth2019elements}.  

Similarly, changing the lower limit of integration from $t=0$ to intermediate $t$, the analogue of \cref{eq:phi_expectations} suggests estimating expectations using $Z_t \mathbb{E}_{p_t}[\Phi_T(x_t,t)] = Z_T \mathbb{E}_{p_T}[\phi(x_T)]$.  Given properly-weighted particle approximations of $p_t, Z_t$, we can continue calculating the appropriate weights along the trajectory to estimate $Z_T$ or terminal expectations under $p_T$.  These arguments can be similarly adapted to justify SMC resampling at intermediate steps, as we do in practice (\cref{sec:smc}).



\newcommand{\cLfk}{\cL_{t,\textsc{fk}}^{(v,\sigma,g)}}
\newcommand{\cLfkag}{\cL^{*\textsc{fk}(g)}}
\newcommand{\cLfkg}{\cL^{\textsc{fk}(g)}}
\section{Feynman-Kac Processes}
\subsection{Markov Generators for Feynman-Kac Processes}
In \cref{sec:background}, we described the adjoint generators $\cL^{*(v)}_t\genof{\meas_t}, \cL^{*(\sigma)}_t\genof{\meas_t}, \cL^{*(g)}_t\genof{\meas_t}$ corresponding to flows with vector field $v_t$, diffusions with coefficient $\sigma_t$, and reweighting with respect to $g_t$.   In particular, the Kolmogorov forward equation $\frac{\partial \meas_t}{\partial t}(x) = \cL_t^{*}\genof{\meas_t}(x)$ corresponds to our PDEs presented in \cref{eq:continuity,eq:fpe,eq:reweighting}.
In the lemma below, we recall the generators which are adjoint to those in \cref{sec:background} and operate over smooth, bounded test functions with compact support, e.g. $\cL^{(v)}_t\genof{\phi}$.
\begin{lemma}[Adjoint Generators] \label{lemma:adjoints} Using the identity $ \int \phi(x) ~ \cL_t^{*}\genof{\meas_t}(x) ~ dx 
    = \int \cL_t\genof{\phi}(x) ~ \meas_t(x) ~  dx$ 
\begin{alignat}{3}
\textbf{Flow:} &\qquad &&\cL_t^{(v)}\genof{\phi}(x) = \langle \nabla \phi(x), v_t(x) \rangle \qquad  && \cL^{*(v)}_t\genof{\meas_t}(x) = - \langle \nabla, \meas_t(x) ~ v_t(x) \rangle \nonumber \\
\textbf{Diffusion:} &\qquad && \cL_t^{(\sigma)}\genof{\phi}(x) = \frac{\sigma_t^2}{2} \Delta \phi(x) \qquad &&\cL_t^{*(\sigma)}\genof{\meas_t}(x) = \frac{\sigma_t^2}{2}\meas_t(x) \label{eq:generators} \\
\textbf{Reweighting:} &\qquad &&{\cL_t^{(g,\meas)}\genof{\phi}(x) = \phi_t(x) \left(g_t(x) - \int g_t(x) ~ \meas_t(x) ~ dx \right)} 
\qquad &&\cL_t^{*(g)}\genof{\meas_t}(x)= \meas_t(x) \left(g_t(x) - \int g_t(x) ~ \meas_t(x) dx \right) \nonumber 
\end{alignat}
\end{lemma}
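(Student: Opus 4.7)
The plan is to verify each of the three adjoint identities separately by direct computation, relying on the standing assumption that the test functions $\phi$ are smooth and compactly supported (so all boundary terms from integration by parts vanish) and that $\meas_t$ is a sufficiently regular density. In each case the task reduces to moving derivatives from the density onto the test function (or vice versa) using the divergence theorem.

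For the flow generator, the plan is to start with $\int \phi(x) \, \cL_t^{*(v)}\genof{\meas_t}(x)\, dx = -\int \phi(x) \langle \nabla, \meas_t(x) v_t(x)\rangle\, dx$ and apply integration by parts once. The boundary term vanishes by compact support of $\phi$, yielding $\int \langle \nabla \phi(x), \meas_t(x) v_t(x)\rangle\, dx = \int \langle \nabla \phi(x), v_t(x)\rangle \meas_t(x)\, dx$, which matches $\int \cL_t^{(v)}\genof{\phi}(x) \meas_t(x)\, dx$. For the diffusion generator, the plan is the same but with integration by parts applied twice: $\int \phi(x) \cdot \tfrac{\sigma_t^2}{2} \Delta \meas_t(x)\, dx = \tfrac{\sigma_t^2}{2}\int \Delta \phi(x) \meas_t(x)\, dx$, using Green's identity and again discarding vanishing boundary terms. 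This is just the standard self-adjointness of the Laplacian on $L^2$ under these regularity conditions.

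For the reweighting generator no integration by parts is needed because no spatial derivatives appear. The plan is to simply observe that $\cL_t^{*(g)}\genof{\meas_t}(x) = \meas_t(x)\bigl(g_t(x) - \int g_t(y)\meas_t(y)\, dy\bigr)$ acts as multiplication by the mean-zero (with respect to $\meas_t$) function $g_t - \mathbb{E}_{\meas_t}[g_t]$. Pairing with $\phi$, one gets
\begin{align*}
\int \phi(x) \meas_t(x)\bigl(g_t(x) - \mathbb{E}_{\meas_t}[g_t]\bigr) dx = \int \phi(x)\bigl(g_t(x) - \mathbb{E}_{\meas_t}[g_t]\bigr) \meas_t(x)\, dx,
\end{align*}
which is trivially $\int \cL_t^{(g,\meas)}\genof{\phi}(x)\meas_t(x)\, dx$ as defined. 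So in this case the operator is essentially self-adjoint under the duality pairing.

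The calculation itself is routine; the only subtle point to flag is that the reweighting operator is \emph{nonlinear} in $\meas_t$ (through the expectation term), so the use of the word ``adjoint'' requires care. What is really being verified is that $\cL_t^{(g,\meas)}$ is the adjoint of $\cL_t^{*(g)}$ with the measure-dependent pairing held fixed at the same $\meas_t$ that appears inside the operator. This should be stated explicitly in the proof, but beyond that clarification no real obstacle arises. The same caveat applies to combining the three operators into the full Feynman-Kac adjoint pair, which follows by linearity.
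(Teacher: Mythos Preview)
Your proposal is correct and follows essentially the same approach as the paper: integration by parts for the flow and diffusion cases (the paper simply cites a reference for these), and a direct rearrangement for the reweighting case. Your remark about the measure-dependence of the reweighting ``adjoint'' also parallels the paper's closing note that attention is restricted to the specific pair $\cL_t^{*(g)}\genof{\meas_t}, \cL_t^{(g,\meas)}\genof{\phi}$ with the same $\meas_t$ appearing in both.
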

\begin{proof}  The proofs for flows and diffusions follow using integration by parts, with proofs found in, for example \citet[Sec. A.5]{holderrieth2024generator}.   For the reweighting generator, we have
\small 
\begin{align*}
    \int \phi(x) \cL_t^{*(g)}\genof{\meas_t}(x)  dx 
    &= \int \phi(x) \left( \meas_t(x) \left(g_t(x) - \int g_t(y) ~ \meas_t(y) dy \right) \right)  dx \\
    &= \int \meas_t(x) \left( \phi(x) \left( g_t(x) -\int g_t(y) ~ \meas_t(y) dy \right)\right) dx \\
    &\eqcolon \int \meas_t (x) ~ \cL_t^{(g,\meas)}\genof{\phi}(x) ~ dx
\end{align*}
\normalsize
Note that the weights $g_t$ are often chosen in relation to the unnormalized density of $\meas_t$  (\citet[Sec. 4]{lelievre2010free}), and our attention will be focused on the pair of generator actions $\cL_t^{*(g)}\genof{\meas_t},  \cL_t^{(g,\meas)}\genof{\phi}$ for possibly time-dependent $\phi$.
\end{proof}

\vspace*{-.2cm}
\vheader 
\subsection{Jump Process Interpretation of Reweighting}\label{app:reweighting_as_jump}
One way to perform simulation of the reweighting equation will be to rewrite it in terms of a jump process.
We first recall the definition of the Markov generator of a jump process (\citet[4.2]{ethier2009markov}, \citet[1.1]{del2013mean}, \citet[A.5.3]{holderrieth2024generator}) and derive its adjoint generator.

\begin{lemma}[Jump Process Generators]  Using the definition of the jump process generator and the identity $ \int \phi(x) ~ \cJ_t^{*}\genof{\meas_t}(x) ~ dx = \int \cJ_t\genof{\phi}(x) ~ \meas_t(x) ~  dx$.   Letting $W_t(x,y) = \lambda_t(x) \transition(y|x)$ for normalized $\transition(y|x)$,
\begin{subequations}
\begin{align}
    \text{\textbf{Jump Process:}} \qquad  \cJ^{(W)}_t[\phi](x) &\coloneqq  \int \Big( \phi(y)- \phi(x) \Big) \lambda_t(x) \transition(y|x) dy \\
    \qquad  \cJ_t^{*(W)}\genof{\meas_t}(x) &= \left(\int \lambda_t(y) \transition(x|y)  \meas_t(y) dy \right) -   \meas_t(x) \lambda_t(x)    
    \label{eq:jump} 
\end{align}
\end{subequations}
\end{lemma}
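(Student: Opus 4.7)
My plan is to treat the first equation as the definition of the Markov generator of a jump process with rate $\lambda_t(x)$ and transition kernel $\transition(y|x)$, so that the only substantive claim to prove is the formula for the adjoint $\cJ_t^{*(W)}\genof{\meas_t}(x)$. I would derive it by starting from the duality identity stated in the lemma and manipulating the double integral that arises when $\cJ^{(W)}_t\genof{\phi}(x)$ is substituted.

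Concretely, I would expand
\begin{align*}
\int \cJ^{(W)}_t\genof{\phi}(x)\,\meas_t(x)\,dx
 = \int\!\!\int \bigl(\phi(y)-\phi(x)\bigr)\,\lambda_t(x)\,\transition(y|x)\,\meas_t(x)\,dy\,dx,
\end{align*}
and split it into an outflow piece and an inflow piece. For the outflow piece, $\phi(x)$ pulls out of the inner $y$-integral and the normalization $\int \transition(y|x)\,dy=1$ collapses it to $-\int \phi(x)\,\lambda_t(x)\,\meas_t(x)\,dx$. For the inflow piece, I would apply Fubini to swap the order of integration and relabel the dummy variables $y\leftrightarrow x$, converting it into $\int \phi(x)\bigl(\int \lambda_t(y)\,\transition(x|y)\,\meas_t(y)\,dy\bigr)dx$. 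Combining the two pieces and identifying $\cJ_t^{*(W)}\genof{\meas_t}(x)$ as the coefficient of $\phi(x)$ under the outer $dx$-integral yields exactly the claimed expression.

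The only technical point to justify is Fubini, which is automatic for smooth, compactly supported test functions $\phi$ as long as $\lambda_t$ is locally bounded and $\transition(\cdot|x)$ is a genuine probability kernel, so I do not anticipate any real obstacle. The main conceptual remark worth including in the write-up is the inflow/outflow reading of the two resulting terms: probability mass enters the state $x$ from states $y$ at rate $\lambda_t(y)\transition(x|y)\meas_t(y)$ and leaves $x$ at rate $\lambda_t(x)\meas_t(x)$, which mirrors the continuity-style bookkeeping used earlier for the Fokker--Planck setting and makes the adjoint formula transparent rather than magical.
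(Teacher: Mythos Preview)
Your proposal is correct and follows essentially the same approach as the paper's own proof: substitute the definition of $\cJ^{(W)}_t\genof{\phi}$ into the duality pairing, split into the $\phi(y)$ and $\phi(x)$ pieces, collapse the latter using $\int \transition(y|x)\,dy=1$, swap integration order and relabel variables in the former, then read off the coefficient of $\phi(x)$. The only cosmetic difference is that the paper keeps the explicit factor $\int \transition(y|x)\,dy$ until the very last line before invoking normalization, whereas you collapse it earlier; your added inflow/outflow commentary mirrors the paper's discussion in the surrounding text.
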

\begin{proof}  Through simple manipulations and changing the variables of integration, we obtain
    \begin{align*}
        \int \phi(x) ~ \cJ_t^{*}\genof{\meas_t}(x) ~ dx &= \int \cJ_t\genof{\phi}(x) ~ \meas_t(x) ~  dx \\
        &= \int \left( \int \Big( \phi(y)- \phi(x) \Big) \lambda_t(x) \transition(y|x) dy  \right) \meas_t(x) ~ dx  \\
        &=\int \int \phi(y) \lambda_t(x) \transition(y|x) \meas_t(x) ~ dy dx - \int \int \phi(x) \lambda_t(x) \transition(y|x)  \meas_t(x) ~ dy  dx \\
        &= \int \int \phi(x) \lambda_t(y) \transition(x|y) \meas_t(y) ~ dx dy - \int \int \phi(x) \lambda_t(x) \transition(y|x) \meas_t(x) ~ dy  dx \\
        &= \int \phi(x) \left(  \left(\int \lambda_t(y) \transition(x|y)  \meas_t(y) dy \right) -   \meas_t(x) \lambda_t(x) \left( \int \transition(y|x) dy\right)      \right) dx \\
   \implies \quad \cJ_t^{*}\genof{\meas_t}(x) &=  \left(\int \lambda_t(y) \transition(x|y)  \meas_t(y) dy \right) -   \meas_t(x) \lambda_t(x)
    \end{align*}
    using the assumption that $\transition(y|x)$ is normalized.  
\end{proof}

\paragraph{Reweighting $\rightarrow$ Jump Process} Our goal is to derive a jump process such that the adjoint generators are equivalent $ \cJ_t^{*(W)}\genof{\meas_t}(x)  = \cL_t^{*(g)}\genof{\meas_t}(x)$ for a given reweighting generator with weights $g_t$ (\cref{eq:generators}).

While \citet{del2013mean, angeli2020interacting} emphasize the freedom of choice in such generators,\footnote{For example, see \citet{rousset2006control, rousset2006equilibrium} for a particular instantiation combining separate birth and death processes.} Sec. 4 of \citep{angeli2019rare} argues for a particular choice to reduce the expected number of resampling events.   To define this process, consider the following thresholding operations,
\begin{align}
    \left( u \right)^{-} \coloneqq \max(0, -u) \qquad \left( u \right)^{+} \coloneqq \max(0, u), \qquad \text{which satisfy: }~~ \left( u \right)^{+}  - \left( u \right)^{-}  = u .\label{eq:thresh_def}
\end{align}
We can now define the Markov generator using 
\begin{align}
  W_t(x,y) =\lambda_t(x) \transition(y|x) \qquad    \lambda_t(x) &\coloneqq \Big( g_t(x) - \mathbb{E}_{\meas_t}[g_t] \Big)^{-} \qquad \transition(y|x) \coloneqq \frac{\left( g_t(y) - \mathbb{E}_{\meas_t}[g_t]\right)^{+} \meas_t(y)}{\int \left( g_t(z) - \mathbb{E}_{\meas_t}[g_t]\right)^{+} \meas_t(z) dz} \label{eq:our_choice}
\end{align}
Since jump events are triggered based on $\lambda_t(x_t) = ( g_t(x) - \mathbb{E}_{\meas_t}[g_t])^{-}$ and are more likely to transition to events with high excess weight $( g_t(y) - \mathbb{E}_{\meas_t}[g_t])^{+}\meas_t(y) $, we expect this process to improve the sample population in efficient fashion \citep{angeli2019rare}.   

\begin{mybox} 
\begin{proposition} For a given weighting function $g_t$ and the adjoint generator $\cL_t^{*(g)}$, the adjoint generator $\cJ_t^{*(W)}$ derived using in \cref{eq:our_choice} satisfies $\cJ_t^{*(W)}\genof{\meas_t}(x)  = \cL_t^{*(g)}\genof{\meas_t}(x)$.  More explicitly, we have
\begin{align}
    \cL_t^{*(g)}\genof{\meas_t}(x) &= \cJ_t^{*(W)}\genof{\meas_t}(x) \label{eq:equiv_fk_jump_prop}\\
    \medmath{ \meas_t(x) \left(g_t(x) - \int g_t(x) ~ \meas_t(x) dx \right)} &= \medmath{  \left(\int \Big( g_t(y) - \mathbb{E}_{\meas_t}[g_t] \Big)^{-}  \frac{\left( g_t(x) - \mathbb{E}_{\meas_t}[g_t]\right)^{+} \meas_t(x)}{\int \left( g_t(z) - \mathbb{E}_{\meas_t}[g_t]\right)^{+} \meas_t(z) dz}  \meas_t(y) dy \right) -   \meas_t(x) \Big( g_t(x) - \mathbb{E}_{\meas_t}[g_t] \Big)^{-}}. \nonumber
\end{align}
\end{proposition}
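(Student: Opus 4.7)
The plan is a direct computation: substitute the proposed rate $\lambda_t$ and kernel $\transition$ from \cref{eq:our_choice} into the adjoint jump generator $\cJ_t^{*(W)}\genof{\meas_t}(x)$ given by \cref{eq:jump}, and simplify until the expression collapses to $\cL_t^{*(g)}\genof{\meas_t}(x)$. The only nontrivial ingredient is a \textbf{mass-balance identity} on the centered weights: since $g_t - \mathbb{E}_{\meas_t}[g_t]$ integrates to zero against $\meas_t$, its positive and negative parts have equal integrals,
\begin{align*}
\int \bigl( g_t(z) - \mathbb{E}_{\meas_t}[g_t] \bigr)^{+} \meas_t(z)\, dz \;=\; \int \bigl( g_t(z) - \mathbb{E}_{\meas_t}[g_t] \bigr)^{-} \meas_t(z)\, dz,
\end{align*}
which follows from the defining identity $(u)^+ - (u)^- = u$ in \cref{eq:thresh_def}.

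First, I would compute the inflow term $\int \lambda_t(y)\transition(x|y)\meas_t(y)\,dy$. Plugging in the definitions, the $y$-dependence factors out of the fraction, giving
\begin{align*}
\int \lambda_t(y)\transition(x|y)\meas_t(y)\,dy \;=\; \frac{\bigl( g_t(x) - \mathbb{E}_{\meas_t}[g_t]\bigr)^{+} \meas_t(x) \,\cdot\, \int \bigl( g_t(y) - \mathbb{E}_{\meas_t}[g_t]\bigr)^{-} \meas_t(y)\,dy}{\int \bigl( g_t(z) - \mathbb{E}_{\meas_t}[g_t]\bigr)^{+} \meas_t(z)\,dz}.
\end{align*}
By the mass-balance identity, the ratio of integrals equals $1$, so the inflow reduces to $\bigl( g_t(x) - \mathbb{E}_{\meas_t}[g_t]\bigr)^{+} \meas_t(x)$.

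Second, the outflow term is, by definition, $\meas_t(x)\lambda_t(x) = \meas_t(x)\bigl( g_t(x) - \mathbb{E}_{\meas_t}[g_t]\bigr)^{-}$. Subtracting and applying $(u)^+ - (u)^- = u$ yields
\begin{align*}
\cJ_t^{*(W)}\genof{\meas_t}(x) \;=\; \meas_t(x)\Bigl[\bigl( g_t(x) - \mathbb{E}_{\meas_t}[g_t]\bigr)^{+} - \bigl( g_t(x) - \mathbb{E}_{\meas_t}[g_t]\bigr)^{-}\Bigr] \;=\; \meas_t(x)\bigl( g_t(x) - \mathbb{E}_{\meas_t}[g_t]\bigr),
\end{align*}
which is precisely $\cL_t^{*(g)}\genof{\meas_t}(x)$ from \cref{lemma:adjoints}, establishing \cref{eq:equiv_fk_jump_prop}.

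The main (modest) obstacle is less a computation than a conceptual check: one must verify that $\transition(y|x)$ is a valid transition kernel, i.e.\ nonnegative and normalized in $y$. Nonnegativity is immediate from $(\cdot)^{+}\geq 0$ and $\meas_t\geq 0$, and normalization is built into the denominator, provided $\int (g_t(z) - \mathbb{E}_{\meas_t}[g_t])^{+}\meas_t(z)\,dz > 0$. The degenerate case where this integral vanishes corresponds to $g_t$ being $\meas_t$-a.s.\ constant (so $\cL_t^{*(g)}\genof{\meas_t} \equiv 0$ as well), in which case setting $\lambda_t \equiv 0$ trivially matches both sides and no jumps occur. Aside from handling this boundary case, the proof is a one-line application of the mass-balance identity.
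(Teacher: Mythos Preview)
Your proposal is correct and follows essentially the same approach as the paper's proof: both expand the adjoint jump generator, invoke the mass-balance identity $\int (g_t - \mathbb{E}_{\meas_t}[g_t])^+ \meas_t = \int (g_t - \mathbb{E}_{\meas_t}[g_t])^- \meas_t$ to cancel the ratio of integrals, apply $(u)^+ - (u)^- = u$, and separately dispatch the degenerate case where $g_t$ is $\meas_t$-a.s.\ constant. The paper organizes the argument as two explicit cases whereas you treat the degenerate case as a boundary remark, but the logic is identical.
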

\end{mybox}
\begin{proof} We start by expanding the definition of $ \cJ_t^{*(W)}\genof{\meas_t}(x)$
\begin{subequations}
\begin{align}
     \cJ_t^{*(W)}\genof{\meas_t}(x)  &=   \left(\int \lambda_t(y) \transition(x|y)  \meas_t(y) dy \right) -   \meas_t(x) \lambda_t(x)   \\
     &= \left( \int \Big( g_t(y) - \mathbb{E}_{\meas_t}[g_t] \Big)^{-} \frac{\left( g_t(x) - \mathbb{E}_{\meas_t}[g_t]\right)^{+}\meas_t(x)}{\int \left( g_t(z) - \mathbb{E}_{\meas_t}[g_t]\right)^{+} \meas_t(z) dz} \meas_t(y) dy\right) - \meas_t(x) \Big( g_t(x) - \mathbb{E}_{\meas_t}[g_t] \Big)^{-} \\
     &= \left( \int \Big( g_t(y) - \mathbb{E}_{\meas_t}[g_t] \Big)^{-}  \meas_t(y) dy\right) \left(  \frac{\left( g_t(x) - \mathbb{E}_{\meas_t}[g_t]\right)^{+}\meas_t(x)}{\int \left( g_t(z) - \mathbb{E}_{\meas_t}[g_t]\right)^{+} \meas_t(z) dz}\right) - \meas_t(x) \Big( g_t(x) - \mathbb{E}_{\meas_t}[g_t] \Big)^{-} \\
     &= \left(\frac{ \int \left( g_t(y) - \mathbb{E}_{\meas_t}[g_t] \right)^{-}  \meas_t(y) dy}{\int \left( g_t(z) - \mathbb{E}_{\meas_t}[g_t]\right)^{+} \meas_t(z) dz}  \right)
      \meas_t(x) \Big( g_t(x) - \mathbb{E}_{\meas_t}[g_t]\Big)^{+} - \meas_t(x) \Big( g_t(x) - \mathbb{E}_{\meas_t}[g_t] \Big)^{-}
      \label{eq:divisor_in_proof}
\end{align}
\end{subequations}
Using \cref{eq:thresh_def}, note that
\begin{align}
    \int  \Big( g_t(z) - \mathbb{E}_{\meas_t}[g_t]\Big)^{+} \meas_t(z) dz - \int d\meas_t(z) \Big( g_t(z) - \mathbb{E}_{\meas_t}[g_t]\Big)^{-} = \int  \left( g_t(z) - \mathbb{E}_{\meas_t}[g_t] \right) \meas_t(z) dz = 0  \label{eq:integrals_equal}
\end{align}
which implies $\int  ( g_t(z) - \mathbb{E}_{\meas_t}[g_t])^{+} \meas_t(z) dz = \int ( g_t(z) - \mathbb{E}_{\meas_t}[g_t])^{-} \meas_t(z) dz$.   We proceed in two cases, handling separately the trivial case where the denominator in \cref{eq:divisor_in_proof} is zero.

\textit{Case 1 ($\lambda_t(x)=0$ $\forall z \in \text{supp}(\meas_t)$):}  \quad Note that $\int  \big( g_t(z) - \mathbb{E}_{\meas_t}[g_t]\big)^{-} \meas_t(z)dz =0$ if and only if $g_t(z) = \mathbb{E}_{\meas_t}[g_t], ~ \forall z$, since $(u)^-\geq 0$.
In this case, the generators become trivial and we can confirm
\small 
\begin{align}
\begin{split}
   \cL_t^{*(g)}\genof{\meas_t}(x) &=  \meas_t(x) \left(g_t(x) - \int g_t(x) ~ \meas_t(x) dx \right) = \meas_t(x) (\mathbb{E}_{\meas_t}[g_t]-\mathbb{E}_{\meas_t}[g_t]) =0 \\
   \cJ_t^{*(W)}\genof{\meas_t}(x) &=   \int 0 \cdot 0 ~ \meas_t(y) dy - \meas_t(x) \cdot 0 = 0
   \end{split}
\end{align}
\normalsize
and thus \cref{eq:equiv_fk_jump_prop} holds, as desired.

\textit{Case 2 ($\exists x \in \text{supp}(\meas_t) \text{ s.t. } \lambda_t(x) > 0$):} \quad 
Under the assumption, $\exists x \in \text{supp}(\mu_t) \text{ s.t. }  \big( g_t(x) - \mathbb{E}_{\meas_t}[g_t]\big)^{-} >0$.  This implies $\int  \big( g_t(z) - \mathbb{E}_{\meas_t}[g_t]\big)^{-} \meas_t(z) dz = \int  \big( g_t(z) - \mathbb{E}_{\meas_t}[g_t]\big)^{+} \meas_t(z) dz > 0$.

In this case, we can conclude using \cref{eq:integrals_equal} that $\frac{\int d\meas_t(z) \big( g_t(z) - \mathbb{E}_{\meas_t}[g_t]\big)^{-}}{\int d\meas_t(z) \big( g_t(z) - \mathbb{E}_{\meas_t}[g_t]\big)^{+}}=1$.  

Continuing from \cref{eq:divisor_in_proof}
\begin{subequations}
    \begin{align}
     \cJ_t^{*(W)}\genof{\meas_t}(x) 
     &= \left(\frac{ \int \left( g_t(y) - \mathbb{E}_{\meas_t}[g_t] \right)^{-}  \meas_t(y) dy}{\int \left( g_t(z) - \mathbb{E}_{\meas_t}[g_t]\right)^{+} \meas_t(z) dz}  \right)
      \meas_t(x) \Big( g_t(x) - \mathbb{E}_{\meas_t}[g_t]\Big)^{+} - \meas_t(x) \Big( g_t(x) - \mathbb{E}_{\meas_t}[g_t] \Big)^{-}
      \label{eq:divisor_in_proof2}\\
      &= \meas_t(x) \left( \Big( g_t(x) - \mathbb{E}_{\meas_t}[g_t]\right)^{+} -  \Big( g_t(x) - \mathbb{E}_{\meas_t}[g_t] \Big)^{-}\Big) \\
      &=  \meas_t(x) \left(  g_t(x) - \mathbb{E}_{\meas_t}[g_t] \right) \\
      &= \cL_t^{*(g)}\genof{\meas_t}(x)
\end{align}
\end{subequations}
as desired.   Note that, in the second to last line, we used the identity in \cref{eq:thresh_def} that $\left( u \right)^{+}  - \left( u \right)^{-}  = u$.
\end{proof}

\subsection{Simulation Schemes}\label{app:simulation}
In practice, we use an empirical mean over $K$ particles with as an approximation to the expectation $\mathbb{E}_{\meas_t}[g_t]$, with
\begin{align}
  \hspace*{-.22cm} \Big( g_t(x^{(k)}) - \mathbb{E}_{\meas_t}[g_t]\Big)^{-} \approx \Big( g_t(x^{(k)}) - \frac{1}{K}\sum_{i=1}^K g_t(x^{(i)}) \Big)^{-},  \quad    \Big( g_t(x^{(k)}) - \mathbb{E}_{\meas_t}[g_t]\Big)^{+} \approx \Big( g_t(x^{(k)}) - \frac{1}{K}\sum_{i=1}^K g_t(x^{(i)}) \Big)^{+}  \nonumber 
\end{align}
See \citet[Sec. 5.4]{del2013mean} for discussion.

\paragraph{Discretization of the Continuous-Time Jump Process}
To simulate a jump process with generator $\cJ_t^{(\transitiononly,\meas)}\genof{\phi}$, we can consider the following infinitesimal sampling procedure (\citet[Ch. 12]{gardiner2009stochastic}; \citet{davis1984piecewise, holderrieth2024generator}).   
With rate $\lambda_t(x) = \Big( g_t(x) - \mathbb{E}_{\meas_t}[g_t] \Big)^{-}$, the particle jumps to a new configuration,
\begin{align*}
    x_{t+dt} = \begin{cases} x_t &\text{with probability }  ~ 1 - dt\cdot  \lambda_t(x_t) + o(dt) \\
    y_{t+dt} \sim \text{Categorical}\Bigg\{\dfrac{ \Big( g_t(x^{(k)}) - \frac{1}{K}\sum_{i=1}^K g_t(x^{(i)}) \Big)^{+}}{\sum_{j=1}^K \Big( g_t(x^{(j)}) - \frac{1}{K}\sum_{i=1}^K g_t(x^{(i)}) \Big)^{+}}\Bigg\}_{k=1}^K &\text{with probability } ~  dt\cdot  \lambda_t(x_t) + o(dt)\end{cases}\label{eq:resample_ode}
\end{align*}
The new configuration is sampled according to an empirical approximation of $\transition(y|x)$ using $\meas_t^K(y) = \frac{1}{K} \sum_{k=1}^K \delta_{y}(x^{(k)})$, where the outer $\frac{1}{K}$ factor cancels.

Note that the jump rate is zero for particles with $g_t(x) \geq  \mathbb{E}_{\meas_t}[g_t]$.   Resampling a new particle proportional to $( g_t(x^{(k)}) - \frac{1}{K}\sum_j g_t(x^{(j)}) )^{+}$ thus promotes the replacement of low importance-weight samples with more promising samples. 


\paragraph{Interacting Particle System} Following \citet[Sec 5.4]{del2013mean}, the process may also be simulated using `exponential clocks'.  
In particular, we sample an exponential random variable with rate 1, $\tau^{(k)} \sim \text{exponential}(1)$ as the time when the next jump event will occur (see \citet[Ch. 12]{gardiner2009stochastic}).  We record artificial time by accumulating the rate function $\lambda_{t_{\text{last}}:s} = \sum_{t=t_{\text{last}}}^s \lambda_t(x_t) dt$ for samples $x_t$ along our simulated diffusion.   Upon exceeding the threshold time $\lambda_{t_{\text{last}}:s}^{(k)} \geq \tau^{(k)}$, we sample a transition according the empirical approximaton of $\transition(y|x)$ in \cref{eq:resample_ode}.
We report results using this scheme in \cref{app:gmm_results} \cref{tab:gmm_results}, but found it to underperform relative to systematic resampling in these initial experiments.

\renewcommand{\thefigure}{A\arabic{figure}}  
\renewcommand{\thetable}{A\arabic{table}}    
\setcounter{figure}{0}  
\setcounter{table}{0}   

\section{Proofs for \cref{tab:conversion_rules}}\label{app:conversion_rules}
\subsection{Annealing}

\begin{mybox}
\begin{proposition}[Annealed Continuity Equation]
\label{prop:anneled_conteq}
Consider the marginals generated by the continuity equation
\begin{align}
    \deriv{q_t(x)}{t} = -\inner{\nabla}{q_t(x)v_t(x)}\,.
\end{align}
The marginals $p_{t,\beta}(x) \propto q_t^\beta(x)$ satisfy the following PDE
\begin{align}
    \deriv{}{t} p_{t,\beta}(x) =~& -\inner{\nabla}{p_{t,\beta}(x) v_t(x)} + p_{t,\beta}(x)\left[g_t(x) - \mean_{p_{t,\beta}}g_t(x)\right]\,,\\
    g_t(x) =~& -(\beta-1)\inner{\nabla}{v_t(x)}\,.
\end{align}
\end{proposition}
\end{mybox}
\begin{proof}
We want to find the partial derivative of the annealed density
\begin{align}
    p_{t,\beta}(x) = \frac{q_t(x)^{\beta}}{\int dx\; q_t(x)^{\beta}}\,,\;\; \deriv{}{t}p_{t,\beta}(x) = ?
\end{align}
By the straightforward calculations we have
\begin{align}
    \deriv{}{t} \log p_{t,\beta} =~& \beta \deriv{}{t} \log q_t - \int dx\; p_{t,\beta} \beta\deriv{}{t} \log q_t \\
    =~& -\beta\inner{\nabla}{v_t} - \beta\inner{\nabla\log q_t}{v_t} - \int dx\; p_{t,\beta}\left[ -\beta\inner{\nabla}{v_t} - \beta\inner{\nabla\log q_t}{v_t}\right]\\
    =~& -\inner{\nabla}{v_t} - \inner{\nabla\log p_{t,\beta}}{v_t} + (1-\beta)\inner{\nabla}{v_t} - \int dx\; p_{t,\beta}\left[ -\beta\inner{\nabla}{v_t} - \inner{\nabla\log p_{t,\beta}}{v_t}\right]\\
    =~& -\inner{\nabla}{v_t} - \inner{\nabla\log p_{t,\beta}}{v_t} + (1-\beta)\inner{\nabla}{v_t} - \int dx\; p_{t,\beta}\left[ (1-\beta)\inner{\nabla}{v_t}\right]\,.
\end{align}
Thus, we have
\begin{align}
    \deriv{}{t} p_{t,\beta}(x) = -\inner{\nabla}{p_{t,\beta}(x) v_t(x)} + p_{t,\beta}(x)\left[(1-\beta)\inner{\nabla}{v_t(x)} - \mean_{p_{t,\beta}}(1-\beta)\inner{\nabla}{v_t(x)}\right]\,,
\end{align}
which can be simulated as
\begin{align}
    dx_t =~& v_t(x_t) dt\,,\\
    dw_t =~& -(\beta-1)\inner{\nabla}{v_t(x_t)}dt\,.
\end{align}
\end{proof}

\begin{mybox}
\begin{proposition}[Scaled Annealed Continuity Equation]
\label{prop:anneled_conteq_scaled}
Consider the marginals generated by the continuity equation
\begin{align}
    \deriv{q_t(x)}{t} = -\inner{\nabla}{q_t(x)v_t(x)}\,.
\end{align}
The marginals $p_{t,\beta}(x) \propto q_t^\beta(x)$ satisfy the following PDE
\begin{align}
    \deriv{}{t} p_{t,\beta}(x) =~& -\inner{\nabla}{p_{t,\beta}(x) \beta v_t(x)} + p_{t,\beta}(x)\left[g_t(x) - \mean_{p_{t,\beta}}g_t(x)\right]\,,\\
    g_t(x) =~& (\beta-1)\inner{\nabla\log p_{t,\beta}(x)}{v_t(x)}\,.
\end{align}
\end{proposition}
\end{mybox}
\begin{proof}
We want to find the partial derivative of the annealed density
\begin{align}
    p_{t,\beta}(x) = \frac{q_t(x)^{\beta}}{\int dx\; q_t(x)^{\beta}}\,,\;\; \deriv{}{t}p_{t,\beta}(x) = ?
\end{align}
By the straightforward calculations we have
\begin{align}
    \deriv{}{t} \log p_{t,\beta} =~& \beta \deriv{}{t} \log q_t - \int dx\; p_{t,\beta} \beta\deriv{}{t} \log q_t \\
    =~& -\beta\inner{\nabla}{v_t} - \beta\inner{\nabla\log q_t}{v_t} - \int dx\; p_{t,\beta}\left[ -\beta\inner{\nabla}{v_t} - \beta\inner{\nabla\log q_t}{v_t}\right]\\
    =~& -\inner{\nabla}{\beta v_t} - \inner{\nabla\log p_{t,\beta}}{v_t} - \int dx\; p_{t,\beta}\left[ -\beta\inner{\nabla}{v_t} - \inner{\nabla\log p_{t,\beta}}{v_t}\right]\\
    =~& -\inner{\nabla}{\beta v_t} - \inner{\nabla\log p_{t,\beta}}{\beta v_t} - (1-\beta)\inner{\nabla\log p_{t,\beta}}{v_t} - \int dx\; p_{t,\beta}\left[- (1-\beta)\inner{\nabla\log p_{t,\beta}}{v_t}\right]\,.
\end{align}
Thus, we have
\begin{align}
    \deriv{}{t} p_{t,\beta}(x) =~& -\inner{\nabla}{p_{t,\beta}(x) \beta v_t(x)} + p_{t,\beta}(x)\left[g_t(x) - \mean_{p_{t,\beta}}g_t(x)\right]\,,\\
    g_t(x) =~& - (1-\beta)\inner{\nabla\log p_{t,\beta}}{v_t}\,,
\end{align}
which can be simulated as
\begin{align}
    dx_t =~& \beta v_t(x_t) dt\,,\\
    dw_t =~& \beta(\beta-1)\inner{\nabla\log q_t(x_t)}{v_t(x_t)}dt\,.
\end{align}
\end{proof}

\begin{mybox}
\begin{proposition}[Annealed Diffusion Equation]
\label{prop:anneled_diffeq}
Consider the marginals generated by the diffusion equation
\begin{align}
    \deriv{q_t(x)}{t} = \frac{\sigma_t^2}{2}\Delta q_t(x)\,.
\end{align}
Then the marginals $p_{t,\beta}(x) \propto q_t^\beta(x)$ satisfy the following PDE
\begin{align}
    \deriv{}{t} p_{t,\beta}(x) =~& \frac{\sigma_t^2}{2}\Delta p_{t,\beta}(x) + p_{t,\beta}(x)\left[g_t(x) - \mean_{p_{t,\beta}}g_t(x)\right]\,,\\
    g_t(x)=~& -\beta(\beta-1) \frac{\sigma_t^2}{2}\norm{\nabla \log q_t(x)}^2\,.
\end{align}
\end{proposition}
\end{mybox}
\begin{proof}
We want to find the partial derivative of the annealed density
\begin{align}
    p_{t,\beta}(x) = \frac{q_t(x)^{\beta}}{\int dx\; q_t(x)^{\beta}}\,,\;\; \deriv{}{t}p_{t,\beta}(x) = ?
\end{align}
By the straightforward calculations we have
\begin{align}
    \deriv{}{t} \log p_{t,\beta} =~& \beta \deriv{}{t} \log q_t - \int dx\; p_{t,\beta} \beta\deriv{}{t} \log q_t \\
    =~& \beta\frac{\sigma_t^2}{2}\Delta\log q_t + \beta\frac{\sigma_t^2}{2}\norm{\nabla \log q_t}^2  - \int dx\; p_{t,\beta}\left[\beta\frac{\sigma_t^2}{2}\Delta\log q_t + \beta\frac{\sigma_t^2}{2}\norm{\nabla \log q_t}\right]\\
    =~& \frac{\sigma_t^2}{2}\Delta\log p_{t,\beta} + \frac{\sigma_t^2}{2\beta}\norm{\nabla \log p_{t,\beta}}^2  - \int dx\; p_{t,\beta}\left[\frac{\sigma_t^2}{2}\Delta\log p_{t,\beta} + \frac{\sigma_t^2}{2\beta}\norm{\nabla \log p_{t,\beta}}^2\right]\\
    =~& \frac{\sigma_t^2}{2}\Delta\log p_{t,\beta} + \frac{\sigma_t^2}{2}\norm{\nabla \log p_{t,\beta}}^2 - \left(1-\frac{1}{\beta}\right) \frac{\sigma_t^2}{2}\norm{\nabla \log p_{t,\beta}}^2  \\ 
    ~&- \int dx\; p_{t,\beta}\left[-\left(1-\frac{1}{\beta}\right) \frac{\sigma_t^2}{2}\norm{\nabla \log p_{t,\beta}}^2\right]\,.
\end{align}
Thus, we have
\begin{align}
    \deriv{}{t} p_{t,\beta}(x) =~& \frac{\sigma_t^2}{2}\Delta p_{t,\beta}(x) + p_{t,\beta}(x)\left[g_t(x) - \mean_{p_{t,\beta}}g_t(x)\right]\,,\\
    g_t(x)=~& -\beta(\beta-1) \frac{\sigma_t^2}{2}\norm{\nabla \log q_t(x)}^2\,,
\end{align}
which can be simulated (for $\beta > 0$) as
\begin{align}
    dx_t =~& \sigma_t dW_t\,,\\
    dw_t =~& -\beta(\beta-1) \frac{\sigma_t^2}{2}\norm{\nabla \log q_t(x_t)}^2dt\,.
\end{align}
\end{proof}

\begin{mybox}
\begin{proposition}[Scaled Annealed Diffusion Equation]
\label{prop:anneled_diffeq_scaled}
Consider the marginals generated by the diffusion equation
\begin{align}
    \deriv{q_t(x)}{t} = \frac{\sigma_t^2}{2}\Delta q_t(x)\,.
\end{align}
Then the marginals $p_{t,\beta}(x) \propto q_t^\beta(x)$ satisfy the following PDE
\begin{align}
    \deriv{}{t} p_{t,\beta}(x) =~& \frac{\sigma_t^2}{2\beta}\Delta p_{t,\beta}(x) + p_{t,\beta}(x)\left[g_t(x) - \mean_{p_{t,\beta}}g_t(x)\right]\,,\\
    g_t(x)=~& (\beta-1) \frac{\sigma_t^2}{2}\Delta \log q_t(x)\,.
\end{align}
\end{proposition}
\end{mybox}
\begin{proof}
We want to find the partial derivative of the annealed density
\begin{align}
    p_{t,\beta}(x) = \frac{q_t(x)^{\beta}}{\int dx\; q_t(x)^{\beta}}\,,\;\; \deriv{}{t}p_{t,\beta}(x) = ?
\end{align}
By the straightforward calculations we have
\begin{align}
    \deriv{}{t} \log p_{t,\beta} =~& \beta \deriv{}{t} \log q_t - \int dx\; p_{t,\beta} \beta\deriv{}{t} \log q_t \\
    =~& \beta\frac{\sigma_t^2}{2}\Delta\log q_t + \beta\frac{\sigma_t^2}{2}\norm{\nabla \log q_t}^2  - \int dx\; p_{t,\beta}\left[\beta\frac{\sigma_t^2}{2}\Delta\log q_t + \beta\frac{\sigma_t^2}{2}\norm{\nabla \log q_t}\right]\\
    =~& \frac{\sigma_t^2}{2}\Delta\log p_{t,\beta} + \frac{\sigma_t^2}{2\beta}\norm{\nabla \log p_{t,\beta}}^2  - \int dx\; p_{t,\beta}\left[\frac{\sigma_t^2}{2}\Delta\log p_{t,\beta} + \frac{\sigma_t^2}{2\beta}\norm{\nabla \log p_{t,\beta}}^2\right]\\
    =~& \frac{\sigma_t^2}{2\beta}\Delta\log p_{t,\beta} + \frac{\sigma_t^2}{2\beta}\norm{\nabla \log p_{t,\beta}}^2 + \left(1-\frac{1}{\beta}\right) \frac{\sigma_t^2}{2}\Delta \log p_{t,\beta}   \\ 
    ~&- \int dx\; p_{t,\beta}\left[\left(1-\frac{1}{\beta}\right) \frac{\sigma_t^2}{2}\Delta \log p_{t,\beta}\right]\,.
\end{align}
Thus, we have
\begin{align}
    \deriv{}{t} p_{t,\beta}(x) =~& \frac{\sigma_t^2}{2\beta}\Delta p_{t,\beta}(x) + p_{t,\beta}(x)\left[g_t(x) - \mean_{p_{t,\beta}}g_t(x)\right]\,,\\
    g_t(x)=~& (\beta-1) \frac{\sigma_t^2}{2}\Delta \log q_t(x)\,,
\end{align}
which can be simulated (for $\beta > 0$) as
\begin{align}
    dx_t =~& \frac{\sigma_t}{\sqrt{\beta}} dW_t\,,\\
    dw_t =~& (\beta-1) \frac{\sigma_t^2}{2}\Delta \log q_t(x_t)dt\,.
\end{align}
\end{proof}

\begin{mybox}
\begin{proposition}[Annealed Re-weighting]
\label{prop:anneled_reweighting}
Consider the marginals generated by the re-weighting equation
\begin{align}
    \deriv{q_t(x)}{t} = q_t(x)\left(g_t(x) - \mean_{q_t(x)}g_t(x)\right)\,.
\end{align}
The marginals $p_{t,\beta}(x) \propto q_t^\beta(x)$ satisfy the following PDE
\begin{align}
    \deriv{}{t} p_{t,\beta}(x) =~& p_{t,\beta}\left[\beta g_t(x) - \mean_{p_{t,\beta}}\beta g_t(x)\right]\,.
\end{align}
\end{proposition}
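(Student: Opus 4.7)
The plan is to mimic the template used in the three preceding annealing propositions and simply differentiate $\log p_{t,\beta}$ rather than $p_{t,\beta}$ itself, which makes the exponent $\beta$ factor out cleanly. Concretely, I would write $p_{t,\beta}(x) = q_t(x)^{\beta}/Z_t(\beta)$ with $Z_t(\beta) = \int q_t(y)^\beta dy$, and compute
\begin{align*}
\partial_t \log p_{t,\beta}(x) = \beta\, \partial_t \log q_t(x) - \partial_t \log Z_t(\beta).
\end{align*}

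Next, I would substitute the given reweighting equation, which in log form reads $\partial_t \log q_t(x) = g_t(x) - \mean_{q_t} g_t$. The key observation is that the normalization conservation for $p_{t,\beta}$ forces $\int p_{t,\beta}(x) \partial_t \log p_{t,\beta}(x)\, dx = 0$, and this identity determines $\partial_t \log Z_t(\beta)$ for us: taking the expectation under $p_{t,\beta}$ of the displayed line gives
\begin{align*}
\partial_t \log Z_t(\beta) = \beta\, \mean_{p_{t,\beta}} g_t - \beta\, \mean_{q_t} g_t.
\end{align*}
Substituting back, the two $\beta\, \mean_{q_t} g_t$ terms cancel and we are left with $\partial_t \log p_{t,\beta}(x) = \beta g_t(x) - \mean_{p_{t,\beta}} \beta g_t$. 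Multiplying by $p_{t,\beta}(x)$ yields the claimed PDE.

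There is essentially no obstacle here since no spatial derivatives or divergences appear; this proposition is the ``reweighting $\to$ reweighting'' analogue of \cref{prop:anneled_conteq,prop:anneled_diffeq}, and the main content is purely algebraic manipulation of the $\log$-derivative. The only subtlety worth flagging in the proof is the use of normalization conservation to pin down $\partial_t \log Z_t(\beta)$ without having to differentiate under the integral sign directly (though one could equivalently do the latter: $\partial_t \log Z_t(\beta) = Z_t(\beta)^{-1} \int \beta q_t^\beta \partial_t \log q_t\, dy$, which after substitution and division by $Z_t(\beta)$ gives the same expression). I would present the normalization-conservation argument, as it is the shortest, and then conclude with the one-line cancellation.
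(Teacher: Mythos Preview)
Your proposal is correct and essentially identical to the paper's proof: the paper also differentiates $\log p_{t,\beta}$, writes it as $\beta\,\partial_t\log q_t - \int p_{t,\beta}\,\beta\,\partial_t\log q_t\,dx$, substitutes $\partial_t\log q_t = g_t - \mean_{q_t}g_t$, and observes the $\beta\,\mean_{q_t}g_t$ terms cancel. The only cosmetic difference is that the paper inlines $\partial_t\log Z_t(\beta)$ as the integral from the start (your parenthetical ``differentiate under the integral'' alternative) rather than naming it and pinning it down via normalization conservation.
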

\end{mybox}
\begin{proof}
We want to find the partial derivative of the annealed density
\begin{align}
    p_{t,\beta}(x) = \frac{q_t(x)^{\beta}}{\int dx\; q_t(x)^{\beta}}\,,\;\; \deriv{}{t}p_{t,\beta}(x) = ?
\end{align}
By the straightforward calculations we have
\begin{align}
    \deriv{}{t} \log p_{t,\beta} =~& \beta \deriv{}{t} \log q_t - \int dx\; p_{t,\beta} \beta\deriv{}{t} \log q_t \\
    =~& \beta\left(g_t(x) - \mean_{q_t(x)}g_t(x)\right) - \int dx\; p_{t,\beta}\left[\beta \left(g_t(x) - \mean_{q_t(x)}g_t(x)\right)\right]\\
    =~& \beta g_t(x) - \int dx\; p_{t,\beta}\beta g_t(x)\,.
\end{align}
Thus, we have
\begin{align}
    \deriv{}{t} p_{t,\beta}(x) =~& p_{t,\beta}\left[\beta g_t(x) - \mean_{p_{t,\beta}}\beta g_t(x)\right]\,,
\end{align}
which can be simulated as
\begin{align}
    dx_t =~& 0\,,\\
    dw_t =~& \beta g_t(x_t)\,.
\end{align}
\end{proof}

\begin{mybox}
\begin{proposition}[Time-dependent annealing]
\label{prop:time_beta}
Consider the annealed marginals $p_{t,\beta}(x) \propto q_t(x)^\beta$ following some F
\begin{align}
    dx_t = ~& v_{t,\beta}(x_t) + \sigma_{t,\beta} dW_t\,,\\
    dw_t = ~& g_{t,\beta}(x_t)\,.
\end{align}
Then, for the time-dependent schedule $\beta_t$, we have
\begin{align}
    dx_t = ~& v_{t,\beta_t}(x_t) + \sigma_{t,\beta_t} dW_t\,,\\
    dw_t = ~& g_{t,\beta_t}(x_t) + \deriv{\beta_t}{t}\log q_t(x_t)\,,
\end{align}
sampling from $p_{t,\beta_t}(x) \propto q_t(x)^{\beta_t}$.
\end{proposition}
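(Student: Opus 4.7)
The proof is essentially a chain-rule calculation on the annealed density, splitting the time derivative into a piece coming from the $t$-dependence of $q_t$ (which is already handled by the hypothesis) and a piece coming from the $t$-dependence of $\beta_t$ (which will be shown to be a pure reweighting).

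First I would write $p_{t,\beta_t}(x) = q_t(x)^{\beta_t}/Z_t(\beta_t)$ with $Z_t(\beta) = \int q_t(y)^{\beta}\,dy$, and compute the total time derivative
\begin{align}
\partial_t p_{t,\beta_t}(x) \;=\; \bigl[\partial_t p_{t,\beta}(x)\bigr]_{\beta=\beta_t} \;+\; \dot{\beta}_t\,\bigl[\partial_\beta p_{t,\beta}(x)\bigr]_{\beta=\beta_t},
\end{align}
where $\dot\beta_t = d\beta_t/dt$. By hypothesis, the first term on the right equals the Feynman--Kac PDE associated with the weighted SDE $(v_{t,\beta_t},\sigma_{t,\beta_t},g_{t,\beta_t})$, namely
\begin{align}
\bigl[\partial_t p_{t,\beta}\bigr]_{\beta=\beta_t} \;=\; -\inner{\nabla}{p_{t,\beta_t} v_{t,\beta_t}} + \tfrac{\sigma_{t,\beta_t}^2}{2}\Delta p_{t,\beta_t} + p_{t,\beta_t}\bigl(g_{t,\beta_t} - \mean_{p_{t,\beta_t}} g_{t,\beta_t}\bigr).
\end{align}

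Next I would compute the $\beta$-derivative directly. From $\log p_{t,\beta} = \beta \log q_t - \log Z_t(\beta)$ and $\partial_\beta \log Z_t(\beta) = \mean_{p_{t,\beta}} \log q_t$ (standard exponential-family identity), one gets
\begin{align}
\partial_\beta p_{t,\beta}(x) \;=\; p_{t,\beta}(x)\bigl(\log q_t(x) - \mean_{p_{t,\beta}} \log q_t\bigr).
\end{align}
Multiplying by $\dot\beta_t$ and evaluating at $\beta=\beta_t$ produces exactly the reweighting term in \cref{prop:anneled_reweighting} with weighting function $g(x) = \dot\beta_t\log q_t(x)$. Combining with the previous display yields a Feynman--Kac PDE for $p_{t,\beta_t}$ whose only change from the constant-$\beta$ case is the additive contribution $\dot\beta_t\log q_t$ inside the reweighting term, which corresponds to the weight increment $dw_t \mathrel{+}= \dot\beta_t\log q_t(x_t)\,dt$.

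\textbf{Main obstacle.} The calculation is routine once the chain-rule split is in place, so the only conceptual point worth being careful about is the bookkeeping of the expectation subtractions that keep everything normalized: after combining the two pieces, the resulting reweighting term is $\bar g_{t,\beta_t}(x) + \dot\beta_t(\log q_t(x) - \mean_{p_{t,\beta_t}}\log q_t)$, and one must check that this is consistent with the additive rule for Feynman--Kac weights (i.e.\ that adding a reweighting term to an existing FK-PDE simply adds the corresponding increment to $dw_t$, as in the additivity remark beneath \cref{tab:conversion_rules}). This justifies the stated simulation scheme without requiring any change to the drift or diffusion of the underlying SDE.
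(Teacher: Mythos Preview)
Your proposal is correct and takes essentially the same approach as the paper: both compute the time derivative of $\log p_{t,\beta_t}$ by splitting off the contribution from the $t$-dependence of $q_t$ (handled by hypothesis as the FK-PDE with $(v_{t,\beta_t},\sigma_{t,\beta_t},g_{t,\beta_t})$) and the contribution from $\dot\beta_t\log q_t$, which appears as a pure reweighting term after subtracting its $p_{t,\beta_t}$-expectation. The paper writes this out directly as $\partial_t\log p_{t,\beta_t}=\beta_t\,\partial_t\log q_t+\dot\beta_t\log q_t$ minus its expectation, whereas you phrase the same split as a chain rule in the $(t,\beta)$ variables together with the exponential-family identity $\partial_\beta\log Z_t(\beta)=\mean_{p_{t,\beta}}\log q_t$; the computations are identical.
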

\end{mybox}
\begin{proof}
First, let's note that for the annealed marginals $p_{t,\beta}(x) \propto q_t(x)^\beta$ with constant $\beta$, we have
\begin{align}
    \deriv{}{t} \log p_{t,\beta} =~& \beta \deriv{}{t} \log q_t - \int dx\; p_{t,\beta} \left[\beta \deriv{}{t} \log q_t \right]\\
    =~&-\frac{1}{p_{t,\beta}}\inner{\nabla}{p_{t,\beta}v_{t,\beta}} + \frac{1}{p_{t,\beta}}\frac{\sigma_{t,\beta}^2}{2}\Delta p_{t,\beta} + \left(g_{t,\beta} - \mean_{p_{t,\beta}}g_{t,\beta}\right)\,.
\end{align}
Thus, for the time-dependent $\beta_t$, we have
\begin{align}
    \deriv{}{t} \log p_{t,\beta_t} =~& \beta_t \deriv{}{t} \log q_t + \deriv{\beta_t}{t}\log q_t - \int dx\; p_{t,\beta_t} \left[\beta_t \deriv{}{t} \log q_t + \deriv{\beta_t}{t}\log q_t\right]\\
    =~&-\frac{1}{p_{t,\beta_t}}\inner{\nabla}{p_{t,\beta_t}v_{t,\beta_t}} + \frac{1}{p_{t,\beta_t}}\frac{\sigma_{t,\beta_t}^2}{2}\Delta p_{t,\beta_t} + \left[\left(g_{t,\beta_t} + \deriv{\beta_t}{t}\log q_t\right) - \mean_{p_{t,\beta_t}}\left(g_{t,\beta_t}+\deriv{\beta_t}{t}\log q_t\right)\right]\,.
\end{align}
From which we have the statement of the proposition.
\end{proof}

\subsection{Product}

\begin{mybox}
\begin{proposition}[Product of Continuity Equations]
\label{prop:product_conteq}
Consider marginals $q^{1,2}_t(x)$ generated by two different continuity equations
\begin{align}
    \deriv{q_t^1(x)}{t} = -\inner{\nabla}{q_t^1(x)v_t^1(x)}\,,\;\; \deriv{q_t^2(x)}{t} = -\inner{\nabla}{q_t^2(x)v_t^2(x)}\,.
\end{align}
The product of densities $p_t(x) \propto q^1(x)q^2(x)$ satisfies the following PDE
\begin{align}
    \deriv{}{t} p_{t}(x) =~& -\inner{\nabla}{p_t(x)\left(v_t^1(x)+v_t^2(x)\right)} + p_t(x)\left(g_t(x)-\mean_{p_t(x)}g_t(x)\right)\,,\\
    g_t(x) =~& \inner{\nabla\log q_t^1(x)}{v_t^2(x)} + \inner{\nabla\log q_t^2(x)}{v_t^1(x)}\,.
\end{align}
\end{proposition}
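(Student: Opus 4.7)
The plan is to mirror the structure of the annealed-equation proofs above (e.g. \cref{prop:anneled_conteq}), differentiating $\log p_t$ rather than $p_t$ directly. Writing $p_t(x) = q_t^1(x) q_t^2(x)/Z_t$ with $Z_t = \int q_t^1(y) q_t^2(y)\,dy$, the log-derivative decomposes as
\begin{align*}
\deriv{}{t}\log p_t = \deriv{}{t}\log q_t^1 + \deriv{}{t}\log q_t^2 - \int dy\; p_t(y)\left[\deriv{}{t}\log q_t^1(y) + \deriv{}{t}\log q_t^2(y)\right],
\end{align*}
where the last term comes from $\partial_t \log Z_t$ and will play the role of $\mean_{p_t}[g_t]$ in the final answer.

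Next, I would substitute the identities implied by the two continuity equations, namely $\partial_t \log q_t^i = -\inner{\nabla}{v_t^i} - \inner{\nabla \log q_t^i}{v_t^i}$. The key step is then algebraic repackaging: since $\nabla \log p_t = \nabla \log q_t^1 + \nabla \log q_t^2$ (the $x$-independent $Z_t$ drops out), the target continuity term associated with drift $v_t^1 + v_t^2$ is
\begin{align*}
-\frac{1}{p_t}\inner{\nabla}{p_t(v_t^1 + v_t^2)} = -\inner{\nabla}{v_t^1 + v_t^2} - \inner{\nabla\log q_t^1}{v_t^1 + v_t^2} - \inner{\nabla\log q_t^2}{v_t^1 + v_t^2}.
\end{align*}
Comparing this with the sum $\partial_t\log q_t^1 + \partial_t \log q_t^2$ reveals the cross terms $\inner{\nabla\log q_t^1}{v_t^2} + \inner{\nabla\log q_t^2}{v_t^1}$ as exactly the leftover residual after the continuity part is subtracted. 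That residual is the candidate $g_t(x)$ stated in the proposition.

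I would then collect terms into the Feynman--Kac form $\partial_t p_t = -\inner{\nabla}{p_t(v_t^1+v_t^2)} + p_t(g_t - \mean_{p_t} g_t)$, using the fact that the contribution from $\partial_t\log Z_t$ yields precisely $-\mean_{p_t}[g_t]$ (after the divergence terms $-\inner{\nabla}{v_t^i}$ cancel against their own expectations via integration by parts under $p_t$; one can alternatively argue this by appealing to the identity $\mean_{p_t}[\partial_t \log p_t] = 0$, which forces the correct normalizing subtraction). The only real obstacle is this bookkeeping of which pieces go into the transport operator and which survive as weights; once the correct drift $v_t^1 + v_t^2$ is guessed, the rest is a line-by-line match analogous to \cref{prop:anneled_conteq}. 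No new analytic tools are required beyond those already used in the annealed proofs.
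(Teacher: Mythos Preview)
Your proposal is correct and follows essentially the same approach as the paper: differentiate $\log p_t$, substitute $\partial_t\log q_t^i = -\inner{\nabla}{v_t^i} - \inner{\nabla\log q_t^i}{v_t^i}$, use $\nabla\log p_t = \nabla\log q_t^1 + \nabla\log q_t^2$ to pull out the continuity term for drift $v_t^1+v_t^2$, and identify the cross terms $\inner{\nabla\log q_t^1}{v_t^2}+\inner{\nabla\log q_t^2}{v_t^1}$ as the residual weight $g_t$. The paper handles the normalization term by the same implicit integration-by-parts you describe (i.e.\ $\int p_t[\inner{\nabla}{v}+\inner{\nabla\log p_t}{v}]\,dx=0$), so your bookkeeping matches theirs line for line.
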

\end{mybox}
\begin{proof}
For the continuity equations
\begin{align}
    \deriv{}{t} q^{1,2}_t(x) = -\inner{\nabla}{q^{1,2}_t(x)v^{1,2}_t(x)}\,,
\end{align}
we want to find the partial derivative of the annealed density
\begin{align}
    p_{t}(x) = \frac{q_t^1(x)q_t^2(x)}{\int dx\; q_t^1(x)q_t^2(x)}\,,\;\; \deriv{}{t}p_{t}(x) = ?
\end{align}
By the straightforward calculations we have
\begin{align}
    \deriv{}{t} \log p_{t} =~& \deriv{}{t} \log q^1_t + \deriv{}{t} \log q^2_t - \int dx\; p_{t} \left[\deriv{}{t} \log q^1_t + \deriv{}{t} \log q^2_t\right] \\
    =~& -\inner{\nabla}{v_t^1+v_t^2} - \inner{\nabla\log q_t^1}{v_t^1} - \inner{\nabla\log q_t^2}{v_t^2} - \\
    ~&- \int dx\; p_{t} \left[-\inner{\nabla}{v_t^1+v_t^2} - \inner{\nabla\log q_t^1}{v_t^1} - \inner{\nabla\log q_t^2}{v_t^2}\right]\\
    =~&-\inner{\nabla}{v_t^1+v_t^2} - \inner{\nabla\log p_t}{v_t^1 + v_t^2} + \inner{\nabla\log q_t^1}{v_t^2} + \inner{\nabla\log q_t^2}{v_t^1} - \\
    ~&-\int dx\; p_{t} \left[\inner{\nabla\log q_t^1}{v_t^2} + \inner{\nabla\log q_t^2}{v_t^1}\right]\,.
\end{align}
Thus, we have
\begin{align}
    \deriv{}{t} p_{t}(x) =~& -\inner{\nabla}{p_t(x)\left(v_t^1(x)+v_t^2(x)\right)} + p_t(x)\left(g_t(x)-\mean_{p_t(x)}g_t(x)\right)\,,\\
    g_t(x) =~& \inner{\nabla\log q_t^1(x)}{v_t^2(x)} + \inner{\nabla\log q_t^2(x)}{v_t^1(x)}\,,
\end{align}
which can be simulated as
\begin{align}
    dx_t =~& \left(v_t^1(x_t) + v_t^2(x_t)\right) dt\,,\\
    dw_t =~& \left[\inner{\nabla\log q_t^1(x_t)}{v_t^2(x_t)} + \inner{\nabla\log q_t^2(x_t)}{v_t^1(x_t)}\right]dt\,.
\end{align}
\end{proof}

\begin{mybox}
\begin{proposition}[Product of Diffusion Equations]
\label{prop:product_diffusion}
Consider marginals $q^{1,2}_t(x)$ generated by two different diffusion equations
\begin{align}
    \deriv{q_t^1(x)}{t} = \frac{\sigma_t^2}{2}\Delta q_t^1(x)\,,\;\; \deriv{q_t^2(x)}{t} = \frac{\sigma_t^2}{2}\Delta q_t^2(x)\,.
\end{align}
The product of densities $p_t(x) \propto q^1(x)q^2(x)$ satisfies the following PDE
\begin{align}
    \deriv{}{t} p_{t}(x) =~& \frac{\sigma_t^2}{2}\Delta p_t(x) + p_t(x)\left(g_t(x)-\mean_{p_t(x)}g_t(x)\right)\,,\\
    g_t(x) =~& -\sigma_t^2\inner{\nabla \log q_t^1(x)}{\nabla \log q_t^2(x)}\,.
\end{align}
\end{proposition}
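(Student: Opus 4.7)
The plan is to mimic the template used in the preceding propositions (e.g.\ \cref{prop:anneled_diffeq,prop:product_conteq}): differentiate $\log p_t$ in time, massage the result into the Fokker--Planck form $\frac{1}{p_t}\frac{\sigma_t^2}{2}\Delta p_t$ plus a residual, and identify the residual as the reweighting term $g_t - \mean_{p_t} g_t$.

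First, write $p_t = q_t^1 q_t^2 / Z_t$ with $Z_t = \int q_t^1 q_t^2\,dx$, so that
\begin{align}
\deriv{}{t}\log p_t = \deriv{}{t}\log q_t^1 + \deriv{}{t}\log q_t^2 - \int p_t\left[\deriv{}{t}\log q_t^1 + \deriv{}{t}\log q_t^2\right]dx.
\end{align}
For each factor, apply the diffusion equation and the identity $\tfrac{1}{q}\Delta q = \Delta \log q + \|\nabla\log q\|^2$ to obtain
$\deriv{}{t}\log q_t^i = \tfrac{\sigma_t^2}{2}(\Delta \log q_t^i + \|\nabla \log q_t^i\|^2)$.

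The key step is to re-express this in terms of $p_t$. Since $\log p_t = \log q_t^1 + \log q_t^2 - \log Z_t$, we have $\nabla \log p_t = \nabla\log q_t^1 + \nabla\log q_t^2$ and $\Delta \log p_t = \Delta\log q_t^1 + \Delta\log q_t^2$. Expanding $\|\nabla\log p_t\|^2$ gives
\begin{align}
\|\nabla\log q_t^1\|^2 + \|\nabla\log q_t^2\|^2 = \|\nabla\log p_t\|^2 - 2\inner{\nabla\log q_t^1}{\nabla\log q_t^2}.
\end{align}
Substituting these back in yields
\begin{align}
\deriv{}{t}\log q_t^1 + \deriv{}{t}\log q_t^2 = \tfrac{\sigma_t^2}{2}\bigl(\Delta\log p_t + \|\nabla\log p_t\|^2\bigr) - \sigma_t^2 \inner{\nabla\log q_t^1}{\nabla\log q_t^2} = \tfrac{1}{p_t}\tfrac{\sigma_t^2}{2}\Delta p_t + g_t(x),
\end{align}
with $g_t(x) = -\sigma_t^2 \inner{\nabla\log q_t^1(x)}{\nabla\log q_t^2(x)}$.

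Finally, multiply $\deriv{}{t}\log p_t$ by $p_t$. The $\tfrac{\sigma_t^2}{2}\Delta p_t$ piece integrates to zero over $x$ (as the Laplacian of a density), so the subtracted mean collapses to $\mean_{p_t} g_t$, delivering exactly the claimed PDE. The only real subtlety is the algebraic identity relating $\|\nabla\log q_t^1\|^2 + \|\nabla\log q_t^2\|^2$ to $\|\nabla\log p_t\|^2$; once that cross-term is extracted, everything else is a direct application of the continuity-of-normalization argument already used throughout \cref{app:conversion_rules}.
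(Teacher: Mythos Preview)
Your proof is correct and follows essentially the same route as the paper: both differentiate $\log p_t$, use $\tfrac{1}{q}\Delta q = \Delta\log q + \|\nabla\log q\|^2$ for each factor, regroup via $\nabla\log p_t = \nabla\log q_t^1 + \nabla\log q_t^2$ to isolate the cross-term $-\sigma_t^2\inner{\nabla\log q_t^1}{\nabla\log q_t^2}$, and then use $\int \Delta p_t\,dx = 0$ to reduce the subtracted mean to $\mean_{p_t} g_t$. The only difference is that you make the vanishing of $\int \Delta p_t\,dx$ explicit, whereas the paper leaves it implicit.
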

\end{mybox}
\begin{proof}
We want to find the partial derivative of the annealed density
\begin{align}
    p_{t}(x) = \frac{q_t^1(x)q_t^2(x)}{\int dx\; q_t^1(x)q_t^2(x)}\,,\;\; \deriv{}{t}p_{t}(x) = ?
\end{align}
By the straightforward calculations we have
\begin{align}
    \deriv{}{t} \log p_{t} =~& \deriv{}{t} \log q^1_t + \deriv{}{t} \log q^2_t - \int dx\; p_{t} \left[\deriv{}{t} \log q^1_t + \deriv{}{t} \log q^2_t\right] \\
    =~& \frac{\sigma_t^2}{2}\Delta \log q_t^1 + \frac{\sigma_t^2}{2}\norm{\nabla \log q_t^1}^2 + \frac{\sigma_t^2}{2}\Delta \log q_t^2 + \frac{\sigma_t^2}{2}\norm{\nabla \log q_t^2}^2 - \\
    ~& - \int dx\; p_{t} \left[\frac{\sigma_t^2}{2}\Delta \log q_t^1 + \frac{\sigma_t^2}{2}\norm{\nabla \log q_t^1}^2 + \frac{\sigma_t^2}{2}\Delta \log q_t^2 + \frac{\sigma_t^2}{2}\norm{\nabla \log q_t^2}^2\right]\\
    =~&\frac{\sigma_t^2}{2}\Delta \log p_t + \frac{\sigma_t^2}{2}\norm{\nabla \log p_t}^2 - \sigma_t^2\inner{\nabla \log q_t^1}{\nabla \log q_t^2} - \int dx\; p_{t} \left[- \sigma_t^2\inner{\nabla \log q_t^1}{\nabla \log q_t^2}\right]\,.
\end{align}
Thus, we have
\begin{align}
    \deriv{}{t} p_{t}(x) =~& \frac{\sigma_t^2}{2}\Delta p_t(x) + p_t(x)\left(g_t(x)-\mean_{p_t(x)}g_t(x)\right)\,,\\
    g_t(x) =~& -\sigma_t^2\inner{\nabla \log q_t^1(x)}{\nabla \log q_t^2(x)}\,,
\end{align}
which can be simulated as
\begin{align}
    dx_t =~& \sigma_t dW_t\,,\\
    dw_t =~& \left[-\sigma_t^2\inner{\nabla \log q_t^1(x_t)}{\nabla \log q_t^2(x_t)}\right]dt\,.
\end{align}
\end{proof}

\begin{mybox}
\begin{proposition}[Product of Re-weightings]
\label{prop:product_reweighting}
Consider marginals $q^{1,2}_t(x)$ generated by two different diffusion equations
\begin{align}
    \deriv{q_t^1(x)}{t} = \left(g_t^1(x) - \mean_{q_t^1}g_t^1(x)\right) q_t^1(x)\,,\;\; \deriv{q_t^2(x)}{t} = \left(g_t^2(x) - \mean_{q_t^2}g_t^2(x)\right) q_t^2(x)\,.
\end{align}
The product of densities $p_t(x) \propto q^1(x)q^2(x)$ satisfies the following PDE
\begin{align}
    \deriv{}{t} p_{t}(x) =~& p_t(x)\left(g_t(x)-\mean_{p_t(x)}g_t(x)\right)\,,\\
    g_t(x) =~& g_t^1(x) + g_t^2(x)\,,
\end{align}
\end{proposition}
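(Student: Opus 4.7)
The plan is to mirror the approach used in the previous conversion-rule propositions (e.g., \cref{prop:anneled_reweighting,prop:product_conteq,prop:product_diffusion}), namely to compute $\partial_t \log p_t$ directly by exploiting the product structure and then re-express the result in the canonical reweighting form $p_t(g_t - \mathbb{E}_{p_t} g_t)$.

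First I would write $\log p_t(x) = \log q_t^1(x) + \log q_t^2(x) - \log Z_t$ where $Z_t = \int q_t^1(x) q_t^2(x)\, dx$, and differentiate:
\begin{align*}
\deriv{}{t}\log p_t
= \deriv{}{t}\log q_t^1 + \deriv{}{t}\log q_t^2 - \int p_t\left(\deriv{}{t}\log q_t^1 + \deriv{}{t}\log q_t^2\right) dx,
\end{align*}
which is the general identity used in all prior propositions (the $-\partial_t \log Z_t$ term evaluates to the integral shown since $Z_t^{-1}\partial_t Z_t = \int p_t \partial_t \log(q_t^1 q_t^2)\, dx$). Next I would substitute the hypotheses $\partial_t \log q_t^i = g_t^i - \mathbb{E}_{q_t^i}[g_t^i]$ (obtained by dividing the given reweighting equations by $q_t^i$). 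The additive constants $\mathbb{E}_{q_t^i}[g_t^i]$ are independent of $x$, so they cancel between the first two terms and their appearance inside the integral against $p_t$ (which has unit mass), leaving
\begin{align*}
\deriv{}{t}\log p_t = (g_t^1 + g_t^2) - \int p_t\,(g_t^1 + g_t^2)\, dx = g_t - \mathbb{E}_{p_t}[g_t],
\end{align*}
with $g_t := g_t^1 + g_t^2$. Multiplying through by $p_t$ yields the claimed PDE.

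There is no substantive obstacle: the argument is a direct algebraic manipulation of the same type used already in \cref{prop:anneled_reweighting}. The only minor bookkeeping point is to justify that the cancellation of $\mathbb{E}_{q_t^i}[g_t^i]$ goes through cleanly, which follows because these are $x$-independent scalars and $\int p_t\, dx = 1$. No assumption on $q_t^1, q_t^2$ beyond those implicit in the statement (integrability so that $Z_t$ is finite and $\mathbb{E}_{q_t^i}[g_t^i]$ exists) is needed. The resulting wSDE simulation is trivial, namely $dx_t = 0,\ dw_t = (g_t^1(x_t) + g_t^2(x_t))\, dt$, matching the last row of \cref{tab:conversion_rules}.
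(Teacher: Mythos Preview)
Your proposal is correct and essentially identical to the paper's own proof: both compute $\partial_t \log p_t = \partial_t \log q_t^1 + \partial_t \log q_t^2 - \int p_t\,(\partial_t \log q_t^1 + \partial_t \log q_t^2)\,dx$, substitute the reweighting hypotheses, observe that the $x$-independent constants $\mathbb{E}_{q_t^i}[g_t^i]$ cancel against their $p_t$-expectations, and conclude. The paper also records the same simulation scheme $dx_t=0,\ dw_t=(g_t^1+g_t^2)\,dt$.
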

\end{mybox}
\begin{proof}
We want to find the partial derivative of the annealed density
\begin{align}
    p_{t}(x) = \frac{q_t^1(x)q_t^2(x)}{\int dx\; q_t^1(x)q_t^2(x)}\,,\;\; \deriv{}{t}p_{t}(x) = ?
\end{align}
By the straightforward calculations we have
\begin{align}
    \deriv{}{t} \log p_{t} =~& \deriv{}{t} \log q^1_t + \deriv{}{t} \log q^2_t - \int dx\; p_{t} \left[\deriv{}{t} \log q^1_t + \deriv{}{t} \log q^2_t\right] \\
    =~& \left(g_t^1(x) - \mean_{q_t^1}g_t^1(x)\right) + \left(g_t^2(x) - \mean_{q_t^2}g_t^2(x)\right) -\\
    ~&- \int dx\; p_{t} \left[\left(g_t^1(x) - \mean_{q_t^1}g_t^1(x)\right) + \left(g_t^2(x) - \mean_{q_t^2}g_t^2(x)\right)\right]\\
    =~&g_t^1(x) + g_t^2(x) - \int dx\; p_{t} \left[g_t^1(x) + g_t^2(x)\right]\,.
\end{align}
Thus, we have
\begin{align}
    \deriv{}{t} p_{t}(x) =~& p_t(x)\left(g_t(x)-\mean_{p_t(x)}g_t(x)\right)\,,\\
    g_t(x) =~& g_t^1(x) + g_t^2(x)\,,
\end{align}
which can be simulated as
\begin{align}
    dx_t =~& 0\,,\\
    dw_t =~& g_t^1(x_t) + g_t^2(x_t)\,.
\end{align}
\end{proof}

\section{Proofs of Propositions}\label{app:example_proofs}

\begin{mybox}
\begin{proposition}[Annealed SDE]
\label{propapp:annealed_sde}
    Consider the SDE
    \begin{align}
    dx_t = \left(-f_t(x_t) + \sigma_t^2\nabla\log q_t(x_t)\right)dt + \sigma_t dW_t\,,
    \end{align}
    then the samples from the annealed marginals $p_{t,\beta}(x) \propto q_t(x)^{\beta}$ can be obtained via the following family of SDEs
    \begin{align}
        dx_t =~& \left(-f_t(x_t) + (\beta + (1-\beta)a)\sigma_t^2\nabla\log q_t(x_t)\right)dt + \sqrt{\frac{\sigma_t^2(\beta + (1-\beta)2a)}{\beta}}dW_t\,,\\
        dw_t =~& \left[(\beta-1)\inner{\nabla}{f_t(x_t)} + \frac{1}{2}\sigma_t^2\beta(\beta-1)\norm{\nabla \log q_t(x_t)}^2\right]dt\,,
    \end{align}
    where the parameter $a$ satisfies $0 \leq (\beta + (1-\beta)2a)/\beta$.
\end{proposition}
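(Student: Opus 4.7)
The plan is to derive the proposition by matching the Feynman-Kac PDE associated with the proposed weighted SDE against the PDE that $p_{t,\beta}(x) \propto q_t(x)^{\beta}$ necessarily satisfies. Starting from the given denoising SDE for $q_t$, I would first write its Fokker-Planck equation with continuity drift $v_t^{\text{orig}} = -f_t + \sigma_t^2 \nabla \log q_t$ and diffusion coefficient $\sigma_t$, from which $\partial_t \log q_t$ follows by dividing through by $q_t$ and using $\Delta q_t / q_t = \Delta \log q_t + \|\nabla \log q_t\|^2$. Annealing then gives
\begin{align}
\partial_t \log p_{t,\beta} = \beta\bigl(\partial_t \log q_t - \mathbb{E}_{p_{t,\beta}}[\partial_t \log q_t]\bigr),
\end{align}
which expands into a linear combination of $\langle \nabla, f_t\rangle$, $\langle \nabla \log q_t, f_t\rangle$, $\Delta \log q_t$, and $\|\nabla \log q_t\|^2$, with the mean-subtraction coming for free at the end.

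In parallel, I would compute $\partial_t \log p_{t,\beta}$ under the proposed Feynman-Kac PDE with drift $v_t^{(a)} = -f_t + \eta\,\sigma_t^2 \nabla \log q_t$, diffusion $\zeta\sigma_t$, and unknown corrector $g_t$, using $\nabla \log p_{t,\beta} = \beta\nabla \log q_t$ and $\Delta \log p_{t,\beta} = \beta\Delta \log q_t$. Matching the coefficients of the four quantities above fixes both $\zeta^2$ and $g_t$ in terms of $\eta$: the $\Delta \log q_t$ coefficient forces the relation $\zeta^2 \beta = 2\eta - \beta$, which upon substituting $\eta = \beta + (1-\beta)a$ yields $\zeta = \sqrt{(\beta + (1-\beta)2a)/\beta}$; the $\|\nabla \log q_t\|^2$ coefficient then produces the $\tfrac{1}{2}\sigma_t^2\beta(\beta-1)\|\nabla \log q_t\|^2$ piece of $g_t$; and the $\langle \nabla, f_t\rangle$ coefficient produces the $(\beta-1)\langle \nabla, f_t\rangle$ piece.

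A more modular route, closer in spirit to the rest of the paper, is to construct the target Feynman-Kac PDE by applying the annealing rules of \cref{tab:conversion_rules} to each component of the original Fokker-Planck: the $-f_t$ continuity uses the unscaled rule (\cref{prop:anneled_conteq}), the score-drift continuity is split between the scaled and unscaled rules (\cref{prop:anneled_conteq,prop:anneled_conteq_scaled}), and the diffusion is likewise split between \cref{prop:anneled_diffeq,prop:anneled_diffeq_scaled}. The parameter $a$ naturally parameterizes this convex mix, and summing the additive correctors row-by-row from the table reproduces exactly the same $\zeta(\eta)$ and $g_t$.

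The main obstacle I anticipate is not any single step but careful bookkeeping: several Laplacian and inner-product terms must cancel for the scheme to be practical. Specifically, one must verify that the choice $\zeta^2\beta = 2\eta - \beta$ makes the potentially expensive $\Delta \log q_t$ contribution to $g_t$ drop out identically, leaving only the cheap squared-norm $\|\nabla \log q_t\|^2$ and the divergence $\langle \nabla, f_t\rangle$ (which is constant for linear $f_t$). Once this algebraic identity is checked, the stated family follows, with $a=0$ and $a=1/2$ recovering the target-score and tempered-noise schemes as special cases.
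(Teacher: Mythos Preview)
Your proposal is correct, and your ``more modular route'' is precisely the paper's argument: it splits the score-drift continuity with a parameter $a$ between \cref{prop:anneled_conteq_scaled} and \cref{prop:anneled_conteq}, splits the diffusion with a separate parameter $b$ between \cref{prop:anneled_diffeq_scaled} and \cref{prop:anneled_diffeq}, then enforces $2a = 1-b$ so the $\Delta\log q_t$ contributions to the corrector cancel, leaving exactly $\tfrac{1}{2}\sigma_t^2\beta(\beta-1)\|\nabla\log q_t\|^2$; the $-f_t$ part is handled separately via \cref{prop:anneled_conteq} and contributes $(\beta-1)\langle\nabla,f_t\rangle$. Your first route---writing out $\partial_t\log p_{t,\beta}$ from both the target evolution and the proposed FK PDE and matching the four coefficients directly, with $\zeta^2\beta = 2\eta-\beta$ emerging as the Laplacian-cancellation constraint---is an equally valid and slightly more self-contained derivation that bypasses the table, at the cost of redoing algebra the conversion rules already package.
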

\end{mybox}

\begin{proof}
For the following SDE
\begin{align}
    dx_t = \left(-f_t(x_t) + \sigma_t^2\nabla\log q_t(x_t)\right)dt + \sigma_t dW_t\,,
\end{align}
let's consider everything but the drift $f_t$.
Thus, we can write the following PDE
\begin{align}
    \deriv{q_t}{t} = \inner{\nabla}{q_t\left[(1-a)\sigma_t^2\nabla\log q_t(x_t) + a\sigma_t^2\nabla\log q_t(x_t)\right]} + (1-b)\frac{\sigma_t^2}{2}\Delta q_t + b\frac{\sigma_t^2}{2}\Delta q_t\,.
\end{align}
We apply \cref{prop:anneled_conteq_scaled}, \cref{prop:anneled_conteq}, \cref{prop:anneled_diffeq_scaled}, \cref{prop:anneled_diffeq} (rules from \cref{tab:conversion_rules}) to the corresponding terms of the PDE above.
Hence, the formulas for the weights are
\begin{align}
    g_t(x) =~& (1-a)\sigma_t^2\beta(\beta-1)\norm{\nabla \log q_t(x)}^2 - a\sigma_t^2(\beta-1)\Delta \log q_t(x) + \\
    ~&+ (\beta-1)\frac{(1-b)\sigma_t^2}{2}\Delta \log q_t(x_t) - \beta(\beta-1)\frac{b\sigma_t^2}{2}\norm{\nabla \log q_t(x_t)}^2\,.
\end{align}
Let's cancel out the term with the Laplacians, hence, we have $2a = 1-b$ and
\begin{align}
    g_t(x) =~& (1-a-b/2)\sigma_t^2\beta(\beta-1)\norm{\nabla \log q_t(x)}^2 = \frac{1}{2}\sigma_t^2\beta(\beta-1)\norm{\nabla \log q_t(x)}^2\,.
\end{align}
The PDE for the density is
\begin{align}
    \deriv{p_{t,\beta}}{t} =~& -\inner{\nabla}{p_{t,\beta}\left(-f_t + (\beta(1-a) + a)\sigma_t^2\nabla\log q_t\right)} + \left(\frac{1-b}{\beta} + b\right)\frac{\sigma_t^2}{2}\Delta p_{t,\beta} + p_{t,\beta}\left(g_t - \mean_{p_{t,\beta}}g_t\right)\\
    =~& -\inner{\nabla}{p_{t,\beta}\left(-f_t + (\beta + (1-\beta)a)\sigma_t^2\nabla\log q_t\right)} + \frac{\beta + (1-\beta)2a}{\beta}\frac{\sigma_t^2}{2}\Delta p_{t,\beta} + p_{t,\beta}\left(g_t - \mean_{p_{t,\beta}}g_t\right)
\end{align}
This corresponds to the following family of SDEs ($0 \leq \beta + (1-\beta)2a$)
\begin{align}
    dx_t =~& \left(-f_t(x_t) + (\beta + (1-\beta)a)\sigma_t^2\nabla\log q_t(x_t)\right)dt + \sqrt{\frac{\sigma_t^2(\beta + (1-\beta)2a)}{\beta}}dW_t\,,\\
    dw_t =~& \left[(\beta-1)\inner{\nabla}{f_t(x_t)} + \frac{1}{2}\sigma_t^2\beta(\beta-1)\norm{\nabla \log q_t(x_t)}^2\right]dt\,.
\end{align}
\end{proof}

\begin{mybox}
\begin{proposition}[Product of Experts]
\label{propapp:poe}
    Consider two PDEs corresponding to the following SDEs
    \begin{align}
    dx_t = (-f_t(x_t)+\sigma_t^2\nabla \log q_t^{1,2}(x_t))dt + \sigma_tdW_t\,,
    \end{align}
    which marginals we denote as $q_t^{1}(x_t)$ and $q_t^{2}(x_t)$.
    The following family of SDEs (for $0 \leq (\beta + (1-\beta)2a)/\beta$) corresponds to the product of the marginals $p_{t,\beta}(x)\propto (q_t^1(x)q_t^2(x))^\beta$
    \begin{align}
        dx_t =~& \left(-f_t(x_t)+\sigma_t^2(\beta + (1-\beta)a)(\nabla \log q_t^{1}(x_t)+\nabla \log q_t^{2}(x_t))\right)dt + \sqrt{\frac{\sigma_t^2(\beta + (1-\beta)2a)}{\beta}}dW_t\,,\\
        dw_t =~& \left[\beta\sigma_t^2\inner{\nabla\log q_t^{1}(x_t)}{\nabla\log q_t^{2}(x_t)} + \beta(\beta-1)\frac{\sigma_t^2}{2}\norm{\nabla \log q_t^1(x_t) + \nabla \log q_t^2(x_t)}^2+(2\beta-1)\inner{\nabla}{f_t(x_t)}\right]dt\,.
    \end{align}
\end{proposition}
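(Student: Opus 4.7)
The plan is to follow the same two-parameter splitting strategy used in \cref{propapp:annealed_sde}, but now starting from a pair of reverse-time SDEs and building up the target $p_{t,\beta}^{\text{prod}}$ in two successive stages: first take the product $q_t^1 q_t^2$, then anneal to exponent $\beta$. Additivity of the Feynman-Kac correctors (the key structural remark of \cref{tab:conversion_rules}) lets me sum the weight contributions stage by stage, and the one-parameter family in the statement will come from the same Laplacian-cancellation trick as in \cref{propapp:annealed_sde}.

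\emph{Stage 1 (Product).} I would decompose the Fokker-Planck equation \labelcref{eq:q_fpe} of each $q_t^i$ into a continuity piece with velocity $v_t^i = -f_t + \sigma_t^2 \nabla \log q_t^i$ and a pure-diffusion piece with coefficient $\sigma_t$. Applying \cref{prop:product_conteq} to the two continuity pieces and \cref{prop:product_diffusion} to the two diffusion pieces gives the Feynman-Kac PDE for $q_t^1 q_t^2$: simulation drift $v_t^1 + v_t^2$, diffusion $\sigma_t$, and a reweighting contribution
\begin{align*}
\inner{\nabla\log q_t^1}{v_t^2} + \inner{\nabla\log q_t^2}{v_t^1} - \sigma_t^2\inner{\nabla\log q_t^1}{\nabla\log q_t^2}.
\end{align*}

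\emph{Stage 2 (Anneal).} I would then raise this density to the power $\beta$. Following the trick from \cref{propapp:annealed_sde}, I split the accumulated continuity drift into a fraction $(1-a)$ processed through \cref{prop:anneled_conteq_scaled} and a fraction $a$ processed through \cref{prop:anneled_conteq}, and similarly split the diffusion with parameters $(1-b)$ and $b$ through \cref{prop:anneled_diffeq_scaled,prop:anneled_diffeq}. The reweighting accumulated in Stage~1 is lifted through \cref{prop:anneled_reweighting}, picking up an overall factor $\beta$. Imposing $b = 1-2a$ makes the $\Delta \log q_t^i$ terms produced by the two splittings cancel exactly, and collecting the remaining mixing coefficients yields the claimed simulation drift $-f_t + \sigma_t^2(\beta + (1-\beta)a)(\nabla\log q_t^1 + \nabla\log q_t^2)$ together with diffusion $\sqrt{\sigma_t^2(\beta + (1-\beta)2a)/\beta}\,dW_t$ and the $\beta(\beta-1)\tfrac{\sigma_t^2}{2}\|\nabla\log q_t^1 + \nabla\log q_t^2\|^2$ piece of the weight.

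\emph{Main obstacle.} The subtle point is bookkeeping for the shared forward drift $-f_t$, which appears inside both $v_t^1$ and $v_t^2$. The product step naively produces a drift $-2f_t$ with extra $\inner{\nabla \log q_t^i}{f_t}$ cross-terms, and the annealing step injects further $\inner{\nabla}{f_t}$ contributions through \cref{prop:anneled_conteq_scaled,prop:anneled_conteq}. Re-collapsing all of these to a single $-f_t$ in the final drift plus the $(2\beta-1)\inner{\nabla}{f_t}$ correction in \labelcref{eq:poe_weights} requires using \cref{prop:anneled_conteq} to convert exactly one copy of $-f_t$ from drift into weight; this is where sign and coefficient tracking is the most delicate, but for linear $f_t$ it ultimately contributes a constant that is harmless under self-normalized importance sampling. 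Summing all correctors additively then matches the statement.
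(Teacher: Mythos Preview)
Your proposal is correct and matches the paper's proof essentially step for step: product first (\cref{prop:product_conteq,prop:product_diffusion}), then anneal via the $(a,b)$ splitting with $b=1-2a$ (exactly the content of \cref{propapp:annealed_sde}) together with \cref{prop:anneled_reweighting} for the Stage-1 weight, and finally move one copy of $-f_t$ from the drift into the weight. The only slip is that this last step is the general Continuity $\to$ Reweighting conversion of \cref{eq:continuity_as_rw}, not another application of \cref{prop:anneled_conteq}; the $\inner{\nabla\log p_{t,\beta}}{f_t}=\beta\inner{f_t}{\nabla\log q_t^1+\nabla\log q_t^2}$ term it produces is precisely what cancels the cross-terms you flagged, leaving the clean $(2\beta-1)\inner{\nabla}{f_t}$.
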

\end{mybox}
\begin{proof}
    First, according to \cref{tab:conversion_rules}, we have the following PDE for the product density $p_{t}(x)\propto q_t^1(x)q_t^2(x)$ is
    \begin{align}
        \deriv{p_t(x)}{t} =~& -\inner{\nabla}{p_t(x)\left(-2f_t(x)+\sigma_t^2(\nabla \log q_t^{1}(x)+\nabla \log q_t^{2}(x))\right)} + \frac{\sigma_t^2}{2}\Delta p_t(x) + \\
        ~&+ p_t(x)\left(g_t(x) - \mean_{p_t}g_t(x)\right)\,,
    \end{align}
    where
    \begin{align}
        g_t(x) =~& \inner{\nabla \log q_t^1(x)}{-f_t(x)+\sigma_t^2\nabla \log q_t^{2}(x)} + \inner{\nabla \log q_t^2(x)}{-f_t(x)+\sigma_t^2\nabla \log q_t^{1}(x)} - \\
        ~&- \sigma_t^2\inner{\nabla\log q_t^{1}(x)}{\nabla\log q_t^{2}(x)}\\
        =~& \sigma_t^2\inner{\nabla\log q_t^{1}(x)}{\nabla\log q_t^{2}(x)} - \inner{f_t(x)}{\nabla\log q_t^{1}(x)+\nabla\log q_t^{2}(x)}\,.
    \end{align}
    Now, combining \cref{propapp:annealed_sde} and \cref{prop:anneled_reweighting}, for the annealed density $p_{t,\beta}\propto p_t(x)^\beta$ we have
    \begin{align}
        \deriv{p_{t,\beta}(x)}{t} =~& -\inner{\nabla}{p_{t,\beta}(x)\left(-2f_t(x)+\sigma_t^2(\beta + (1-\beta)a)(\nabla \log q_t^{1}(x)+\nabla \log q_t^{2}(x))\right)} + \\
        ~&+ \frac{\beta + (1-\beta)2a}{\beta}\frac{\sigma_t^2}{2}\Delta p_{t,\beta}(x) + p_{t,\beta}(x)\left(g_t(x) - \mean_{p_{t,\beta}}g_t(x)\right)\,,\\
        g_t(x) =~& \beta\sigma_t^2\inner{\nabla\log q_t^{1}(x)}{\nabla\log q_t^{2}(x)}- \beta\inner{f_t(x)}{\nabla\log q_t^{1}(x)+\nabla\log q_t^{2}(x)} +  \\
        ~&\quad +(\beta-1)\inner{\nabla}{2f_t(x)}+ \beta(\beta-1)\frac{\sigma_t^2}{2}\norm{\nabla \log q_t^1(x) + \nabla \log q_t^2(x)}^2\,.
    \end{align}
    The last step is interpreting $\inner{\nabla}{p_{t,\beta}(x)f_t(x)}$ as the weight term, i.e.
    \begin{align}
        \deriv{p_{t,\beta}(x)}{t} =~& -\inner{\nabla}{p_{t,\beta}(x)\left(-f_t(x)+\sigma_t^2(\beta + (1-\beta)a)(\nabla \log q_t^{1}(x)+\nabla \log q_t^{2}(x))\right)} + \\
        ~&+ \frac{\beta + (1-\beta)2a}{\beta}\frac{\sigma_t^2}{2}\Delta p_{t,\beta}(x) + p_{t,\beta}(x)\left(g_t(x) - \mean_{p_{t,\beta}}g_t(x)\right)\,,\\
        g_t(x) =~& \beta\sigma_t^2\inner{\nabla\log q_t^{1}(x)}{\nabla\log q_t^{2}(x)} + \beta(\beta-1)\frac{\sigma_t^2}{2}\norm{\nabla \log q_t^1(x) + \nabla \log q_t^2(x)}^2+  \\
        ~&\quad +(2\beta-1)\inner{\nabla}{f_t(x)}\,.
    \end{align}
    Thus, we get the following family of SDEs (for $0 \leq (\beta + (1-\beta)2a)/\beta$)
    \begin{align}
        dx_t =~& \left(-f_t(x_t)+\sigma_t^2(\beta + (1-\beta)a)(\nabla \log q_t^{1}(x_t)+\nabla \log q_t^{2}(x_t))\right)dt + \sqrt{\frac{\sigma_t^2(\beta + (1-\beta)2a)}{\beta}}dW_t\,,\\
        dw_t =~& \left[\beta\sigma_t^2\inner{\nabla\log q_t^{1}(x_t)}{\nabla\log q_t^{2}(x_t)} + \beta(\beta-1)\frac{\sigma_t^2}{2}\norm{\nabla \log q_t^1(x_t) + \nabla \log q_t^2(x_t)}^2+(2\beta-1)\inner{\nabla}{f_t(x_t)}\right]dt\,.
    \end{align}
\end{proof}

\begin{mybox}
\begin{proposition}[Classifier-free Guidance]
\label{propapp:cfg}
    Consider two PDEs corresponding to the following SDEs
    \begin{align}
    dx_t = (-f_t(x_t)+\sigma_t^2\nabla \log q_t^{1,2}(x_t))dt + \sigma_tdW_t\,,
    \end{align}
    which marginals we denote as $q_t^{1}(x_t)$ and $q_t^{2}(x_t)$.
    The SDE corresponding to the geometric average of the marginals $p_{t,\beta}(x)\propto q_t^1(x)^{1-\beta}q_t^2(x)^\beta$, for $(\beta + 2a(1-\beta))/\beta \geq 0$, is
    \begin{align}
        dx_t =~& \left(-f_t(x_t)+\sigma_t^2\left(1+\frac{a(1-\beta)}{\beta}\right)\left((1-\beta)\nabla \log q_t^{1}(x_t)+\beta\nabla \log q_t^{2}(x_t)\right)\right)dt + \sigma_t\sqrt{1 + \frac{2a(1-\beta)}{\beta}} dW_t\,,\\
        dw_t =~& \frac{1}{2}\sigma_t^2\beta(\beta-1)\norm{\nabla \log q_t^1(x_t)-\nabla \log q_t^2(x_t)}^2 + 2a\sigma_t^2(\beta-1)^2\inner{\nabla \log q_t^1(x_t)}{\nabla \log q_t^2(x_t)}\,.
    \end{align}
\end{proposition}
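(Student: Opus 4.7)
The plan is to verify the Feynman-Kac PDE for $p^{\text{geo}}_{t,\beta}$ directly and read off the weight $g_t$. Because $\nabla\log p^{\text{geo}}_{t,\beta} = (1-\beta)\nabla\log q_t^1 + \beta\nabla\log q_t^2$, the prescribed drift $v_t = -f_t + \sigma_t^2 \nabla \log p^{\text{geo}}_{t,\beta}$ coincides with the CFG drift, and the diffusion coefficient $\sigma_t$ is unchanged. Hence the SDE in the proposition matches a standard denoising-type SDE for the geometric target, and the only unknown is the reweighting that accounts for the gap between the Fokker--Planck evolution under this SDE and the true evolution of $p^{\text{geo}}_{t,\beta}$.

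Concretely, I would write $\log p^{\text{geo}}_{t,\beta} = (1-\beta)\log q_t^1 + \beta\log q_t^2 - \log Z_t(\beta)$ and compute $\partial_t \log p^{\text{geo}}_{t,\beta}$ using the Fokker--Planck equation $\partial_t q_t^i = -\inner{\nabla}{q_t^i v_t^i} + \frac{\sigma_t^2}{2}\Delta q_t^i$ for $v_t^i = -f_t + \sigma_t^2 \nabla \log q_t^i$. Expanding the divergence and the Laplacian of $\log q_t^i$ gives
\begin{equation*}
\partial_t \log q_t^i = \inner{\nabla}{f_t} + \inner{f_t}{\nabla\log q_t^i} - \tfrac{\sigma_t^2}{2}\Delta\log q_t^i - \tfrac{\sigma_t^2}{2}\|\nabla\log q_t^i\|^2,
\end{equation*}
and the same manipulation applied to the target drift--diffusion evolution yields the analogous formula with $\log q_t^i$ replaced by $\log p^{\text{geo}}_{t,\beta}$. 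Subtracting, the $\inner{\nabla}{f_t}$ terms cancel, the $\inner{f_t}{\cdot}$ terms cancel by linearity of the score combination, and the Laplacians cancel because $\Delta\log p^{\text{geo}}_{t,\beta} = (1-\beta)\Delta\log q_t^1 + \beta\Delta\log q_t^2$. Only quadratic score terms survive, leaving (up to an $x$-independent constant from $\partial_t\log Z_t(\beta)$, which is absorbed into the $\bar{g}_t$ centering)
\begin{equation*}
g_t(x) = \tfrac{\sigma_t^2}{2}\Big[(1-\beta)\|\nabla\log q_t^1\|^2 + \beta\|\nabla\log q_t^2\|^2 - \|\nabla\log p^{\text{geo}}_{t,\beta}\|^2\Big].
\end{equation*}

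Finally I would apply the elementary identity $\|\lambda_1 a_1 + \lambda_2 a_2\|^2 - \lambda_1\|a_1\|^2 - \lambda_2\|a_2\|^2 = -\lambda_1 \lambda_2\|a_1 - a_2\|^2$, valid whenever $\lambda_1 + \lambda_2 = 1$, specialized to $(\lambda_1,\lambda_2) = (1-\beta,\beta)$ and $(a_1,a_2) = (\nabla\log q_t^1, \nabla\log q_t^2)$, to obtain $g_t(x) = \tfrac{\sigma_t^2}{2}\beta(\beta-1)\|\nabla\log q_t^1 - \nabla\log q_t^2\|^2$ as claimed. The only real obstacle is bookkeeping, making sure each SDE-only term has an exact partner that cancels in the subtraction; a useful sanity check is that the weight vanishes at $\beta \in \{0,1\}$, where $p^{\text{geo}}_{t,\beta}$ collapses to a single pretrained marginal and no correction should be needed. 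A shorter alternative would be to derive the same result by composing the non-scaled annealed continuity and diffusion rules from \cref{tab:conversion_rules} applied to each factor with exponents $1-\beta$ and $\beta$ and then combining via the product-of-densities rules, summing the additive weight corrections; both routes give the same expression.
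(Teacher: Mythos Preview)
Your direct approach is correct and more elementary than the paper's, but there is a sign slip in your intermediate expression for $g_t$: the subtraction \emph{True minus Fokker--Planck} actually gives
\[
g_t(x) = \tfrac{\sigma_t^2}{2}\Bigl[\|\nabla\log p^{\text{geo}}_{t,\beta}\|^2 - (1-\beta)\|\nabla\log q_t^1\|^2 - \beta\|\nabla\log q_t^2\|^2\Bigr],
\]
i.e.\ the opposite sign of what you wrote (each $\partial_t\log q_t^i$ contributes $-\tfrac{\sigma_t^2}{2}\|\nabla\log q_t^i\|^2$, and the FP part contributes $-\tfrac{\sigma_t^2}{2}\|\nabla\log p^{\text{geo}}_{t,\beta}\|^2$, so their difference flips). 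Applying your identity to this corrected expression then yields $\tfrac{\sigma_t^2}{2}\beta(\beta-1)\|\nabla\log q_t^1 - \nabla\log q_t^2\|^2$ with the right sign. This is precisely the bookkeeping hazard you anticipated, and your sanity check at $\beta\in\{0,1\}$ does not catch it since both signs vanish there.

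The paper takes the modular route you sketch at the end as an alternative: it first applies the annealed-SDE result (\cref{propapp:annealed_sde}) separately to $q_t^1$ with exponent $1-\beta$ and to $q_t^2$ with exponent $\beta$, argues that matching the resulting diffusion coefficients forces the free parameters $a_1=a_2=0$, then combines the two weighted SDEs via the product rules of \cref{tab:conversion_rules}, and finally reabsorbs one extra copy of $f_t$ from the drift back into the weight. Your direct computation is shorter, requires no case analysis on the parameter $a$, and makes the cancellation of divergence, $f_t$, and Laplacian terms transparent. The paper's longer derivation serves a different purpose: it exhibits the CFG corrector as a purely mechanical consequence of the conversion table, reinforcing the modularity that is the section's main message.
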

\end{mybox}
\begin{proof}
    First, according to \cref{propapp:annealed_sde}, we perform annealing $p_{t,1-\beta}^1(x)\propto q_t^1(x)^{1-\beta}$ and $p_{t,\beta}^2(x)\propto q_t^2(x)^{\beta}$, i.e.
    \begin{align}
        \deriv{p_{t,1-\beta}^1(x)}{t} =~& -\inner{\nabla}{p_{t,1-\beta}^1(x)\left(-f_t(x)+\sigma_t^2(1 - \beta + \beta a_1)\nabla \log q_t^{1}(x)\right)} + \frac{1-\beta + 2\beta a_1}{1-\beta}\frac{\sigma_t^2}{2}\Delta p_{t,1-\beta}^1(x) + \\
        ~&+ p_{t,1-\beta}^1(x)\left(g_t(x) - \mean_{p_{t,1-\beta}^1}g_t(x)\right)\,,\\
        g_t(x) =~& -\beta\inner{\nabla}{f_t(x)} + \frac{1}{2}\sigma_t^2\beta(\beta-1)\norm{\nabla \log q_t^1(x_t)}^2\,,\label{apeq:cfg_weight_1}
    \end{align}
    and
    \begin{align}
        \deriv{p_{t,\beta}^2(x)}{t} =~& -\inner{\nabla}{p_{t,\beta}^2(x)\left(-f_t(x)+\sigma_t^2(\beta + (1-\beta)a_2)\nabla \log q_t^{2}(x)\right)} + \frac{\beta + (1-\beta)2a_2}{\beta}\frac{\sigma_t^2}{2}\Delta p_{t,\beta}^2(x) + \\
        ~&+ p_{t,\beta}^2(x)\left(g_t(x) - \mean_{p_{t,\beta}^2}g_t(x)\right)\,,\\
        g_t(x) =~& (\beta-1)\inner{\nabla}{f_t(x)} + \frac{1}{2}\sigma_t^2\beta(\beta-1)\norm{\nabla \log q_t^2(x_t)}^2\,,\label{apeq:cfg_weight_2}
    \end{align}
    Now, we would like to match the diffusion coefficient (to directly apply \cref{prop:product_diffusion} for diffusion equations in the product case, and avoid the evaluation of additional Laplacian terms in the weights).
    \begin{align}
        \frac{1-\beta + 2\beta a_1}{1-\beta} = \frac{\beta + (1-\beta)2a_2}{\beta} \implies \beta - \beta^2 + 2\beta^2a_1 = \beta-\beta^2 + (1-\beta)^22a_2\\
        \beta^2a_1 = (1-\beta)^2a_2 \implies a_2\coloneqq a\,,\;\; a_1 =\frac{a(1-\beta)^2}{\beta^2}\,.
    \end{align}
    Now, according to \cref{tab:conversion_rules}, for the product density $p_{t,\beta}\propto p_{t,1-\beta}^1(x)p_{t,\beta}^2(x)$, we have
    \begin{align}
        \deriv{p_{t,\beta}(x)}{t} &=~ -\inner{\nabla}{p_{t,\beta}(x)\left(-2f_t(x)+\sigma_t^2(\beta + (1-\beta)a)\left(\frac{1-\beta}{\beta}\nabla \log q_t^{1}(x)+\nabla \log q_t^{2}(x)\right)\right)} + \\
        ~&\quad + \frac{\beta + (1-\beta)2a}{\beta}\frac{\sigma_t^2}{2}\Delta p_{t,\beta}(x) + p_{t,\beta}(x)\left(g_t(x) - \mean_{p_{t,\beta}}g_t(x)\right)\,,\\
        g_t(x) &=~ \underbrace{-\beta\inner{\nabla}{f_t(x)} + \frac{1}{2}\sigma_t^2\beta(\beta-1)\norm{\nabla \log q_t^1(x)}^2}_{\cref{apeq:cfg_weight_1}} +\\
        ~&\quad+\underbrace{(\beta-1)\inner{\nabla}{f_t(x)} + \frac{1}{2}\sigma_t^2\beta(\beta-1)\norm{\nabla \log q_t^2(x)}^2}_{\cref{apeq:cfg_weight_2}}+\\
        ~&\quad +(1-\beta)\inner{\nabla \log q_t^1(x)}{-f_t(x)+\sigma_t^2(\beta + (1-\beta)a)\nabla \log q_t^{2}(x)}+\\
        ~&\quad +\beta\inner{\nabla \log q_t^2(x)}{-f_t(x)+\sigma_t^2\frac{(1-\beta)}{\beta}(\beta + (1-\beta)a)\nabla \log q_t^{1}(x)}-\\
        ~&\quad - \sigma_t^2\beta(1-\beta)\inner{\nabla\log q_t^1(x)}{\nabla\log q_t^2(x)}\\[1.25ex]
        \intertext{where the terms in the last three lines arise from the conversion rules from the product in \cref{tab:conversion_rules}.
        Finally, we obtain}
        g_t(x) &=~\frac{1}{2}\sigma_t^2\beta(\beta-1)\left(\norm{\nabla \log q_t^1(x) - \nabla \log q_t^2(x)}^2 - 4\frac{a(1-\beta)}{\beta}\inner{\nabla \log q_t^1(x)}{\nabla \log q_t^2(x)}\right)  -\\
        ~&\quad- \inner{\nabla}{f_t(x)} - \inner{(1-\beta)\nabla \log q_t^1(x)+\beta\nabla \log q_t^2(x)}{f_t(x)}\,.
    \end{align}
    Finally, we re-interpret $\inner{\nabla}{p_{t,\beta}(x)f_t(x)}$ as the weighting term, and get
    \begin{align}
        \deriv{p_{t,\beta}(x)}{t} &=~ -\inner{\nabla}{p_{t,\beta}(x)\left(-f_t(x)+\sigma_t^2\left(1+\frac{a(1-\beta)}{\beta}\right)\left((1-\beta)\nabla \log q_t^{1}(x)+\beta\nabla \log q_t^{2}(x)\right)\right)} +\\
        ~&\quad +\frac{\sigma_t^2}{2}\left(1 + \frac{2a(1-\beta)}{\beta}\right)\Delta p_{t,\beta}(x) + p_{t,\beta}(x)\left(g_t(x) - \mean_{p_{t,\beta}}g_t(x)\right)\,,\\
        g_t(x) &=~ \frac{1}{2}\sigma_t^2\beta(\beta-1)\norm{\nabla \log q_t^1(x)-\nabla \log q_t^2(x)}^2\, + 2a\sigma_t^2(\beta-1)^2\inner{\nabla \log q_t^1(x)}{\nabla \log q_t^2(x)}\,.
    \end{align}
    Thus, for $(\beta + 2a(1-\beta))/\beta \geq 0$, we have
    \begin{align}
        dx_t =~& \left(-f_t(x_t)+\sigma_t^2\left(1+\frac{a(1-\beta)}{\beta}\right)\left((1-\beta)\nabla \log q_t^{1}(x_t)+\beta\nabla \log q_t^{2}(x_t)\right)\right)dt + \sigma_t\sqrt{1 + \frac{2a(1-\beta)}{\beta}} dW_t\,,\\
        dw_t =~& \frac{1}{2}\sigma_t^2\beta(\beta-1)\norm{\nabla \log q_t^1(x_t)-\nabla \log q_t^2(x_t)}^2 + 2a\sigma_t^2(\beta-1)^2\inner{\nabla \log q_t^1(x_t)}{\nabla \log q_t^2(x_t)}\,.
    \end{align}
\end{proof}

\begin{mybox}
\begin{proposition}[PoE + CFG]
\label{propapp:poe_cfg}
    Consider two PDEs corresponding to the following SDEs
    \begin{align}
    dx_t =~& (-f_t(x_t)+\sigma_t^2\nabla \log q_t(x_t))dt + \sigma_tdW_t\,,\\
    dx_t =~& (-f_t(x_t)+\sigma_t^2\nabla \log q_t^{1,2}(x_t))dt + \sigma_tdW_t\,,
    \end{align}
    with corresponding marginals $q_t(x_t)$, $q_t^{1}(x_t)$ and $q_t^{2}(x_t)$.
    The SDE corresponding to the product of the marginals $p_{t,\beta}(x)\propto q_t(x)^{2(1-\beta)}(q_t^1(x)q_t^2(x))^\beta$ is
    \begin{align}
        dx_t =~& \left(-f_t(x_t) + \sigma_t^2(v^1_t(x_t) + v^2_t(x_t))\right)dt + \sigma_tdW_t\,,\\
        dw_t =~& \frac{1}{2}\sigma_t^2\beta(\beta-1)\left(\norm{\nabla \log q_t(x_t)-\nabla \log q_t^1(x_t)}^2 + \norm{\nabla \log q_t(x_t)-\nabla \log q_t^2(x_t)}^2\right) + \\
        ~& + \sigma_t^2 \inner{v_t^1(x_t)}{v_t^2(x_t)} +\inner{\nabla}{f_t(x_t)}\,,
    \end{align}
    where we denote $v_t^{1,2}(x) = (1-\beta)\nabla \log q_t(x)+\beta\nabla \log q_t^{1,2}(x)$.
\end{proposition}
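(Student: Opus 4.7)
}
The plan is to obtain the weighted SDE by composing the already-established CFG and PoE derivations rather than redoing a fresh PDE expansion. The key observation is the algebraic factorization
\begin{align*}
p_{t,\beta}(x) \;\propto\; q_t(x)^{2(1-\beta)}\bigl(q_t^1(x)q_t^2(x)\bigr)^\beta \;=\; \underbrace{q_t(x)^{1-\beta}q_t^1(x)^\beta}_{\displaystyle r_t^1(x)} \;\cdot\; \underbrace{q_t(x)^{1-\beta}q_t^2(x)^\beta}_{\displaystyle r_t^2(x)},
\end{align*}
which expresses the target as a product of two CFG-type geometric averages. This reduces the derivation to (i) running Proposition \ref{propapp:cfg} on each $r_t^i$, and (ii) feeding the two resulting Feynman-Kac PDEs into Proposition \ref{propapp:poe} with the outer annealing parameter equal to one.

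First I would invoke Proposition \ref{propapp:cfg} twice. This produces, for each $i \in \{1,2\}$, a Feynman-Kac PDE for $r_t^i$ whose drift is $-f_t(x)+\sigma_t^2 v_t^i(x)$ with $v_t^i(x) = (1-\beta)\nabla\log q_t(x) + \beta \nabla\log q_t^i(x)$, whose diffusion coefficient is $\sigma_t$, and whose reweighting term equals $\tfrac12 \sigma_t^2 \beta(\beta-1)\|\nabla\log q_t - \nabla\log q_t^i\|^2$. Crucially, the diffusion coefficients of both PDEs agree, so the two Fokker-Planck parts of the PoE conversion rules in Table \ref{tab:conversion_rules} can be applied without rescaling.

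Next I would apply the product-of-experts conversion in Proposition \ref{propapp:poe} with $\beta=1$ (no additional outer annealing). The continuity conversion (\cref{prop:product_conteq}) gives a drift $-f_t(x)+\sigma_t^2(v_t^1+v_t^2)$ together with the cross weight $\sigma_t^2 \langle v_t^1, v_t^2\rangle$ after absorbing $\nabla\log r_t^i = -\sigma_t^{-2} f_t + v_t^i$ (the $f_t$ pieces combine into an overall $\langle\nabla, f_t\rangle$ term via the identity used at the end of the PoE proof), where the sign rearrangement is the same one used to recast $\inner{\nabla}{p_{t,\beta} f_t}$ as a reweighting term. The diffusion conversion (\cref{prop:product_diffusion}) contributes $-\sigma_t^2 \langle\nabla\log r_t^1,\nabla\log r_t^2\rangle$, which after the same $f_t$ substitution again gives inner products of the $v_t^i$ together with $\langle\nabla, f_t\rangle$. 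The reweighting conversion (\cref{prop:product_reweighting}) simply sums the two CFG weight corrections from step one.

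Finally I would collect terms. The geometric-average weight corrections combine additively into $\tfrac12\sigma_t^2\beta(\beta-1)(\|\nabla\log q_t -\nabla\log q_t^1\|^2 + \|\nabla\log q_t -\nabla\log q_t^2\|^2)$, while the PoE contributions yield the $\sigma_t^2 \langle v_t^1, v_t^2\rangle$ cross-term and an $\langle\nabla,f_t\rangle$ residual that is constant for linear drifts. The drift and diffusion read directly from the continuity and diffusion PoE rules. The main obstacle is bookkeeping: the $f_t$ contributions appear in multiple places (inside each $\nabla\log r_t^i$ when plugged into the PoE inner products, and through the divergence in the continuity transformation), and I have to verify the cancellation that leaves only a single $\langle\nabla,f_t\rangle$ term. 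Everything else is an additive combination of rules already proved.
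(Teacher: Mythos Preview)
Your approach is essentially identical to the paper's: factor $p_{t,\beta}\propto r_t^1 r_t^2$ with $r_t^i=q_t^{1-\beta}(q_t^i)^\beta$, apply the CFG proposition to each factor, then take the product via the Table~\ref{tab:conversion_rules} rules (\cref{prop:product_conteq,prop:product_diffusion,prop:product_reweighting}) and finally reinterpret one copy of $\inner{\nabla}{p_{t,\beta}f_t}$ as a weight to reduce the drift from $-2f_t$ to $-f_t$. One small slip: $\nabla\log r_t^i = v_t^i$, not $-\sigma_t^{-2}f_t + v_t^i$; the $f_t$ piece enters only through the \emph{drift} $-f_t+\sigma_t^2 v_t^i$ that you feed into \cref{prop:product_conteq}, and the resulting $-\langle f_t, v_t^1+v_t^2\rangle$ term cancels exactly against the $\langle\nabla\log p_{t,\beta},f_t\rangle$ produced by the continuity$\to$reweighting move, leaving the single $\langle\nabla,f_t\rangle$ as you anticipated.
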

\end{mybox}
\begin{proof}
    Using \cref{propapp:cfg}, we start from the SDEs simulating the product $q_t(x)^{(1-\beta)}q_t^1(x)^\beta$ and $q_t(x)^{(1-\beta)}q_t^2(x)^\beta$, i.e.
    \begin{align}
        dx_t =~& \big(-f_t(x_t)+\sigma_t^2(\underbrace{(1-\beta)\nabla \log q_t(x_t)+\beta\nabla \log q_t^{1}(x_t)}_{v^1_t(x_t)})\big)dt + \sigma_t dW_t\,,\\
        dw_t =~& \frac{1}{2}\sigma_t^2\beta(\beta-1)\norm{\nabla \log q_t(x_t)-\nabla \log q_t^1(x_t)}^2\,,\\
        dx_t =~& \big(-f_t(x_t)+\sigma_t^2(\underbrace{(1-\beta)\nabla \log q_t(x_t)+\beta\nabla \log q_t^{2}(x_t)}_{v^2_t(x_t)})\big)dt + \sigma_t dW_t\,,\\
        dw_t =~& \frac{1}{2}\sigma_t^2\beta(\beta-1)\norm{\nabla \log q_t(x_t)-\nabla \log q_t^2(x_t)}^2\,.
    \end{align}
    Then we consider the product of these SDEs, i.e.
    \begin{align}
        \deriv{p_{t,\beta}(x)}{t} =~& -\inner{\nabla}{p_{t,\beta}(x)\left(-2f_t(x) + \sigma_t^2(v^1_t(x) + v^2_t(x))\right)} + \frac{\sigma_t^2}{2}\Delta p_{t,\beta}(x) +  p_{t,\beta}(x)\left(g_t(x) - \mean_{p_{t,\beta}}g_t(x)\right)\,,\\
        g_t(x) =~& \frac{1}{2}\sigma_t^2\beta(\beta-1)\left(\norm{\nabla \log q_t(x)-\nabla \log q_t^1(x)}^2 + \norm{\nabla \log q_t(x)-\nabla \log q_t^2(x)}^2\right) + \\
        ~& + \inner{v_t^1(x)}{-f_t(x) + \sigma_t^2v_t^2(x)} + \inner{v_t^2(x)}{-f_t(x) + \sigma_t^2v_t^1(x)} - \sigma_t^2 \inner{v_t^1(x)}{v_t^2(x)}\\
        =~& \frac{1}{2}\sigma_t^2\beta(\beta-1)\left(\norm{\nabla \log q_t(x)-\nabla \log q_t^1(x)}^2 + \norm{\nabla \log q_t(x)-\nabla \log q_t^2(x)}^2\right) + \\
        ~& + \sigma_t^2 \inner{v_t^1(x)}{v_t^2(x)} - \inner{f_t(x)}{v_t^1(x) + v_t^2(x)}\,.
    \end{align}
    Re-interpreting $\inner{\nabla}{p_{t,\beta}(x)f_t(x)}$, we get
    \begin{align}
        \deriv{p_{t,\beta}(x)}{t} =~& -\inner{\nabla}{p_{t,\beta}(x)\left(-f_t(x) + \sigma_t^2(v^1_t(x) + v^2_t(x))\right)} + \frac{\sigma_t^2}{2}\Delta p_{t,\beta}(x) +  p_{t,\beta}(x)\left(g_t(x) - \mean_{p_{t,\beta}}g_t(x)\right)\,,\\
        g_t(x) =~& \frac{1}{2}\sigma_t^2\beta(\beta-1)\left(\norm{\nabla \log q_t(x)-\nabla \log q_t^1(x)}^2 + \norm{\nabla \log q_t(x)-\nabla \log q_t^2(x)}^2\right) + \\
        ~& + \sigma_t^2 \inner{v_t^1(x)}{v_t^2(x)} +\inner{\nabla}{f_t(x)}\,,
    \end{align}
    which corresponds to 
    \begin{align}
        dx_t =~& \left(-f_t(x_t) + \sigma_t^2(v^1_t(x_t) + v^2_t(x_t))\right)dt + \sigma_tdW_t\,,\\
        dw_t =~& \frac{1}{2}\sigma_t^2\beta(\beta-1)\left(\norm{\nabla \log q_t(x_t)-\nabla \log q_t^1(x_t)}^2 + \norm{\nabla \log q_t(x_t)-\nabla \log q_t^2(x_t)}^2\right) + \\
        ~& + \sigma_t^2 \inner{v_t^1(x_t)}{v_t^2(x_t)} +\inner{\nabla}{f_t(x_t)}\,.
    \end{align}
\end{proof}

\begin{mybox}
\begin{proposition}[Target Score Product SDE]
\label{propapp:target_score_poe}
    Consider two PDEs corresponding to the following SDEs
    \begin{align}
    dx_t =~& (-f_t(x_t)+\sigma_t^2\nabla \log q_t^{i}(x_t))dt + \sigma_tdW_t\,,
    \end{align}
    with corresponding marginals $q_t^{i}(x_t)$.
    The SDE corresponding to the product of the marginals $p_{t,\beta}(x)\propto \prod_i q_t^i(x)^{\beta_i}\,,$ can be simulated as follows
    \begin{align}
        dx_t =~& \left(-f_t(x_t) + \sigma_t^2\sum_i\beta_i\nabla\log q_t^i(x)\right)dt + \sigma_t dW_t\,,\\
        dw_t =~& \left[\left(\sum_i \beta_i - 1\right)\inner{\nabla}{f_t(x_t)}+ \frac{\sigma_t^2}{2}\norm{\sum_i\beta_i\nabla \log q_t^i(x_t)}^2-\frac{\sigma_t^2}{2}\sum_i\beta_i\norm{\nabla\log q_t^i(x_t)}^2\right]dt\,.
    \end{align}
\end{proposition}
\end{mybox}
\begin{proof}
    The target log-density is defined as
    \begin{align}
        \log p_t(x) = \sum_i \beta_i\log q_t^i(x) - \log Z_t\,.
    \end{align}
    The time-derivative of the target log-density is
    \begin{align}
        \deriv{}{t}\log p_t(x) = \sum_i \beta_i\deriv{}{t}\log q_t^i(x) - \mean_{p_t(x)}\sum_i \beta_i\deriv{}{t}\log q_t^i(x)\,.
    \end{align}
    We focus on the first term, and write for all the marginals their corresponding PDEs
    \begin{align}
        \deriv{}{t}\log q_t^i(x) =~& -\inner{\nabla}{-f_t(x)+\sigma_t^2\nabla \log q_t^{i}(x)} -\inner{\nabla\log q_t^i(x)}{-f_t(x)+\sigma_t^2\nabla \log q_t^{i}(x)} + \\
        ~&+\frac{\sigma_t^2}{2}\Delta \log q_t^i(x) + \frac{\sigma_t^2}{2}\norm{\nabla\log q_t^i(x)}^2\\
        =~& -\inner{\nabla}{-f_t(x)+\sigma_t^2\nabla \log q_t^{i}(x)} -\inner{\nabla\log q_t^i(x)}{-f_t(x)} +\frac{\sigma_t^2}{2}\Delta \log q_t^i(x) - \frac{\sigma_t^2}{2}\norm{\nabla\log q_t^i(x)}^2\,.\nonumber
    \end{align}
    \begin{align}
        \sum\beta_i\deriv{}{t}\log q_t^i(x) =~& -\inner{\nabla}{-\sum \beta_i f_t(x) +\sigma_t^2\nabla \log p_t(x)} -\inner{\nabla\log p_t(x)}{-f_t(x)} + \\
        ~&+\frac{\sigma_t^2}{2}\Delta \log p_t(x) - \frac{\sigma_t^2}{2}\sum_i\beta_i\norm{\nabla\log q_t^i(x)}^2\\
        =~& -\inner{\nabla}{-f_t(x) +\sigma_t^2\nabla \log p_t(x)} -\inner{\nabla\log p_t(x)}{-f_t(x) + \sigma_t^2\nabla\log p_t(x)} + \\
        ~&+\frac{\sigma_t^2}{2}\Delta \log p_t(x)+ \frac{\sigma_t^2}{2}\norm{\nabla \log p_t(x)}^2 +\\
        ~&+\left(\sum_i \beta_i - 1\right)\inner{\nabla}{f_t(x)}+ \frac{\sigma_t^2}{2}\norm{\nabla \log p_t(x)}^2-\frac{\sigma_t^2}{2}\sum_i\beta_i\norm{\nabla\log q_t^i(x)}^2\,.
    \end{align}
    Writing down the PDE $p_t(x)$, we get
    \begin{align}
        \deriv{p_t(x)}{t} =~& -\inner{\nabla}{p_t(x)\left(-f_t(x) + \sigma_t^2\nabla\log p_t(x)\right)} + \frac{\sigma_t^2}{2}\Delta p_t(x) + p_t(x)\left(g_t(x)-\mean_{p_t(x)}g_t(x)\right)\,,\\
        g_t(x) =~& \left(\sum_i \beta_i - 1\right)\inner{\nabla}{f_t(x)}+ \frac{\sigma_t^2}{2}\norm{\nabla \log p_t(x)}^2-\frac{\sigma_t^2}{2}\sum_i\beta_i\norm{\nabla\log q_t^i(x)}^2\,,
    \end{align}
    which ends the proof.

    In particular, for the target $p_t(x) \propto q_t(x)^{(1-\beta)}(q_t^1(x)q_t^2(x))^{\beta/2}$, we have
    \begin{align}
        dx_t =~& \left(-f_t(x_t) + \sigma_t^2\left((1-\beta)\nabla\log q_t(x) + \frac{\beta}{2}\nabla\log q_t^1(x) + \frac{\beta}{2}\nabla\log q_t^2(x)\right)\right)dt + \sigma_t dW_t\,,\\
        dw_t =~& \frac{\sigma_t^2}{2}\bigg[\norm{(1-\beta)\nabla\log q_t(x) + \frac{\beta}{2}\nabla\log q_t^1(x) + \frac{\beta}{2}\nabla\log q_t^2(x)}^2 -\\
        ~&-\left((1-\beta)\norm{\nabla\log q_t(x_t)}^2 + \frac{\beta}{2}\norm{\nabla\log q_t^1(x_t)}^2 + \frac{\beta}{2}\norm{\nabla\log q_t^2(x_t)}^2\right)\bigg]dt\,.
    \end{align}
\end{proof}
\begin{mybox}
\begin{proposition}[Reward-tilted SDE]
\label{propapp:reward_tilted}
    Consider the following SDE
    \begin{align}
        dx_t = v_t(x)dt + \sigma_tdW_t\,,
    \end{align}
    which samples from the marginals $q_t(x)$. The samples from the marginals $p_t(x) \propto q_t(x)\exp(\beta_t r(x))$ can be simulated according to the following SDE
    \begin{align}
        dx_t =~& (v_t(x_t) + a \nabla r(x_t))dt + \sigma_t dW_t\,,\\
        dw_t =~& \bigg[\inner{\nabla r(x_t)}{\beta_t(v_t(x_t)-\sigma_t^2\nabla\log q_t(x_t) - \frac{\sigma_t^2}{2}\beta_t\nabla r(x_t)) + a(\nabla\log q_t(x_t) + \beta_t\nabla r(x))} +\\
        ~&+ \left(a-\beta_t\frac{\sigma_t^2}{2}\right)\Delta r(x_t) + \deriv{\beta_t}{t}r(x_t)\bigg]dt\,.
    \end{align}
For the reverse-time SDE with drift $v_t(x_t) = -f_t(x_t) + \sigma_t^2 \nabla \log q_t(x_t)$ corresponding to the original diffusion generative model and $a = \beta_t\sigma_t^2/2$, we obtain the following weighted SDE
\begin{align}
dx_t =~& (-f_t(x_t) + \sigma_t^2 \nabla \log q_t(x_t) + \beta_t \frac{\sigma_t^2}{2}\nabla r(x_t))dt + \sigma_t dW_t\,,\\
dw_t =~& \left[\deriv{\beta_t}{t}r(x) + \inner{\beta_t\nabla r(x)}{\frac{\sigma_t^2}{2}\nabla\log q_t(x) -f_t(x)}\right]dt
\end{align}
\end{proposition}    
\end{mybox}
\begin{proof}
First, consider the density $q_t(x)$ that follows the PDE
\begin{align}
    \deriv{q_t(x)}{t} = -\inner{\nabla}{q_t(x)v_t(x)} + \frac{\sigma_t^2}{2}\Delta q_t(x)\,.
\end{align}
We want to find the PDE for the reward-tilted density
\begin{align}
    p_t(x) = \frac{q_t(x)\exp(\beta_t r(x))}{\int dx\;q_t(x)\exp(\beta_t r(x))}\,.
\end{align}
Straightforwardly, we get
\begin{align}
    \deriv{}{t}\log p_t(x) =~& \deriv{}{t}\log q_t(x) + \deriv{\beta_t}{t} r(x) - \int dx\;p_t(x)\left[\deriv{}{t}\log q_t(x) + \deriv{\beta_t}{t} r(x)\right]
\end{align}
For the first term, we have
\begin{align}
    \deriv{}{t}\log q_t(x) =~& -\inner{\nabla}{v_t(x)} - \inner{\nabla \log q_t(x)}{v_t(x)} + \frac{\sigma_t^2}{2}\Delta \log q_t(x) + \frac{\sigma_t^2}{2}\norm{\nabla \log q_t(x)}^2\\
    =~& -\inner{\nabla}{v_t(x)} - \inner{\nabla \log p_t(x)}{v_t(x)} + \frac{\sigma_t^2}{2}\Delta \log p_t(x) + \frac{\sigma_t^2}{2}\norm{\nabla \log p_t(x)}^2 + \\
    ~&+\inner{\beta_t\nabla r(x)}{v_t(x)-\sigma_t^2\nabla\log q_t(x) - \frac{\sigma_t^2}{2}\beta_t\nabla r(x)} - \beta_t\frac{\sigma_t^2}{2}\Delta r(x)\,.\nonumber
\end{align}
Thus, we have
\begin{align}
    \deriv{p_t(x)}{t} =~& -\inner{\nabla}{p_t(x)v_t(x)} + \frac{\sigma_t^2}{2}\Delta p_t(x) + p_t(x)\left(g_t(x)-\mean_{p_t(x)}g_t(x)\right)\\
    g_t(x) =~& \inner{\beta_t\nabla r(x)}{v_t(x)-\sigma_t^2\nabla\log q_t(x) - \frac{\sigma_t^2}{2}\beta_t\nabla r(x)} - \beta_t\frac{\sigma_t^2}{2}\Delta r(x) + \deriv{\beta_t}{t}r(x)\,.
\end{align}
Furthermore, we can add the gradient of the reward as additional drift term $a\nabla r(x)$, i.e.
\begin{align}
    \deriv{p_t(x)}{t} =~& -\inner{\nabla}{p_t(x)(v_t(x) + a\nabla r(x))} + \frac{\sigma_t^2}{2}\Delta p_t(x) + p_t(x)\left(g_t(x)-\mean_{p_t(x)}g_t(x)\right)\\
    g_t(x) =~& a\Delta r(x) + a\inner{\nabla \log p_t(x)}{\nabla r(x)} - \beta_t\frac{\sigma_t^2}{2}\Delta r(x) + \deriv{\beta_t}{t}r(x) +\\
    ~& + \inner{\beta_t\nabla r(x)}{v_t(x)-\sigma_t^2\nabla\log q_t(x) - \frac{\sigma_t^2}{2}\beta_t\nabla r(x)}\,. \nonumber
\end{align}
Taking $v_t(x) = -f_t(x) + \sigma_t^2 \nabla \log q_t(x)$ and $a = \beta_t \sigma_t^2/2$, we have
\begin{align}
    \deriv{p_t(x)}{t} =~& -\inner{\nabla}{p_t(x)(-f_t(x) + \sigma_t^2 \nabla \log q_t(x) + \beta_t \frac{\sigma_t^2}{2}\nabla r(x))} + \frac{\sigma_t^2}{2}\Delta p_t(x) + p_t(x)\left(g_t(x)-\mean_{p_t(x)}g_t(x)\right) \nonumber\\
    g_t(x) =~& \inner{\beta_t\nabla r(x)}{\frac{\sigma_t^2}{2}\nabla \log p_t(x)} + \deriv{\beta_t}{t}r(x) + \inner{\beta_t\nabla r(x)}{-f_t(x) - \frac{\sigma_t^2}{2}\beta_t\nabla r(x)} \\
    =~&\deriv{\beta_t}{t}r(x) + \inner{\beta_t\nabla r(x)}{\frac{\sigma_t^2}{2}\nabla\log q_t(x) -f_t(x)} \,.
\end{align}
This can be simulated as
\begin{align}
    dx_t =~& (-f_t(x_t) + \sigma_t^2 \nabla \log q_t(x_t) + \beta_t \frac{\sigma_t^2}{2}\nabla r(x_t))dt + \sigma_t dW_t\,,\\
    dw_t =~& \left[\deriv{\beta_t}{t}r(x) + \inner{\beta_t\nabla r(x)}{\frac{\sigma_t^2}{2}\nabla\log q_t(x) -f_t(x)}\right]dt
\end{align}
\end{proof}


\section{Additional Related Work}

\paragraph{Amortized Sampling} Recently, there has been renewed interested in learning amortized samplers, and particularly diffusion-based amortized samplers particularly towards molecular systems. \citet{midgley2022flow} explored learning a normalizing flow using an $\alpha$-divergence trained with samples using annealed importance sampling~\cite{neal2001annealed}. \citet{zhang2022path,vargas2023denoising, vargas2024transport, richter2024improved,akhound2024iterated,albergo2024nets,debortoli2024targetscorematching} learn diffusion annealed bridges between distributions using various methods. 

While we use DEM in this work as it achieves state of the art results for our LJ-13 setting, there are several works that build upon DEM using bootstrapping~\cite{ouyang2024bnemboltzmannsamplerbased} and learning the energy function instead of the score~\cite{woo2024iteratedenergybasedflowmatching}. We note that our FKC approach can be applied to \textit{any} diffusion based sampler. 

\paragraph{(Wasserstein)-Fisher-Rao Gradient Flows}
 The reweighting portion of our Feynman-Kac weighted SDEs corresponds to a non-parametric Fisher-Rao gradient flow of a linear functional $\cG[\meas_t]= \int g_t ~ \meas_t dx$, whereas gradient flows in the Wasserstein Fisher-Rao metric \citep{kondratyev2015new, chizat2018interpolating,liero2018optimal} have a form similar to our weighted PDEs \citep{lu2019accelerating} for an appropriate ODE simulation term $v_t = \nabla g_t$.   In sampling applications, \citet{chemseddine2024neural} study the problem of when a given tangent direction in the Fisher-Rao space can be simulated using transport via a tangent direction in the Wasserstein space.

\section{Additional Experimental Details and Results}\label{app:results}

\subsection{Sampling Metrics}

We use a number of metrics to asses the quality of generated samples. These metrics capture different aspects of the distribution. Before computing metrics we filter out samples with energy $>100$. This only affects non-resampled metrics and prevents numerical. We find this filters out no samples for DEM or with FKC, and filters less than 3\% of samples with target score SDE or tempered noise SDE sampling. We justify this as it is easy to set these filters for generated samples of very poor quality.

\paragraph{Distance-$\mathcal{W}_2$} For the LJ-13 task we compute the 2-Wasserstein distance between pairwise distance histograms. For this metric we take all pairwise distances for all samples and flatten them into a single distribution. For a sample of 10,000 points this leads to distributions of size 700,000 as there are 70 pairwise distances for a 13 particle system. This is useful metric as it is equivariant and measures the global fidelity of the generated samples. It however is not useful for assessing fine grained details of the generated samples. For that we turn to the Energy-$\mathcal{W}_{1/2}$ distances.

\paragraph{Energy-$\mathcal{W}_{1/2}$} The Energy-$\mathcal{W}_1$ and Energy-$\mathcal{W}_2$ measures the deviation in the energy value distribution of samples from the reference distribution and the generated distribution. We find this metric is useful to assess the overall fit of a model, although it cannot assess whether a sampler drops modes well. A model that has a reasonably small Energy Wasserstein distance may still have missed a mode of a similar energy value.

We note that for the LJ-13 task we exclude samples with energy $>100$ for all methods and metrics. In practice this only affects Target Score and Tempered Noise SDEs without FKC and DEM trained at lower temperatures.  This excludes roughly 2-3\% of samples for those models, which helps these baselines.

\paragraph{Maximum Mean Discrepancy (MMD)} We use a radial-basis function MMD with multiple scales to assess distribution fit. This measures how well the reference distribution matches the generated distribution locally.

\paragraph{Total Variation distance} For low dimensional sampling problems, it is useful to consider the total variation distance between empirical distributions that are discretized into a grid. This measures fit in terms of density, ignoring the underlying metric, and is less sensitive to global reweighting of modes.

\paragraph{1-Wasserstein and 2-Wasserstein distances ($\mathcal{W}_1$ / $\mathcal{W}_2$)} On 40 GMM we also measure the 1-Wasserstein and 2-Wasserstein distances between the generated and reference distributions with respect to the Euclidean metric. We note that while this is possible to measure in the LJ-13 case, it is not as useful as particles in the LJ-13 setting are SE(3) equivariant, and therefore the Euclidean distance is not a suitable ground metric.

\subsection{Mixture of 40 Gaussians}\label{app:gmm_results}
The mixture of 40 Gaussians setting is a 2D energy function with 40 randomly initialized modes with equal standard deviation. This serves as a useful experimental setting where we are able to calculate true densities and scores efficiently without modelling error.

\subsubsection{Additional Results}

We include quantitative results for the tractable GMM example in \cref{sec:samplers}, where we start at temperature $\ltemp=3$ and anneal to target temperature $\stemp=1/3$. We used a geometric noise schedule with $\sigma_{\mathrm{min}}=0.01$ and $\sigma_{\mathrm{max}}=500$. We sample 10k samples with 1000 integration steps, with $dt=0.001$. We observe that Target Score sampling $(a=0)$ from \cref{eq:target_score_SDE} with systematic resampling performs best in more metrics.   
We also use this example as an ablation study for the impact of the resampling scheme, where we find that systematic resampling appears to outperform the birth-death exponential clocks implementation of the jump process resampling.  See \cref{sec:smc} and \cref{app:reweighting_as_jump}.

\paragraph{On ground truth $q_t^\beta$} A subtle point to note is that $q_t^{\ltemp}$ is not a mixture of $|\pi|$ Gaussians, but rather $|\pi|^{\ltemp}$ Gaussians for integer ${\ltemp}$. This means that we are restricted to small integer ${\ltemp}$. We use ${\ltemp}=3$ for all experiments in the 40 Gaussians setting.  Note that we reserve $\beta = \stemp / \ltemp$ for the ratio of learning and sampling/target temperatures.

\begin{table*}[h]
\caption{Mixture of 40 Gaussians. Sampling from an annealed distribution with inverse temperature $\beta=3$.  Metrics are calculated over 5 runs with 10k samples.}
\centering
\resizebox{\linewidth}{!}{%
\begin{tabular}{lllllll}
\toprule
 SDE Type & FKC &Energy-$\mathcal{W}_2$ & MMD & Total Var & $\mathcal{W}_1$ & $\mathcal{W}_2$ \\
\midrule
Target Score & \no & 0.943 $\pm$ 0.026 & 0.020 $\pm$ 0.001 & 0.487 $\pm$ 0.007 & 11.304 $\pm$ 0.296 & 15.671 $\pm$ 0.269 \\
Tempered Noise & \no & 1.032 $\pm$ 0.012 & 0.058 $\pm$ 0.001 & 0.638 $\pm$ 0.002 & 16.051 $\pm$ 0.123 & 19.627 $\pm$ 0.101 \\
Target Score & \yes \small{BDC} & 1.064 $\pm$ 0.369 & 0.010 $\pm$ 0.004 & 0.402 $\pm$ 0.029 & 7.797 $\pm$ 3.990 & 12.451 $\pm$ 5.417 \\
Tempered Noise & \yes \small{BDC} & 1.228 $\pm$ 0.401 & 0.056 $\pm$ 0.029 & 0.572 $\pm$ 0.055 & 12.598 $\pm$ 4.155 & 17.679 $\pm$ 4.178 \\
Target Score & \yes \small{systematic} & 1.098 $\pm$ 0.418 & \textbf{0.007 $\pm$ 0.005} & \textbf{0.372 $\pm$ 0.020} & \textbf{6.256 $\pm$ 3.960} & \textbf{11.265 $\pm$ 5.629} \\
Tempered Noise & \yes \small{systematic} & \textbf{0.926 $\pm$ 0.248} & 0.027 $\pm$ 0.011 & 0.512 $\pm$ 0.017 & 9.974 $\pm$ 1.229 & 14.045 $\pm$ 1.308 \\
\bottomrule
\end{tabular}
}
\label{tab:gmm_results}
\end{table*}

\subsection{LJ-13 Sampling Task} \label{app:additional_LJ_results}

\begin{figure}
    \centering
    \includegraphics[width=0.49\columnwidth]{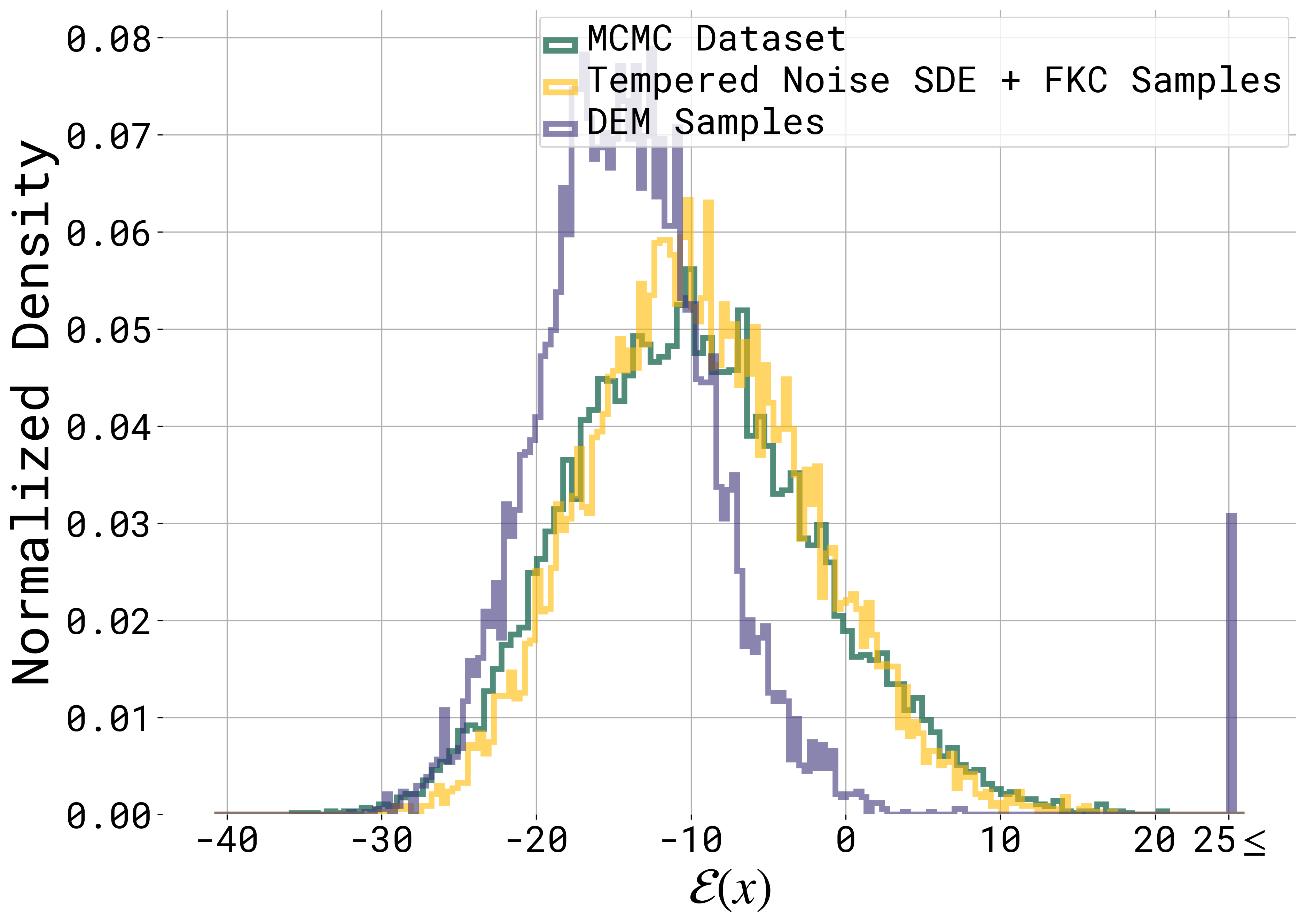}
    \includegraphics[width=0.49\columnwidth]{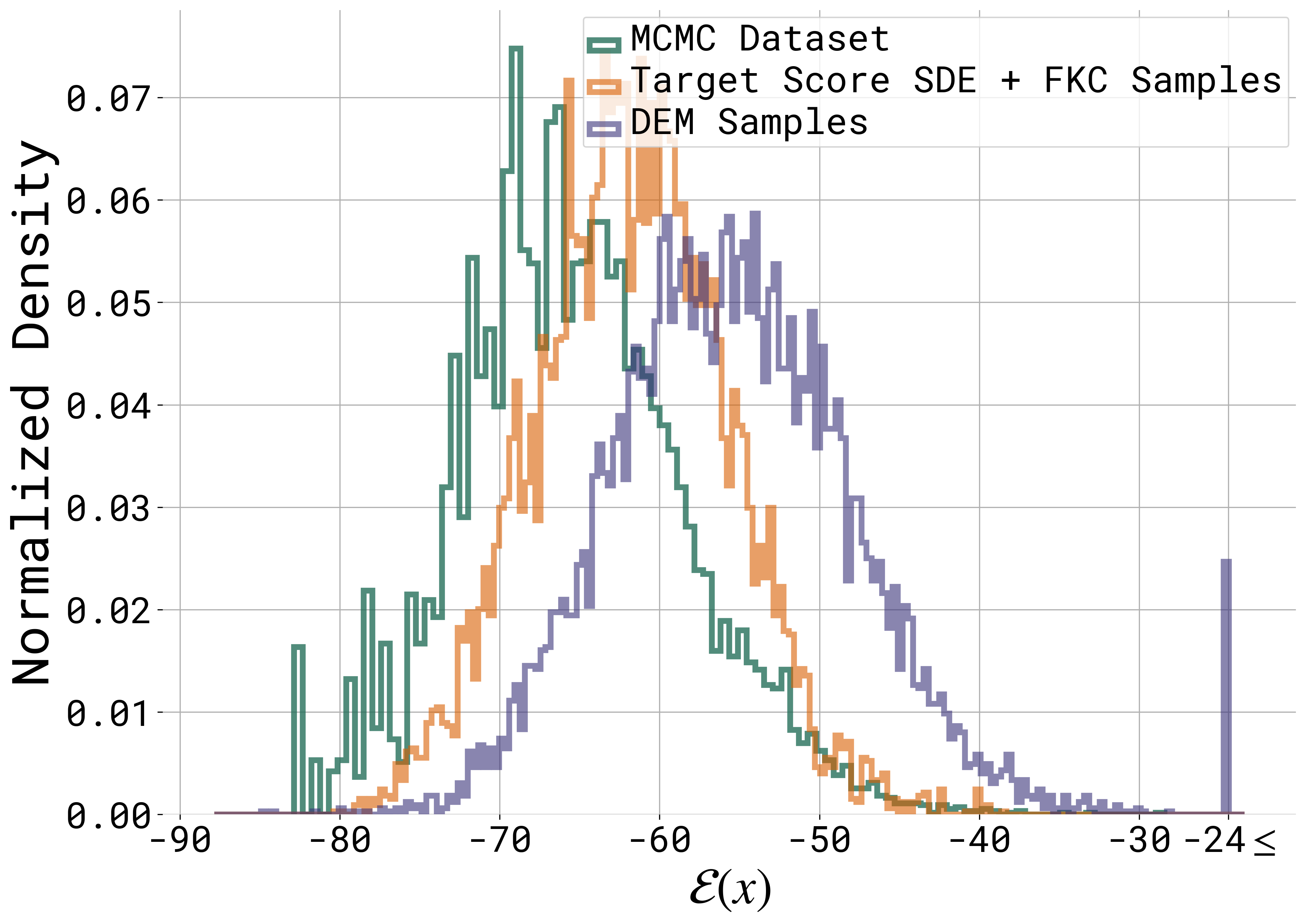}
    \caption{Comparison between the energy distribution of the MCMC dataset, samples generated using a DEM model trained at the target temperature, and samples generated using temperature annealing from a model trained at starting distribution $T=2$. \textbf{Left:} the target temperature is 1.5 and \textbf{Right}: the target temperature is 0.8.}
    \label{fig:LJ_energy_plot}
\end{figure}

\begin{figure}
    \centering
    \includegraphics[width=0.49\columnwidth]{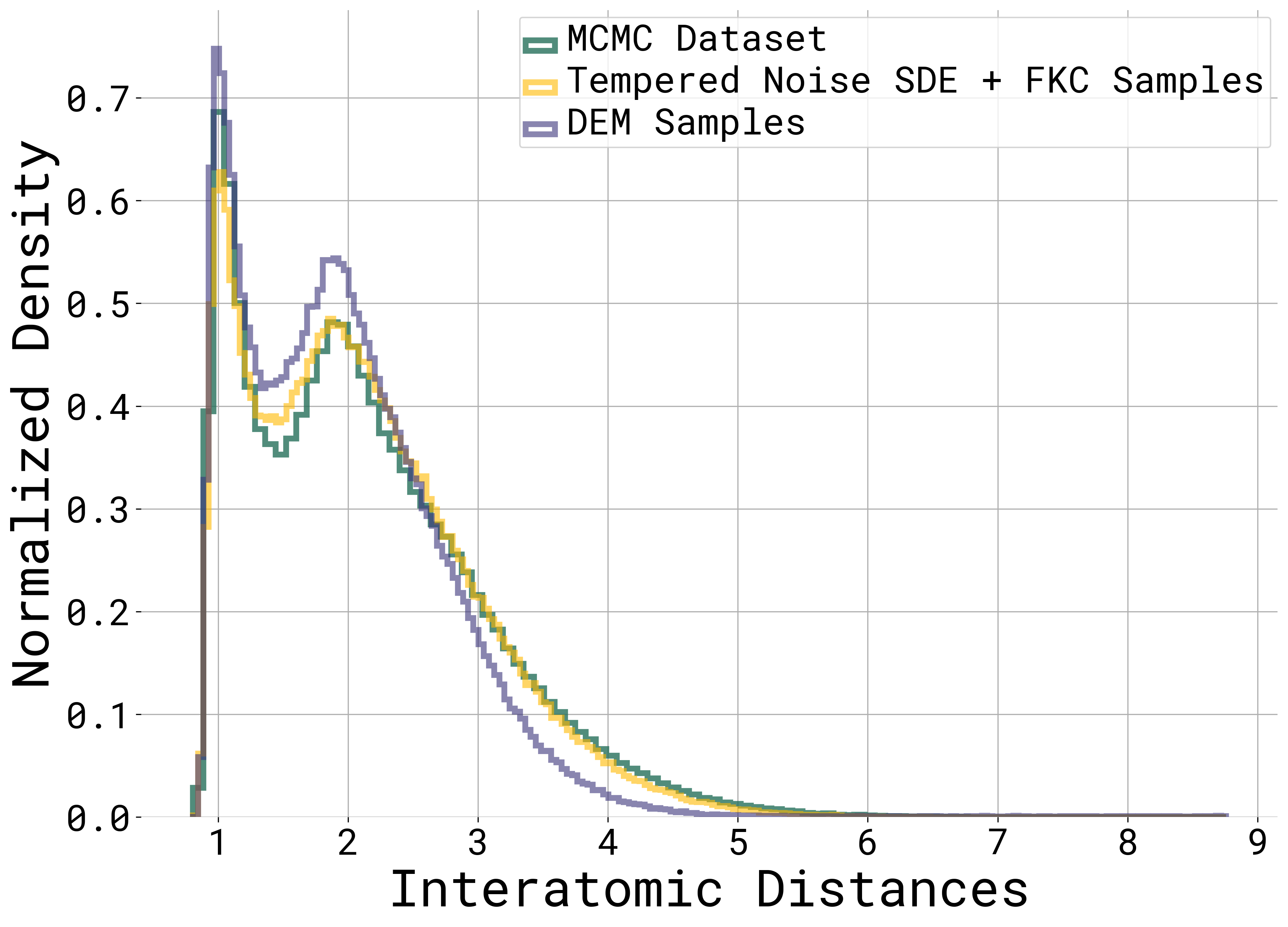}
    \includegraphics[width=0.49\columnwidth]{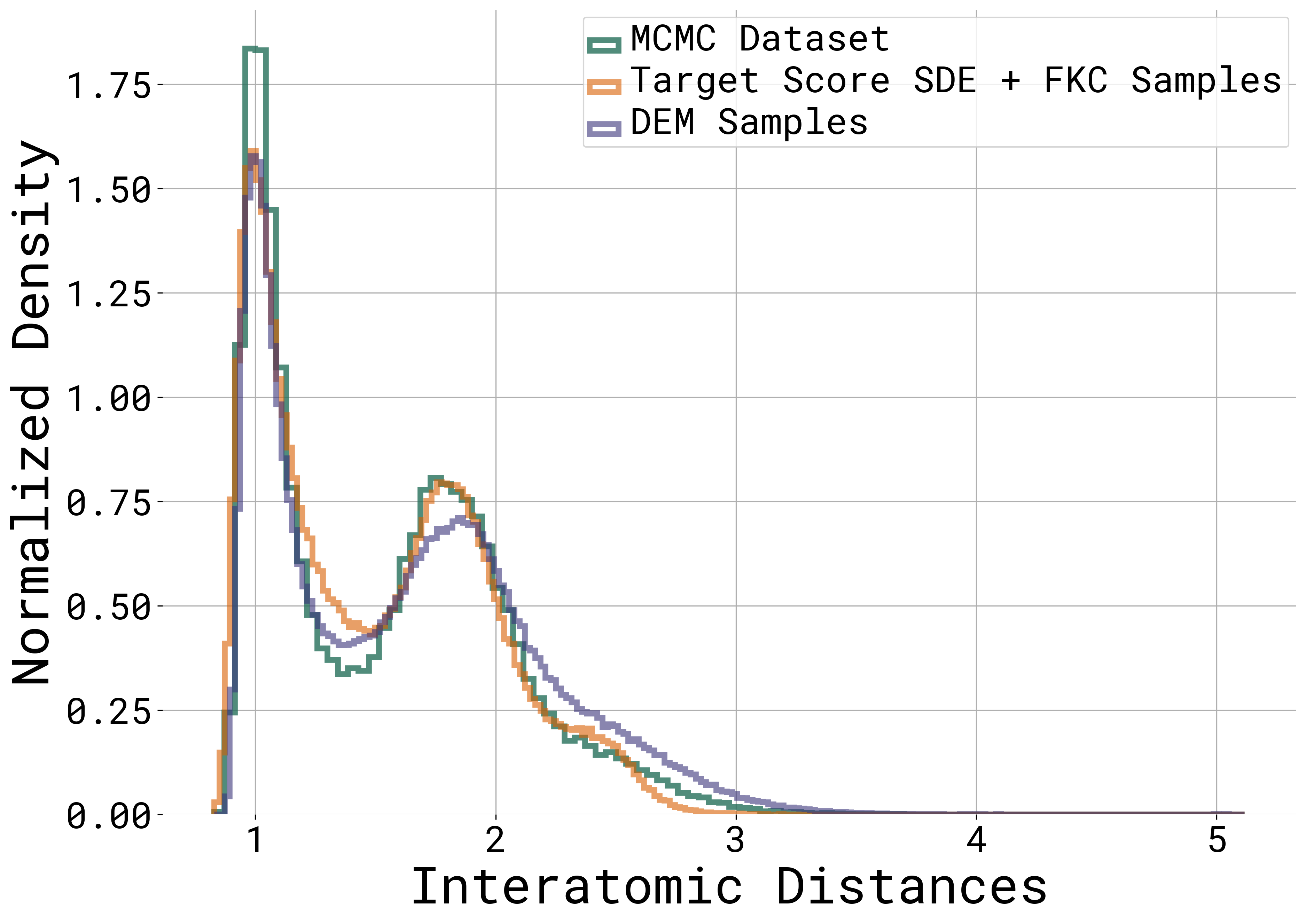}
    \caption{Comparison between the distribution of the interatomic distances of the particles in the MCMC dataset, samples generated using a DEM model trained at the target temperature, and samples generated using temperature annealing from a model trained at starting distribution $T=2$. \textbf{Left:} the target temperature is 1.5  and \textbf{Right}: the target temperature is 0.8.}
    \label{fig:LJ_distances_plot}
\end{figure}

\paragraph{The Lennard-Jones Potential.} The Lennard-Jones (LJ) potential is an intermolecular potential, modelling interactions of non-bonding particles. This system is studied to evaluate the performance of various neural samplers. The energy for the system is based on the interatomic distance between the particles is given by:
\begin{equation}\label{eq:lj}
    \gE^{\rm LJ}(x) = \frac{\eps}{2 \tau} \sum_{ij} \left( \left( \frac{r_m}{d_{ij}}\right)^6 - \left( \frac{r_m}{d_{ij}}\right)^{12} \right)
\end{equation}
where we denote the Euclidean distance between two particles $i$ and $j$ by $d_{ij}=\|x_i - x_j\|_2$ and $r_m$, $\tau$, $\epsilon$ and $c$ are physical constants. As in \citet{kohler2020equivariant}, we also add a harmonic potential to the energy so that $\gE^{LJ-system} = \gE^{\rm LJ}(x) + c \gE^{\rm osc}(x)$
The harmonic potential is given by:
\begin{equation}
    \gE^{\rm osc}(x) = \frac{1}{2} \sum_{i}||x_i - x_{\rm COM}||^2
\end{equation}
where $x_{\rm COM}$ refers to the center of mass of the system. 
We set $r_m=1$, $\tau=1$, $\eps=2.0$ and $c = 1.0$.

\paragraph{Training details.} All DEM models are trained for 166 epochs on 4 NVIDIA A100 80GB GPUs. For all models, the best checkpoint with the lowest energy-$\mathcal{W}_2$ is used for inference. The model is an EGNN with the same architecture as in~\citet{akhound2024iterated}. Similar to~\citet{akhound2024iterated}, we use a geometric noise schedule for all experiments. We set $\sigma_{\mathrm{min}}=0.01$ and $\sigma_{\mathrm{max}}=4.0$. We clip the score to a maximum norm of 1000 (per particle). For sampling, we use 1000 integration steps with $dt=0.001$. For inference with FKC, we assume a Gaussian distribution at time $t_\mathrm{start}=0.99$ and start integration with the annealed SDE and resampling at that time. We found that this helps significantly to reduce the variance of the results over different runs. For visualizations in \cref{fig:LJ_energy_plot,fig:LJ_distances_plot}, we selected the best run for all methods for consistency.

In line with previous work, we find the DEM scores are noisy at high times, based on the score of the energy. This can be seen from the score estimator in DEM, which depends on the average gradient direction from a normal distribution sampled around $x_t$. The variance of this estimate grows with both time and gradient of the energy. This makes DEM style objective significantly easier to train on smooth energies, as quantified by norm of the score of the energy.

\begin{table}[]
\caption{Additional results for LJ-13 at different target temperatures. The model is trained at starting temperature $\ltemp = 2.0$ and metrics are computed over 3 runs. DEM is run for one seed only as the standard-deviation over seeds is negligible. }
\centering
\begin{tabular}{llllll}
\toprule
Target Temp.\ & SDE Type & FKC & distance-$\mathcal{W}_2$ & Energy-$\mathcal{W}_1$ & Energy-$\mathcal{W}_2$ \\
\midrule
\multirow[t]{4}{*}{0.9 ($\beta$=2.2)} & \multirow[t]{2}{*}{Target Score} & \no & 0.215 $\pm$ 0.001 & 13.886 $\pm$ 0.040 & 14.893 $\pm$ 0.012 \\
 &  & \yes & \textbf{0.042 $\pm$ 0.009} & 6.218 $\pm$ 0.896 & 6.259 $\pm$ 0.873 \\
\cline{2-6}
 & \multirow[t]{2}{*}{Tempered Noise} & \no & 0.110 $\pm$ 0.016 & 5.633 $\pm$ 0.090 & 7.682 $\pm$ 0.585 \\
 &  & \yes & \textbf{0.042 $\pm$ 0.004} & \textbf{4.384 $\pm$ 0.135} & \textbf{4.530 $\pm$ 0.167} \\
 \cline{2-6}
 & DEM & --- & 0.168 $\pm$ --- & 14.516 $\pm$ --- & 14.606 $\pm$ --- \\
 \cline{1-6}  \cline{2-6}
\multirow[t]{4}{*}{1.0 ($\beta$=2.0)} & \multirow[t]{2}{*}{Target Score} & \no & 0.221 $\pm$ 0.001 & 12.915 $\pm$ 0.054 & 13.558 $\pm$ 0.112 \\
 &  & \yes & \textbf{0.039 $\pm$ 0.008} & 2.629 $\pm$ 0.665 & 2.876 $\pm$ 0.548 \\
\cline{2-6}
 & \multirow[t]{2}{*}{Tempered Noise} & \no & 0.094 $\pm$ 0.002 & 5.215 $\pm$ 0.095 & 6.560 $\pm$ 0.000 \\
 &  & \yes & 0.053 $\pm$ 0.008 & 3.205 $\pm$ 0.462 & 3.538 $\pm$ 0.468 \\
 \cline{2-6}
 & DEM & --- & 0.127 $\pm$ --- & \textbf{1.352 $\pm$ ---} & \textbf{2.050 $\pm$ ---} \\
\cline{1-6} 

\multirow[t]{4}{*}{1.2 ($\beta$=1.67)} & \multirow[t]{2}{*}{Target Score} & \no & 0.234 $\pm$ 0.004 & 10.414 $\pm$ 0.036 & 10.910 $\pm$ 0.110 \\
 &  & \yes & \textbf{0.026 $\pm$ 0.001} & 2.831 $\pm$ 0.155 & 2.915 $\pm$ 0.074 \\
\cline{2-6}
 & \multirow[t]{2}{*}{Tempered Noise} & \no & 0.098 $\pm$ 0.002 & 4.258 $\pm$ 0.069 & 5.564 $\pm$ 0.095 \\
 &  & \yes & 0.076 $\pm$ 0.006 & \textbf{1.017 $\pm$ 0.494} & \textbf{1.300 $\pm$ 0.433} \\
 \cline{2-6}
 & DEM & --- & 0.143 $\pm$ --- & 9.669 $\pm$ --- & 9.736 $\pm$ --- \\
\bottomrule
\end{tabular}
\label{tab:lj_results_appendix}
\end{table}

\begin{figure}
    \centering
    \includegraphics[width=0.45\linewidth]{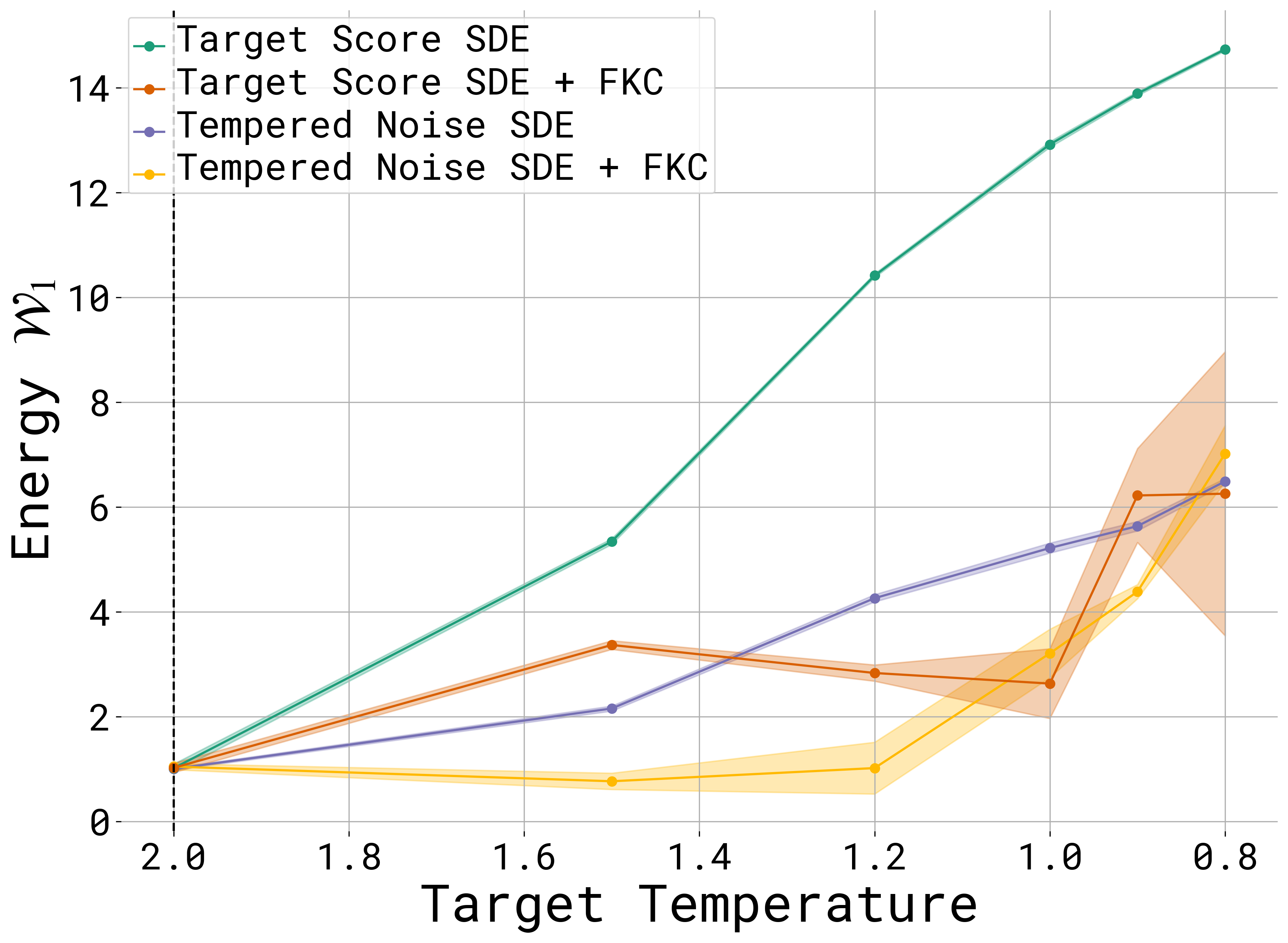}
    \includegraphics[width=0.45\linewidth]{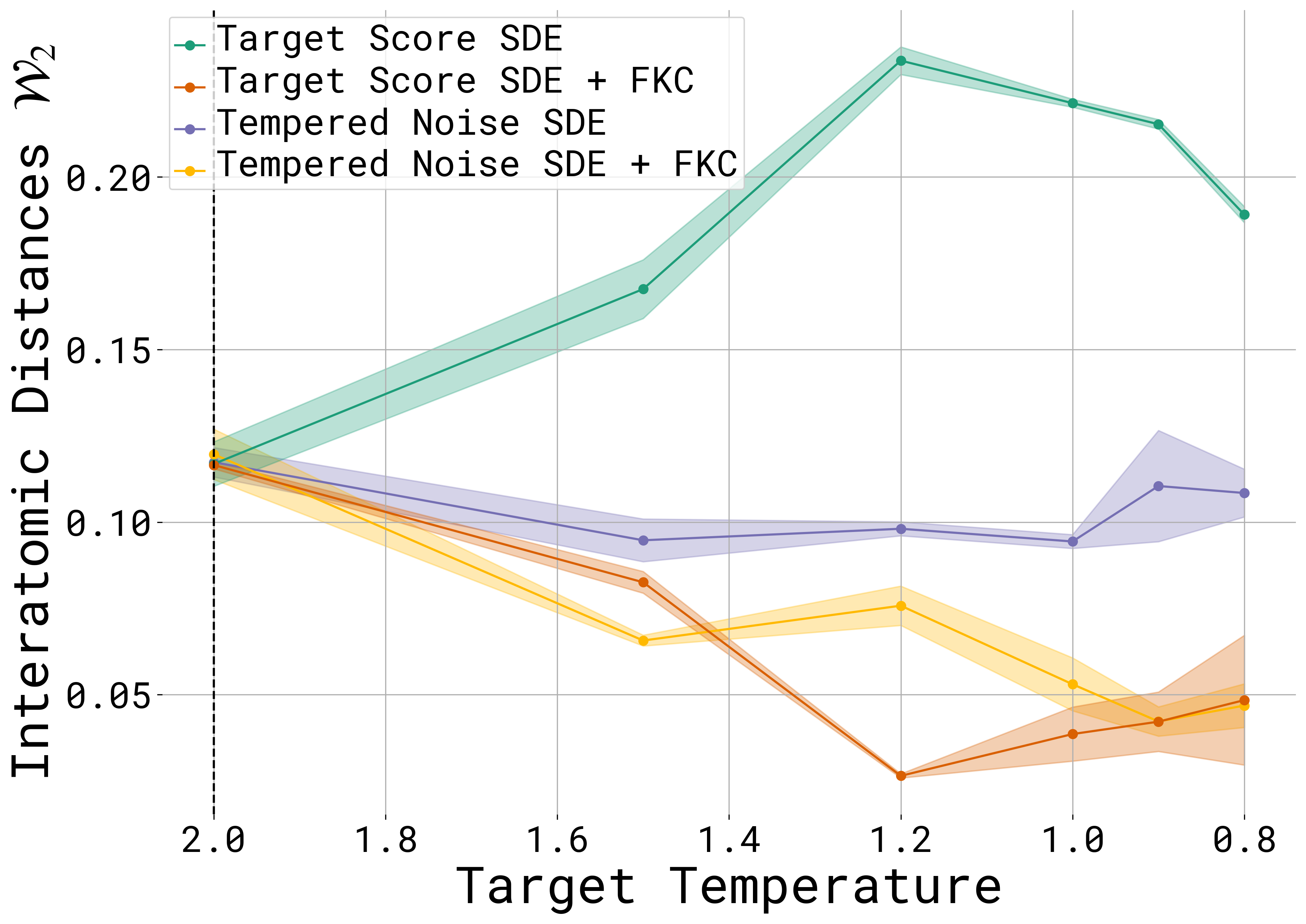}
    \caption{\textbf{Left:} 1-Wasserstein between energy distributions and \textbf{Right:} 2-Wasserstein between distributions of interatomic distances of MCMC samples from the annealed distribution and generated samples.}
      \label{fig:LJ_metric_plots_appendix}
\end{figure}

\begin{figure}
    \centering
    \includegraphics[width=1\columnwidth]{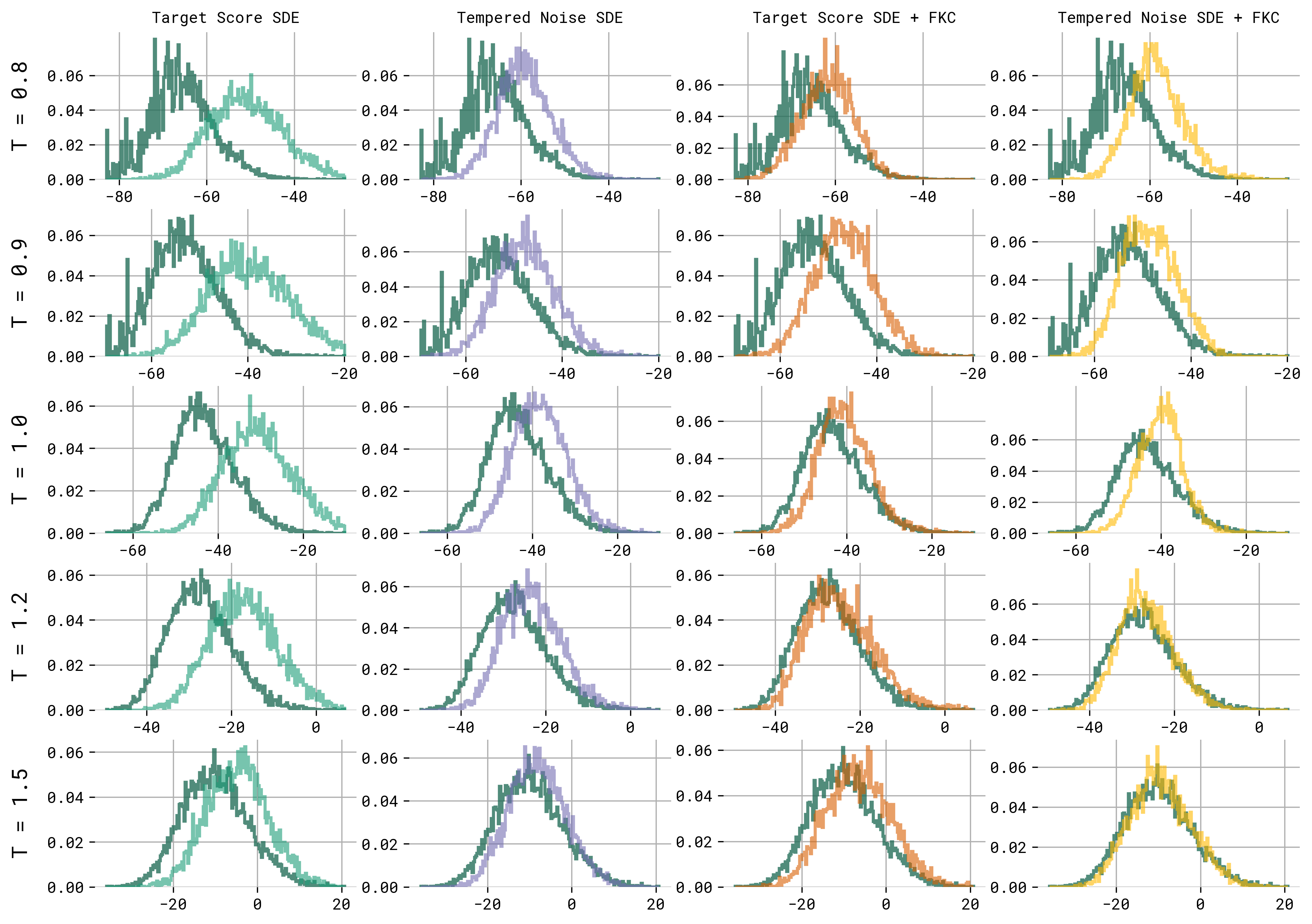}
    \caption{ Energy distributions of samples generated with temperature annealing compared to the MCMC samples (in dark green), at different target temperatures. The starting temperature is $\ltemp=2.0$.}
    \label{fig:gmm_samples_plot}
\end{figure}


\paragraph{Sampling Reference distributions} To generate reference distributions from the Lennard-Jones-13 potential we use Pyro~\cite{bingham2018pyrodeepuniversalprobabilistic} and a No-U-Turn sampler~\cite{hoffman2011nouturnsampleradaptivelysetting} with default arguments. We use 20k warmup steps and collect 20k samples from the 10 chains for each temperature. 

\paragraph{Additional results} In Table~\ref{tab:lj_results_appendix} we can see additional results extending Table~\ref{tab:lj_results} for intermediate temperatures. Here we can see that generally the same patterns hold with one exception---DEM on a target temperature of 1.0 is better than FKC with $\beta=2.0$ on Energy-$\mathcal{W}_1$ and Energy-$\mathcal{W}_2$ metrics. This means that DEM has on this temperature has better local fidelity but slightly worse global fidelity. This is quite interesting as we know DEM was originally developed and therefore tuned with a temperature of 1.0 on this dataset. Our hypothesis is that some hyperparameters are specifically tailored to this setting. It is quite interesting then that FKC can still perform better than DEM on global metrics for this temperature.

In 
\cref{fig:LJ_metric_plots_appendix}
we can see the Energy-$\mathcal{W}_1$ and Interatomic Distance $\mathcal{W}_2$ metrics plotted against the target temperature using a model trained at temperature $2.0$. Here we see that FKC performs well across all temperatures for our global metric of interatomic distances and across energy $\mathcal{W}_1$ distances, although at low target temperatures the Energy-$\mathcal{W}_1$ metric gets worse for all methods. We note that this is after excluding roughly 2-3\% of samples with unacceptably bad energy from the Target Score SDE and Tempered Noise SDE. Therefore even though the lines are close here, we still prefer the FKC samplers.

\subsection{EDM2 image generation}
\label{app:edm2_hparams}

\paragraph{Additional experimental details} For all experiments with EDM2 we use the default classifier free guidance weight $\beta=1.4$. For the $\gamma$ ablation we use batch size 32 for FKC and for the steps experiment we use batch size 8 for FKC.

\paragraph{Additional Results} In Table~\ref{tab:edm2_results_batchsize} we experiment with FKC batch size. Theoretically larger batch sizes should be better as it corresponds to larger sets for the importance sampling step. We see that FKC gets higher scores with larger batch sizes, even with batch size 2 with performance plateauing after batch size $\approx 8$.

\begin{table}[t]
    \vspace{-2mm}
    \centering
    \captionof{table}{Comparison of EDM2+FKC ($\yes$) with EDM2+CFG ($\no$) for image generation using EDM2. With sweep over batchsize $K$. For all metrics, we report CLIP and ImageReward (IR) scores averaged over 10,000 images.
    }
    \resizebox{0.3\linewidth}{!}{%
    \begin{tabular}{cccccc}
    \toprule
    FKC &$\gamma$ & \texttt{BS} & \texttt{N} & CLIP ($\uparrow$) & IR ($\uparrow$) \\
\midrule
\yes & $40$ & \cellcolor{highlightpurple} $1$ & $32$ & $28.75$ & $-0.24$ \\
  \yes & $40$ & \cellcolor{highlightpurple} $2$ & $32$ & $29.03$ & $-0.07$ \\
    \yes & $40$ & \cellcolor{highlightpurple} $4$ & $32$ & $29.11$ & $0.02$ \\
      \yes & $40$ & \cellcolor{highlightpurple} $8$ & $32$ & $\mathbf{29.14}$ & $\mathbf{0.05}$ \\
  \yes & $40$ & \cellcolor{highlightpurple} $16$ & $32$ & $29.04$ & $\mathbf{0.05}$ \\
  \yes & $40$ & \cellcolor{highlightpurple} $$32$$ & $32$ & $29.00$ & $0.04$ \\
  \yes & $40$ & \cellcolor{highlightpurple} $64$ & $32$ & $28.95$ & $0.01$ \\
\bottomrule
    \end{tabular}%
   }
    \label{tab:edm2_results_batchsize}
\end{table}

\subsection{Latent vs. ambient space image models}
\label{app:sdxl}

We apply the FKC method for generating images with Stable Diffusion-XL (SDXL), a latent diffusion model. We show performance of SDXL+FKC on the GenEval benchmark in \cref{tab:sdxl_geneval}, but we do not observe any significant increase from SDXL with vanilla CFG. While there are examples where FKC improves the semantic accuracy of SD-XL generations, (see \cref{fig:sdxl}), the gain is not consistent when evaluated on 1000 prompts. There multiple potential reasons for this behaviour, e.g. sampling from the geometric average is not better than CFG for these metrics, or the dimensionality of the latent space is too high, resulting in weights with too much variance. Investigating the effectiveness of these methods with latent diffusion models could be an interesting area of future study; in the present study, we find FKC to be consistently helpful when applied to models in the ambient space.

\begin{table}[ht]
\centering
\caption{Image generation using SDXL with classifier-free guidance (CFG) on the GenEval benchmark~\citep{ghosh2023geneval}. For all metrics mean values are reported. * indicates values directly taken from leaderboard.}
\label{tab:gen-eval}
\resizebox{\textwidth}{!}{
\begin{tabular}{lccccccccc}
\toprule
Model & $\beta$ & Overall & Single object & Two object & Counting & Colors & Position & Color attribution \\
\midrule
\multicolumn{9}{l}{\textbf{GenEval original 553 prompts}} \\ 
CLIP retrieval* & -- & $0.35$ & 0.89 & 0.22 & 0.37 & 0.62 & 0.03 & 0.00 \\
SD-1.5*         & -- & $0.43$ & 0.97 & 0.38 & 0.35 & 0.76 & 0.04 & 0.06 \\
SDXL*           & -- & $0.55$ & 0.98 & 0.74 & 0.39 & 0.85 & 0.15 & 0.23 \\
SDXL (our run)   & 7.5 & $0.57$ & 0.99 & 0.80 & 0.46 & 0.86 & 0.11 & 0.22 \\
SDXL+FKC            & 5.5 & $0.58$ & 0.99 & 0.77 & 0.49 & 0.87 & 0.10 & 0.22 \\
SDXL+FKC            & 7.5 & $0.57$ & 0.99 & 0.78 & 0.46 & 0.83 & 0.13 & 0.23 \\
\midrule
\multicolumn{9}{l}{\textbf{GenEval 1000 prompts}} \\ 
SDXL (our run)           & 7.5 & $0.58$ & 0.99 & 0.79 & 0.45 & 0.88 & 0.11 & 0.21 \\
SDXL+FKC            & 5.5 & $0.57$ & 0.99 & 0.79 & 0.42 & 0.86 & 0.13 & 0.21 \\
SDXL+FKC            & 7.5 & $0.57$ & 0.99 & 0.80 & 0.45 & 0.83 & 0.13 & 0.22 \\
\bottomrule
\end{tabular}
}
\label{tab:sdxl_geneval}
\end{table}

\begin{figure}
    \centering
    \includegraphics[width=0.5\linewidth]{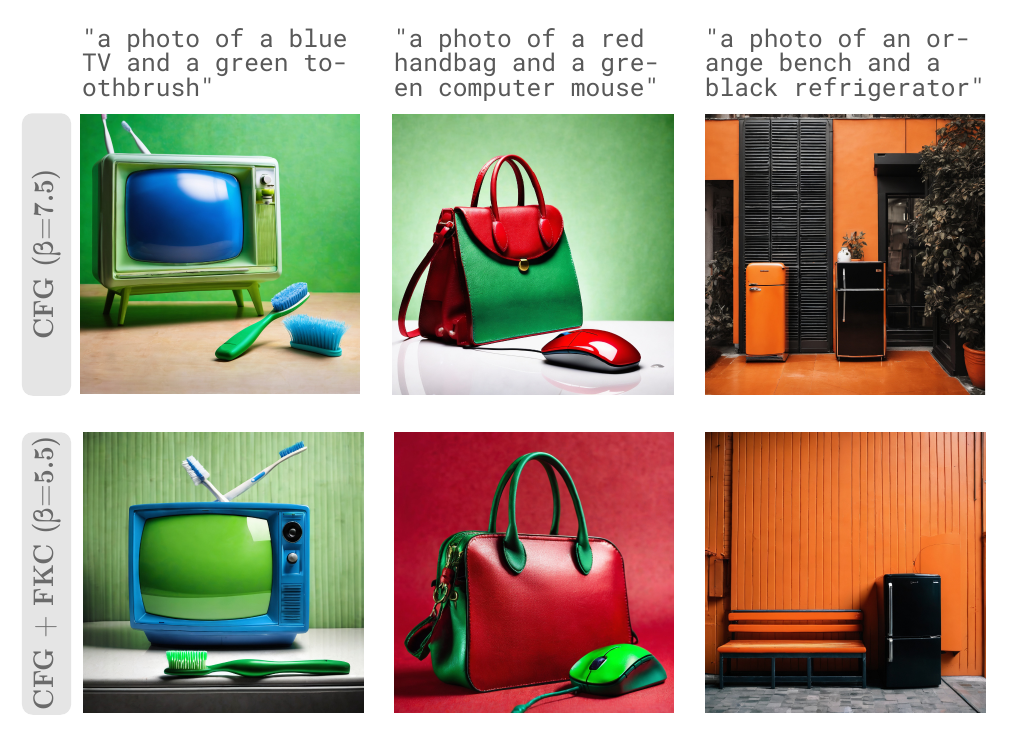}
    \caption{Examples where SDXL+FKC (bottom) outperforms EDM2+CFG (top).}
    \label{fig:sdxl}
\end{figure}

\subsection{Multi-Target Structure-Based Drug Design}\label{sec:dualtarget_abl}

\paragraph{Additional experimental details for main experiments} 
Following \citet{zhou2024reprogramming}, we align the target pockets in 3D space and generate sample coordinates for each pocket using an SE(3)-equivariant graph neural network over 1000 integration steps. We then use our Product of Experts (PoE) scheme (\cref{prop:poe}) to guide ligand generation.

We note that the PoE weight computation in \cref{eq:poe_weights} necessitates equal sample dimensionality, otherwise resampling would be skewed to favor samples of higher dimensions. This requires the molecules within a batch to have the same number of atoms. This motivates our use of fixed size molecule bins for generation.

Our baseline is the target score SDE with $\beta=0.5$, which is equivalent to DualDiff from \citet{zhou2024reprogramming} and also corresponds to an averaging of scores~\citep{liu2022compositional}. We also generate molecules conditioned on a single protein pocket using TargetDiff from \citet{guan20233d}, but dock the molecules to both targets in a protein pair to understand the need for conditioning on two pockets simultaneously.

\paragraph{SDE Component Analysis} In \cref{tab:dualtarget_ablation}, we show the performance of varying the following SDE settings for dual-target drug design: SDE type, $\beta$, and the presence/absece of FKC resampling. Here, we report metrics for generating molecules on a single protein pocket pair (UniProt IDs P23786/P05023) as the validation set. We generate molecules a batch 32 molecules for 5 different molecule lengths, which were sampled from the original training distribution~\citep{guan20233d}: $\{15, 19, 23, 27, 35\}$.  

We study the impact of the following changing the following settings:

\hlturq{\textbf{Inverse temperature ($\beta$)}} We find that as we increase $\beta$ from $0.5$ to $2.0$ the product of the docking scores of the protein pair increases, though the delta increase is larger at smaller $\beta$s.

\hllightgray{\textbf{FKC}.} Next, we try turning FKC on at a fixed $\beta$. We find that performance improves at both $\beta=1.5$ and $\beta=2.0$, although the improvement at $\beta=2.0$ is larger. However, this comes at a cost of diversity and the uniqueness of molecules generated.

\hlpurple{\textbf{$t_{\text{max}}$}} Given that resampling is helpful in terms of improving the quality of the final molecules but decreases molecular diversity, we investigate setting a $t_{\text{max}}$ for our best $\beta$ settings, where we resample only when $\tau <= t_{\text{max}}$. We find that setting $t_{\text{max}}$ to a value in $[0.5, 0.7]$ generates molecules that are higher in quality compared to always resampling or no resampling for $\beta=1.5$. For $\beta=2.0$, the performance slightly decreases, but the diversity and uniqueness of the molecules is much higher at the end. Setting $t_{\text{max}}$ to $0.6$ gives a good tradeoff in terms of generating molecules that perform well vs. maintaining diversity, and so we proceed with $\beta=2.0$ and $t_{\text{max}} = 0.6$ for the final experiments.

\hlorange{\textbf{SDE Type}} Finally, we try using different types of SDEs. We find that at lower $\beta$, the Tempered Noise SDE performs better with and without FKC. At higher $\beta$, however, using the Tempered Noise SDE does not significantly change performance or decreases performance. Thus, for the main experiments, we proceed with the Target Score SDE.

\begin{table*}[t]
\centering

\vspace{-0.9em}
\caption{Performance of generated molecules with different SDE settings. We generate 32 molecules for 5 molecule sizes for one protein pair for each setting. Lower docking scores are better. Values are reported as averages over all generated molecules in each run. "Better than ref." is the percentage of ligands with better docking scores than known reference molecules for \textit{both} targets (the mean docking score for the reference molecules is $-8.255_{\pm 1.849}$). We also report the diversity, validity \& uniqueness, and quality, which refers to the percentage of molecules that are valid, unique, have QED $\geq 0.6$ and SA $\leq 4.0$~\citep{lee2025genmol}. Bolded values are the best metrics within each set of midlines. For {\small$\beta = 1$}, target score and tempering noise match (\cref{prop:poe}).}
\resizebox{\linewidth}{!}{
\begin{tabular}{lccccccccccc}
\toprule

  $\beta$ & FKC & $t_{\text{max}}$ & SDE Type & (\texttt{P$_1$} * \texttt{P$_2$}) ($\uparrow$) & max(\texttt{P$_1$}, \texttt{P$_2$}) ($\downarrow$) & \texttt{P$_1$} ($\downarrow$) & \texttt{P$_2$} ($\downarrow$) & Better than ref. ($\uparrow$) & Div. ($\uparrow$) & Val. \& Uniq. ($\uparrow$) & Qual. ($\uparrow$) \\ \midrule
  
     \cellcolor{highlightturq} $0.5$ &  \no & --- & Target Score & $67.657_{\pm 11.985}$ & $-7.667_{\pm 0.687}$ & $-8.377_{\pm 0.661}$ & $-7.986_{\pm 0.948}$ & $0.251_{\pm 0.199}$ & $\mathbf{0.886_{\pm 0.006}}$ & $0.969_{\pm 0.062}$ & $0.244_{\pm 0.161}$ \\

     \cellcolor{highlightturq} $1.0$ &   \no & --- & Target Score & $73.366_{\pm 14.423}$ & $-7.929_{\pm 0.763}$ & $-8.843_{\pm 0.899}$ & $-8.174_{\pm 0.989}$ & $0.378_{\pm 0.311}$ & $0.884_{\pm 0.008}$ & $0.962_{\pm 0.023}$ & $0.231_{\pm 0.170}$ \\

     \cellcolor{highlightturq} $1.5$  & \no & --- &  Target Score & $75.213_{\pm 15.779}$ & $-8.085_{\pm 0.856}$ & $-8.980_{\pm 0.935}$ & $-8.258_{\pm 1.024}$ & $\mathbf{0.402_{\pm 0.339}}$ & $0.880_{\pm 0.012}$ & $0.988_{\pm 0.015}$ & $0.250_{\pm 0.159}$ \\

     \cellcolor{highlightturq} $2.0$  & \no & --- &  Target Score & $\mathbf{75.551_{\pm 16.345}}$ & $\mathbf{-8.089_{\pm 0.899}}$ & $\mathbf{-8.966_{\pm 0.884}}$ & $\mathbf{-8.309_{\pm 1.112}}$ & $0.391_{\pm 0.331}$ & $0.881_{\pm 0.011}$ & $\mathbf{0.994_{\pm 0.012}}$ & $\mathbf{0.288_{\pm 0.179}}$ \\

     \midrule

$1.5$ &   \cellcolor{highlightgray} \no & --- & Target Score & $75.213_{\pm 15.779}$ & $\mathbf{-8.085_{\pm 0.856}}$ & $\mathbf{-8.980_{\pm 0.935}}$ & $-8.258_{\pm 1.024}$ & $0.402_{\pm 0.339}$ & $\mathbf{0.880_{\pm 0.012}}$ & $\mathbf{0.988_{\pm 0.015}}$ & $\mathbf{0.250_{\pm 0.159}}$ \\ 

$1.5$ &   \cellcolor{highlightgray} \yes & --- & Target Score & $\mathbf{75.798_{\pm 32.984}}$ & $-7.438_{\pm 2.507}$ & $-8.582_{\pm 3.200}$ & $\mathbf{-8.829_{\pm 1.193}}$ & $\mathbf{0.446_{\pm 0.454}}$ & $0.651_{\pm 0.102}$ & $0.475_{\pm 0.169}$ & $0.100_{\pm 0.157}$ \\ 

     \midrule

$2.0$ &   \cellcolor{highlightgray} \no & --- & Target Score & $75.551_{\pm 16.345}$ & $-8.089_{\pm 0.899}$ & $-8.966_{\pm 0.884}$ & $-8.309_{\pm 1.112}$ & $0.391_{\pm 0.331}$ & $\mathbf{0.881_{\pm 0.011}}$ & $\mathbf{0.994_{\pm 0.012}}$ & $\mathbf{0.288_{\pm 0.179}}$ \\ 

$2.0$ &   \cellcolor{highlightgray} \yes & --- & Target Score & $\mathbf{91.845_{\pm 28.421}}$ & $\mathbf{-8.977_{\pm 1.433}}$ & $\mathbf{-9.984_{\pm 1.533}}$ & $\mathbf{-8.978_{\pm 1.434}}$ & $\mathbf{0.671_{\pm 0.419}}$ & $0.617_{\pm 0.049}$ & $0.475_{\pm 0.132}$ & $0.044_{\pm 0.073}$ \\ 

     \midrule

$1.5$ &  \yes & \cellcolor{highlightpurple} 0.4 & Target Score & $74.558_{\pm 14.361}$ & $-7.961_{\pm 0.785}$ & $-8.883_{\pm 0.885}$ & $-8.322_{\pm 1.083}$ & $0.372_{\pm 0.328}$ & $\mathbf{0.883_{\pm 0.021}}$ & $0.981_{\pm 0.038}$ & $0.262_{\pm 0.222}$ \\ 

$1.5$ &  \yes & \cellcolor{highlightpurple} 0.5 & Target Score & $80.421_{\pm 16.567}$ & $-8.365_{\pm 0.905}$ & $-9.314_{\pm 0.882}$ & $-8.539_{\pm 1.077}$ & $0.494_{\pm 0.378}$ & $0.867_{\pm 0.014}$ & $\mathbf{0.994_{\pm 0.012}}$ & $\mathbf{0.288_{\pm 0.233}}$ \\ 

$1.5$ &  \yes & \cellcolor{highlightpurple} $0.6$ & Target Score &  $\mathbf{83.405_{\pm 19.024}}$ & $\mathbf{-8.530_{\pm 1.070}}$ & $\mathbf{-9.516_{\pm 1.083}}$ & $-8.646_{\pm 1.098}$ & $0.485_{\pm 0.441}$ & $0.820_{\pm 0.024}$ & $0.925_{\pm 0.078}$ & $0.244_{\pm 0.196}$ \\

$1.5$ &  \yes & \cellcolor{highlightpurple} $0.7$ & Target Score & $83.100_{\pm 19.354}$ & $-8.434_{\pm 0.912}$ & $-9.420_{\pm 1.229}$ & $-8.723_{\pm 1.227}$ & $\mathbf{0.503_{\pm 0.441}}$ & $0.799_{\pm 0.030}$ & $0.888_{\pm 0.094}$ & $0.162_{\pm 0.205}$ \\

$1.5$ &  \yes & \cellcolor{highlightpurple} $1.0$ & Target Score & $75.798_{\pm 32.984}$ & $-7.438_{\pm 2.507}$ & $-8.582_{\pm 3.200}$ & $\mathbf{-8.829_{\pm 1.193}}$ & $0.446_{\pm 0.454}$ & $0.651_{\pm 0.102}$ & $0.475_{\pm 0.169}$ & $0.100_{\pm 0.157}$ \\

\midrule

$2.0$ &  \yes & \cellcolor{highlightpurple} 0.4 & Target Score & $79.734_{\pm 17.631}$ & $-8.331_{\pm 0.981}$ & $-9.283_{\pm 0.968}$ & $-8.467_{\pm 1.133}$ & $0.498_{\pm 0.388}$ & $\mathbf{0.876_{\pm 0.012}}$ & $\mathbf{0.988_{\pm 0.015}}$ & $\mathbf{0.306_{\pm 0.234}}$ \\

$2.0$ &  \yes & \cellcolor{highlightpurple} 0.5 & Target Score & $84.949_{\pm 19.056}$ & $-8.569_{\pm 0.978}$ & $-9.529_{\pm 1.018}$ & $-8.796_{\pm 1.220}$ & $0.514_{\pm 0.423}$ & $0.851_{\pm 0.025}$ & $0.938_{\pm 0.059}$ & $0.244_{\pm 0.160}$ \\ 

$2.0$ &  \yes & \cellcolor{highlightpurple} 0.6 & Target Score & $87.983_{\pm 22.856}$ & $-8.790_{\pm 1.231}$ & $-9.619_{\pm 1.101}$ & $\mathbf{-8.988_{\pm 1.442}}$ & $0.569_{\pm 0.468}$ & $0.818_{\pm 0.011}$ & $0.888_{\pm 0.073}$ & $0.212_{\pm 0.223}$ \\ 

$2.0$ &  \yes & \cellcolor{highlightpurple} 0.7 & Target Score & $85.168_{\pm 21.258}$ & $-8.574_{\pm 1.097}$ & $-9.514_{\pm 1.028}$ & $-8.827_{\pm 1.452}$ & $0.593_{\pm 0.484}$ & $0.786_{\pm 0.034}$ & $0.806_{\pm 0.096}$ & $0.231_{\pm 0.224}$ \\ 

$2.0$ &  \yes & \cellcolor{highlightpurple} 1.0 & Target Score & $\mathbf{91.845_{\pm 28.421}}$ & $\mathbf{-8.977_{\pm 1.433}}$ & $\mathbf{-9.984_{\pm 1.533}}$ & $-8.978_{\pm 1.434}$ & $\mathbf{0.671_{\pm 0.419}}$ & $0.617_{\pm 0.049}$ & $0.475_{\pm 0.132}$ & $0.044_{\pm 0.073}$ \\ 

\midrule

$0.5$ &  \no &  --- & \cellcolor{highlightorange} Target Score & $67.657_{\pm 11.985}$ & $-7.667_{\pm 0.687}$ & $-8.377_{\pm 0.661}$ & $-7.986_{\pm 0.948}$ & $0.251_{\pm 0.199}$ & $0.886_{\pm 0.006}$ & $\mathbf{0.969_{\pm 0.062}}$ & $0.244_{\pm 0.161}$  \\ 

$0.5$ &  \no &  --- & \cellcolor{highlightorange} Tempered Noise & $\mathbf{71.606_{\pm 15.139}}$ & $\mathbf{-7.838_{\pm 0.872}}$ & $\mathbf{-8.727_{\pm 0.878}}$ & $\mathbf{-8.085_{\pm 1.088}}$ & $\mathbf{0.362_{\pm 0.274}}$ & $\mathbf{0.887_{\pm 0.007}}$ & $0.944_{\pm 0.098}$ & $\mathbf{0.300_{\pm 0.212}}$ \\ 

\midrule

$0.5$ &  \yes &  $0.6$ & \cellcolor{highlightorange} Target Score & $77.100_{\pm 16.533}$ & $-8.243_{\pm 0.949}$ & $-9.112_{\pm 0.898}$ & $-8.337_{\pm 1.045}$ & $0.417_{\pm 0.337}$ & $0.877_{\pm 0.008}$ & $\mathbf{0.975_{\pm 0.023}}$ & $0.212_{\pm 0.155}$\\

$0.5$ &  \yes &  $0.6$ & \cellcolor{highlightorange} Tempered Noise & $\mathbf{78.501_{\pm 15.383}}$ & $\mathbf{-8.323_{\pm 0.919}}$ & $\mathbf{-9.127_{\pm 0.770}}$ & $\mathbf{-8.496_{\pm 1.100}}$ & $\mathbf{0.496_{\pm 0.308}}$ & $\mathbf{0.879_{\pm 0.016}}$ & $0.931_{\pm 0.064}$ & $\mathbf{0.250_{\pm 0.163}}$\\
\midrule
$2.0$ &  \no &  --- & \cellcolor{highlightorange} Target Score & $75.551_{\pm 16.345}$ & $\mathbf{-8.089_{\pm 0.899}}$ & $-8.966_{\pm 0.884}$ & $-8.309_{\pm 1.112}$ & $0.391_{\pm 0.331}$ & $\mathbf{0.881_{\pm 0.011}}$ & $\mathbf{0.994_{\pm 0.012}}$ & $\mathbf{0.288_{\pm 0.179}}$ \\ 

$2.0$ &  \no &  --- & \cellcolor{highlightorange} Tempered Noise & $\mathbf{75.868_{\pm 16.154}}$ & $-8.045_{\pm 0.909}$ & $\mathbf{-8.977_{\pm 0.978}}$ & $\mathbf{-8.352_{\pm 1.095}}$ & $\mathbf{0.460_{\pm 0.390}}$ & $0.874_{\pm 0.008}$ & $\mathbf{0.994_{\pm 0.012}}$ & $0.262_{\pm 0.186}$ \\ 

\midrule
$2.0$ &  \yes &  $0.6$ & \cellcolor{highlightorange} Target Score & $\mathbf{87.983_{\pm 22.856}}$ & $\mathbf{-8.790_{\pm 1.231}}$ & $\mathbf{-9.619_{\pm 1.101}}$ & $\mathbf{-8.988_{\pm 1.442}}$ & $\mathbf{0.569_{\pm 0.468}}$ & $\mathbf{0.818_{\pm 0.011}}$ & $\mathbf{0.888_{\pm 0.073}}$ & $0.212_{\pm 0.223}$ \\ 

$2.0$ &  \yes &  $0.6$ & \cellcolor{highlightorange} Tempered Noise & $79.696_{\pm 18.087}$ & $-8.229_{\pm 0.997}$ & $-9.104_{\pm 0.921}$ & $-8.681_{\pm 1.481}$ & $0.464_{\pm 0.429}$ & $0.796_{\pm 0.038}$ & $0.838_{\pm 0.109}$ & $\mathbf{0.331_{\pm 0.274}}$ \\ 

\midrule
$2.0$ &  \yes &  $0.7$ & \cellcolor{highlightorange} Target Score & $\mathbf{85.168_{\pm 21.258}}$ & $\mathbf{-8.574_{\pm 1.097}}$ & $-9.514_{\pm 1.028}$ & $\mathbf{-8.827_{\pm 1.452}}$ & $\mathbf{0.593_{\pm 0.484}}$ & $0.786_{\pm 0.034}$ & $0.806_{\pm 0.096}$ & $0.231_{\pm 0.224}$ \\ 

$2.0$ &  \yes &  $0.7$ & \cellcolor{highlightorange} Tempered Noise & $84.969_{\pm 18.906}$ & $-8.531_{\pm 0.980}$ & $\mathbf{-9.809_{\pm 1.170}}$ & $-8.545_{\pm 0.991}$ & $0.589_{\pm 0.427}$ & $\mathbf{0.796_{\pm 0.035}}$ & $\mathbf{0.850_{\pm 0.121}}$ & $\mathbf{0.262_{\pm 0.154}}$\\ 

     \bottomrule
\end{tabular}
}
\label{tab:dualtarget_ablation}
\end{table*}

\paragraph{Visualizing docked molecules}
In \cref{fig:dockedmols_allsizes}, we visualize the molecules with the highest docking scores to the protein pair GRM5/RRM1 (UniProt IDs P41594 and P23921, respectively) at each molecule size.

\begin{figure*}
    \centering
     \vspace*{-.2cm}    \includegraphics[width=\linewidth]{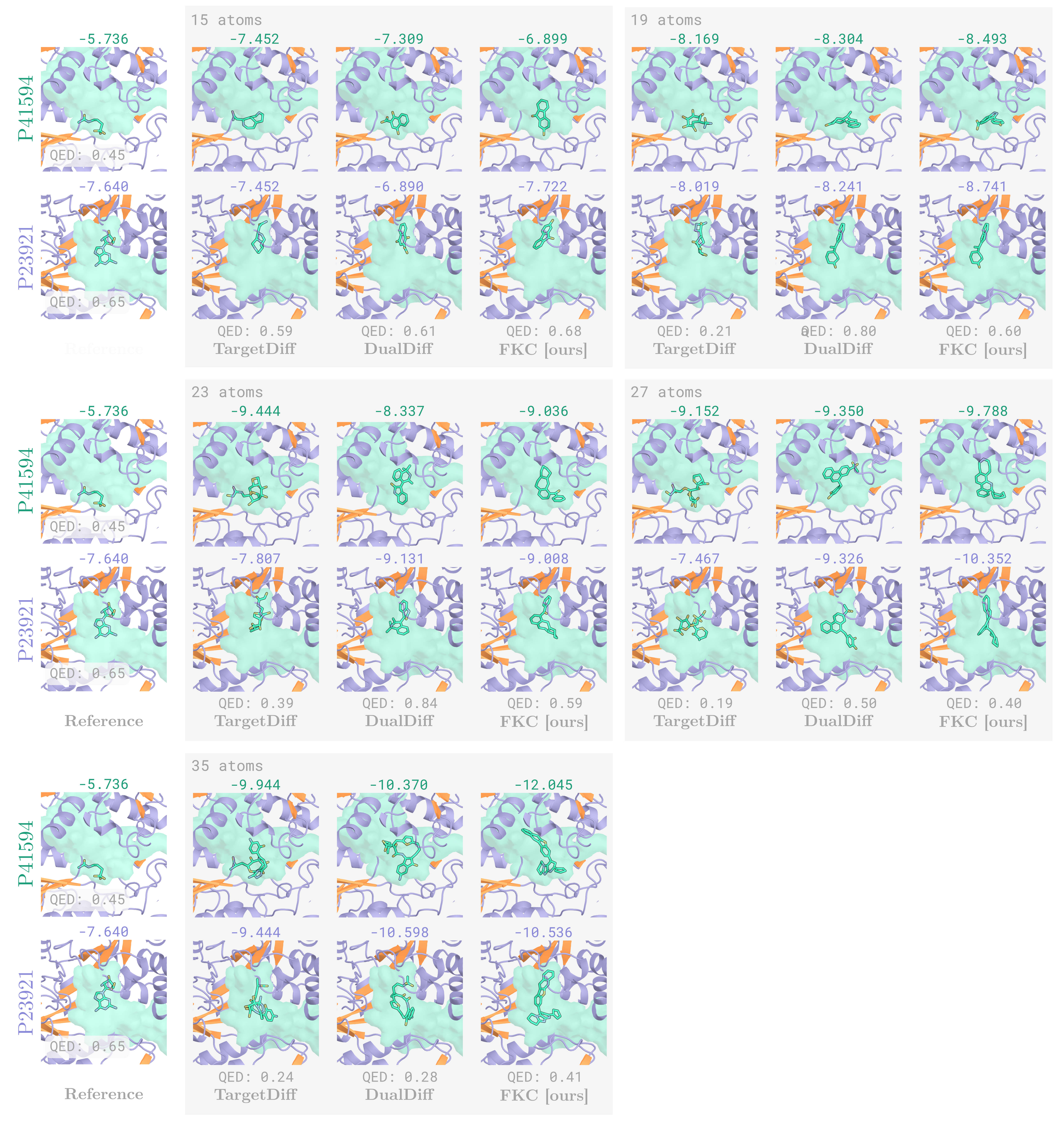}
    \caption{Molecules generated from our method (target score SDE with $\beta=2.0$ and FKC resampling) and baselines in the binding pockets of two proteins: GRM5 (UniProt ID P41594) and RRM1 (UniProt ID P23921) for all 5 molecule sizes considered ($\{15, 19, 23, 27, 35\}$ atoms). Docking scores for each molecule and target are above each image; lower docking scores are better. The QED of the molecule is above each model name. The binding pocket is shaded in light green.}
    \label{fig:dockedmols_allsizes}
\end{figure*}

\paragraph{Correlation between FKC weights and target metric}

In \cref{fig:weightsvsscore}, we plot the Spearman rank correlation of FKC weights and docking scores of the final generated molecules for two protein pair examples to understand whether sampling according to these weights will select for better molecules. We observe that for a protein pair where molecules improve with FKC resampling, there tends to be a positive correlation between the FKC weight and docking score (avg. $\rho=0.275$ for an example protein pair), while this trend is less clear for a case where FKC resampling does not improve the molecule (avg. $\rho=0.087$ for an example protein pair). 

\begin{figure}[h!]
    \centering
     \vspace{-.2cm}    \includegraphics[width=0.9\linewidth]{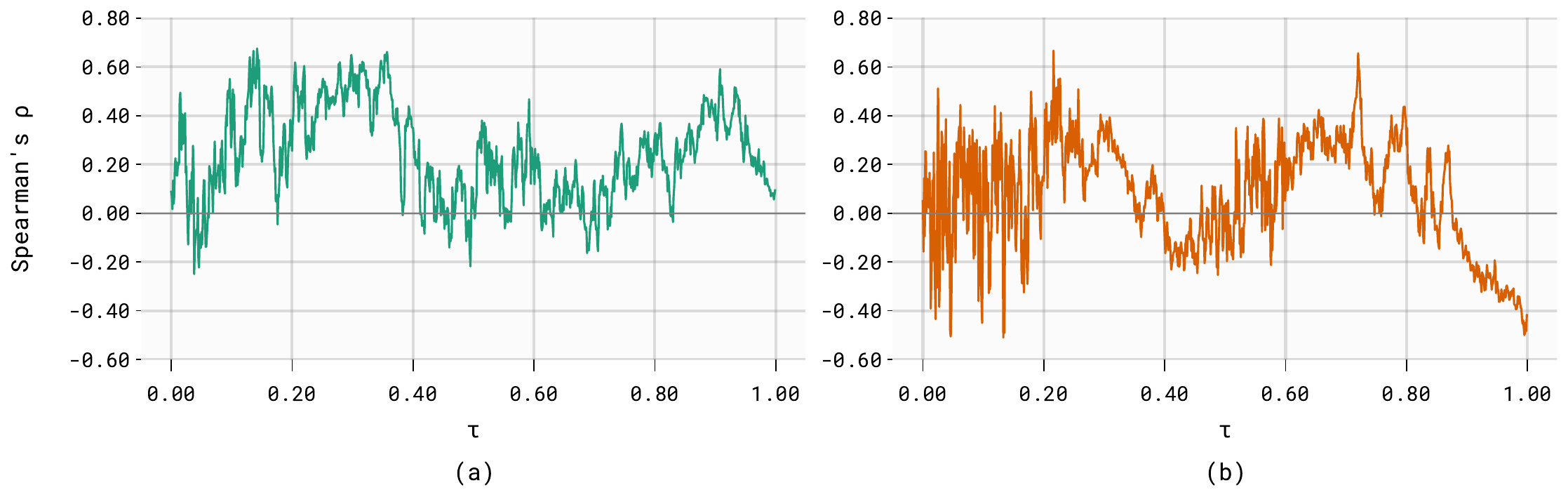}
    \caption{Spearman rank correlation coefficient ($\rho$) of FKC weights computed at each timestep $\tau$ and the docking score of the final molecule for (a) a protein pair where the final generated molecule improves with FKC and (b) a failure case where the molecule has a worse docking score than the baseline. Note that during inference, we only resample for $\tau \in [0.0, 0.6]$. }
    \label{fig:weightsvsscore}
\end{figure}

\subsection{Molecule SMILES generation using latent diffusion models}

\label{sec:smiles_experiments}

We also investigate generating molecular SMILES strings conditioned on functional properties, which describe the desired function that the molecule should have. Molecules often need to possess multiple properties (e.g. bind to protein $X$ and be non-toxic)~\citep{wang2024efficient}. By controlling for these properties during the molecular generation process (as opposed to post-hoc filtering), we aim to increase the probability of discovering molecules that exhibit all desired characteristics, thereby improving the efficiency of hit identification.

\paragraph{Model} We select LDMol~\citep{chang2024ldmol} to generate molecules, which is a latent diffusion model conditioned on natural language descriptions of molecule properties; this gives flexibility of generating molecules with a wide range of properties.

\paragraph{Experiment setup: TDC oracles}  We consider three proteins oracles from Therpeutic Data Commons (TDC)~\citep{Huang2021tdc}: \texttt{JNK3},  \texttt{GSK3$\beta$}, \texttt{DRD2}, which predicts whether or not a molecule binds to a protein. Note that while this task is similar in nature to the objective of SBDD, we are generating molecules conditioned on a functional text description instead of a 3D protein pocket. However, we could also consider other functional property descriptions, such as molecular solubility, toxicity, etc. 

\paragraph{Prompts} To generate molecules that inhibit a specific protein, we prompt the model with \texttt{``This molecule inhibits \{protein\_name\}"}, following~\citet{wang2024efficient}. 

\paragraph{Metrics} In addition to reporting the top-performing molecules, we report the percent of molecules that are valid \textit{and} unique, as well as their diversity (evaluated using Tanimoto distance on Morgan fingerprints~\citep{rogers2010extended}) and quality, which is the set of unique and valid molecules that also have a quantitative estimate of drug-likeness (QED) $\geq 0.6$ and synthetic accessibility (SA) $\leq 4.0$. This metric was taken from ~\citet{lee2025genmol}.

\begin{table*}[t]
\caption{Multi-property molecule generation results (PoE). For a set of two target properties (\texttt{P$_1$} and \texttt{P$_2$}), we take the set of the top-10 best performing molecules from a batch-size of 512 as the molecules with the highest \texttt{P$_1$}*\texttt{P$_2$} scores. We report averages of the top-10 molecules from 5 runs and the top-1 molecule overall. We also report the diversity, validity \& uniqueness, and quality of all molecules.}
\centering
\resizebox{\linewidth}{!}{%
\begin{tabular}{clccccccccc}
\toprule
\makecell{$\texttt{P}_1$ / $\texttt{P}_2$} & SDE Type & $\beta$ & FKC & $\texttt{P}_1$ top-10 ($\uparrow$) & $\texttt{P}_2$ top-10  ($\uparrow$) & \makecell[c]{($\texttt{P}_1$, $\texttt{P}_2$) top-1 ($\uparrow$)} & Div. ($\uparrow$) & Val. \& Uniq. ($\uparrow$) & Qual. ($\uparrow$)  \\
\midrule

\multirow[c]{3}{*}{\makecell[c]{\texttt{JNK3} \\ \texttt{GSK3$\beta$}}} & Target Score & 0.5 & \no & $0.212_{\pm 0.016}$ & $0.356_{\pm 0.046}$  & $(0.500, 0.580)$ & $\mathbf{0.910_{\pm 0.000}}$  & $0.713_{\pm 0.027}$ & $0.127_{\pm 0.015}$ \\ 
 
\arrayrulecolor{gray}\cline{2-10}\\[-4mm]

& \multirow[c]{2}{*}{Tempered Noise} & \multirow[c]{2}{*}{1.5} & \no & $0.341_{\pm 0.039}$ & $0.468_{\pm 0.041}$ & $(0.590, 0.560)$ & $0.881_{\pm 0.002}$ & $0.813_{\pm 0.025}$ & $0.352_{\pm 0.012}$  \\
 
&   &  & \yes & $\mathbf{0.342_{\pm 0.012}}$ & $\mathbf{0.502_{\pm 0.034}}$ & $\mathbf{(0.500, 0.720)}$ & $0.882_{\pm 0.002}$ & $\mathbf{0.832_{\pm 0.021}}$ & $\mathbf{0.360_{\pm 0.021}}$  \\
 
\cline{1-10} 


\multirow[c]{3}{*}{\makecell[c]{\texttt{JNK3} \\ \texttt{DRD2}}} & Target Score & 0.5 & \no & $0.090_{\pm 0.018}$ & $0.434_{\pm 0.065}$ &  $(0.150, 0.472)$ & $\mathbf{0.915_{\pm 0.001}}$ & $\mathbf{0.671_{\pm 0.022}}$ & $0.228_{\pm 0.011}$ \\
\arrayrulecolor{gray}\cline{2-10}\\[-4mm]

& \multirow[c]{2}{*}{Tempered Noise} & \multirow[c]{2}{*}{1.5} & \no & $0.132_{\pm 0.032}$ & $0.550_{\pm 0.036}$  &  $(0.280, 0.469)$ & $0.884_{\pm 0.001}$ & $0.650_{\pm 0.021}$ & $\mathbf{0.258_{\pm 0.020}}$\\
 
& &  & \yes & $\mathbf{0.141_{\pm 0.020}}$ & $\mathbf{0.617_{\pm 0.040}}$  & $\mathbf{(0.360, 0.655)}$ & $0.884_{\pm 0.005}$ & $0.661_{\pm 0.018}$ & $0.252_{\pm 0.014}$ \\
 \cline{1-10} 

\multirow[c]{3}{*}{\makecell[c]{\texttt{GSK3$\beta$}  \\ \texttt{DRD2}}} & Target Score & 0.5 & \no & $0.146_{\pm 0.034}$ & $0.528_{\pm 0.077}$ & $(0.051, 0.908)$ & $\mathbf{0.914_{\pm 0.001}}$ & $0.709_{\pm 0.021}$ & $0.203_{\pm 0.015}$  \\
\arrayrulecolor{gray}\cline{2-10}\\[-4mm]

& \multirow[c]{2}{*}{Tempered Noise} & \multirow[c]{2}{*}{1.5} & \no & $0.228_{\pm 0.016}$ & $\mathbf{0.649_{\pm 0.084}}$ & $(0.550, 0.655)$ & $0.884_{\pm 0.002}$\ & $\mathbf{0.774_{\pm 0.015}}$ & $0.303_{\pm 0.012}$  \\

&  &  & \yes & $\mathbf{0.266_{\pm 0.061}}$ & $0.638_{\pm 0.036}$ & $\mathbf{(0.520, 0.796)}$ & $0.885_{\pm 0.002}$ & $\mathbf{0.774_{\pm 0.017}}$  & $\mathbf{0.307_{\pm 0.012}}$  \\
\bottomrule
\end{tabular}
}
\label{tab:mol_table_small}
\end{table*}

\paragraph{Results: TDC oracles} We aim to generate molecules that satisfy the function of binding each protein when taking all combinations of the protein pairs. In \cref{tab:mol_table_small}, 
 we show the best performance for each set of molecules and in \cref{tab:mol_table_app} we ablate different SDE components. We find that the tempered noise SDE at higher $\beta$ generates molecules that have higher fitness for binding to each pair of proteins. When we incorporate FKC, the average performance of the molecules further increases. We also note that PoE+FKC tends to generate more molecules that are unique, valid and are higher drug-like quality, although their diversity decreases slightly, which is a common tradeoff. In practice, we find that the FKC weights with the latent diffusion model have a large variance during molecule generation. This is problematic, as a large number of samples are thrown away. Furthermore, we noted that the score was not always well-conditioned. To ameliorate this, for all experiments using LDMol, we divided the weights by a set temperature term ($T=100$) to reduce their variance before resampling, clipped the top $20\%$ to account for any score instabilities, and did early-stopping (only resampled for $70\%$ of the timesteps).

 \looseness=-1
\paragraph{Experiment setup: protein docking} Finally, we consider a more challenging setting of protein-ligand docking, where we generate molecules using LDMol based on text-based prompts of binding to the proteins ATP1A1 (UniProt ID P05023) and CPT2 (UniProt ID P23786), and then evaluate them using docking. The protein pockets were obtained from ~\citet{zhou2024reprogramming} and the final generated molecules were docked using AutoDock Vina~\citep{eberhardt2021autodock}. 

\paragraph{Results: protein docking} \cref{tab:docking} shows the docking scores of molecules, and we find that incorporating FKC generates molecules with better scores. While ligands are typically generated using SBDD, we find it interesting that text-prompt generation is able to produce molecules that have reasonably good docking scores; known binders to ATP1A1 and CPT2 have docking scores of -8.168 and -9.174, respectively~\citep{zhou2024reprogramming}. We visualize the top molecules in \cref{fig:top_docking_molecules}.

\begin{table}[]
\vspace*{-.2cm}
\caption{Docking scores of generated molecules to \texttt{P$_1$}=ATP1A1 and \texttt{P$_2$}=CPT2. We used the Tempered Noise SDE with $\beta=1.5$ and generated 32 molecules.} 
\centering
\resizebox{0.5\linewidth}{!}{%
\begin{tabular}{ccccc}
\toprule
 FKC & \makecell[c]{(\texttt{P$_1$}, \texttt{P$_2$}) top-10 ($\downarrow$)} & \makecell[c]{(\texttt{P$_1$}, \texttt{P$_2$}) top-1 ($\downarrow$)} & Div. ($\uparrow$) \\
\midrule
 \no & $-6.65_{\pm 1.05}, -7.36_{\pm 0.854}$  & $(-8.87, -8.13)$ & $\mathbf{0.921}$   \\
  \yes & $\mathbf{(-7.49_{\pm 0.71}, -8.31_{\pm 0.94})}$  & $\mathbf{(-8.41, -9.73)}$ & $0.895$  \\
\arrayrulecolor{black}\bottomrule
\end{tabular}
}
\label{tab:docking}
\end{table}

\begin{figure}
    \centering
    \includegraphics[width=0.9\columnwidth]{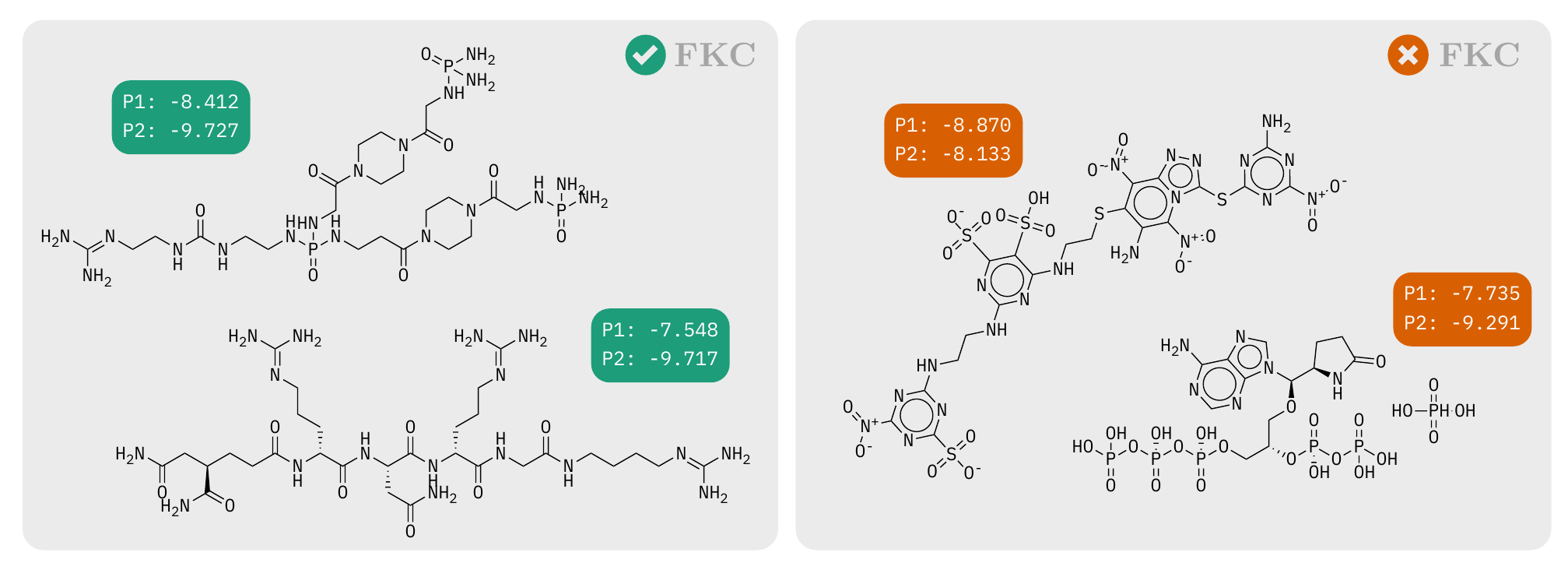}

    \caption{Molecules with best docking scores for binding to ATPA1 (\texttt{P$_1$}) and CPT2 (\texttt{P$_2$}) from PoE with FKC (left) and without (right).}
    \label{fig:top_docking_molecules}
\end{figure}

\begin{table*}[]
\caption{Multi-property molecule generation results. For a set of two target properties (\texttt{P$_1$} and \texttt{P$_2$}), we take the set of the top-10 best performing molecules as the molecules with the highest \texttt{P$_1$}*\texttt{P$_2$} scores. We report the average properties of the top-10 molecules over five runs and the top-1 molecule overall. We also report the diversity, validity \& uniqueness, and quality of all generated molecules, where quality is the percent of molecules that are valid, unique,  have a QED $\geq 0.6$ and SA $< 0.4$. For {\small$\beta = 1$}, target score and tempering noise match (\cref{prop:poe}).}
\centering
\resizebox{\linewidth}{!}{%
\begin{tabular}{clccccccccc}
\toprule
\makecell{\texttt{P$_1$} \\ \texttt{P$_2$}} & SDE Type & $\beta$ & FKC & \texttt{P$_1$} top-10 ($\uparrow$) & \texttt{P$_2$} top-10  ($\uparrow$) & \makecell[c]{(\texttt{P$_1$}, \texttt{P$_2$}) top-1 ($\uparrow$)} & Div. ($\uparrow$) & Val. \& Uniq. ($\uparrow$) & Qual. ($\uparrow$)  \\
\midrule
\multirow[c]{8}{*}{\makecell[c]{\texttt{JNK3} \\ \texttt{GSK3$\beta$}}} & Target Score & \multirow[c]{2}{*}{0.5} & \no & $0.212_{\pm 0.016}$ & $0.356_{\pm 0.046}$  & $(0.500, 0.580)$ & $\mathbf{0.910_{\pm 0.000}}$  & $0.713_{\pm 0.027}$ & $0.127_{\pm 0.015}$ \\

& Tempered Noise & &\no & $\mathbf{0.225_{\pm 0.028}}$ & $\mathbf{0.385_{\pm 0.042}}$ & $\mathbf{(0.440, 0.690)}$ &  $0.909_{\pm 0.001}$ & $\mathbf{0.723_{\pm 0.016}}$ & $\mathbf{0.134_{\pm 0.006}}$ \\ 

\arrayrulecolor{gray}\cline{2-10}\\[-4mm]

& --- & \multirow[c]{2}{*}{1.0} & \no & $0.289_{\pm 0.022}$ & $0.429_{\pm 0.018}$ & $(0.470, 0.580)$ &  $\mathbf{0.898_{\pm 0.002}}$ & $\mathbf{0.811_{\pm 0.008}}$ & $\mathbf{0.205_{\pm 0.011}}$ \\
& --- & & \yes & $\mathbf{0.342_{\pm 0.029}}$ & $\mathbf{0.442_{\pm 0.051}}$ & $\mathbf{(0.600, 0.650)}$ &  $0.897_{\pm 0.002}$ & $0.804_{\pm 0.015}$ & $\mathbf{0.205_{\pm 0.015}}$ \\

\arrayrulecolor{gray}\cline{2-10}\\[-4mm]

& Target Score & \multirow[c]{4}{*}{1.5} & \no &$0.336_{\pm 0.031}$ & $0.484_{\pm 0.052}$ & $(0.480, 0.780)$ &  $\mathbf{0.886_{\pm 0.003}}$ & $0.816_{\pm 0.013}$ & $0.336_{\pm 0.022}$  \\

& Target Score & & \yes & $\mathbf{0.351_{\pm 0.0340}}$ & $0.447_{\pm 0.026}$ & $\mathbf{(0.590, 0.780)}$ &  $\mathbf{0.886_{\pm 0.003}}$ & $0.823_{\pm 0.024}$ & $0.356_{\pm 0.037}$  \\

 & Tempered Noise &  & \no & $0.341_{\pm 0.039}$ & $0.468_{\pm 0.041}$ & $(0.590, 0.560)$ & $0.881_{\pm 0.002}$ & $0.813_{\pm 0.025}$ & $0.352_{\pm 0.012}$  \\
 
 & Tempered Noise  &  & \yes & $0.342_{\pm 0.012}$ & $\mathbf{0.502_{\pm 0.034}}$ & $(0.500, 0.720)$ & $0.882_{\pm 0.002}$ & $\mathbf{0.832_{\pm 0.021}}$ & $\mathbf{0.360_{\pm 0.021}}$  \\
 
\cline{1-10} 
\multirow[c]{8}{*}{\makecell[c]{\texttt{JNK3} \\ \texttt{DRD2}}} & \multirow[t]{2}{*}{Target Score} & \multirow[c]{2}{*}{0.5} & \no & $\mathbf{0.090_{\pm 0.018}}$ & $0.434_{\pm 0.065}$ &  $(0.150, 0.472)$ & $\mathbf{0.915_{\pm 0.001}}$ & $0.671_{\pm 0.022}$ & $0.228_{\pm 0.011}$ \\

& Tempered Score &  & \no & $0.066_{\pm 0.015}$ & $\mathbf{0.571_{\pm 0.187}}$ & $\mathbf{(0.110, 0.943)}$ & $0.914_{\pm 0.002}$ & $\mathbf{0.678_{\pm 0.0187}}$ & $\mathbf{0.236_{\pm 0.020}}$  \\

\arrayrulecolor{gray}\cline{2-10}\\[-4mm]

& --- & \multirow[c]{2}{*}{1.0}  & \no & $0.087_{\pm 0.028}$ & $0.624_{\pm 0.094}$ & $(0.100, 0.978)$ &  $\mathbf{0.903_{\pm 0.001}}$ & $0.675_{\pm 0.022}$ & $0.241_{\pm 0.010}$ \\

& --- &  & \yes & $\mathbf{0.094_{\pm 0.024}}$ & $\mathbf{0.635_{\pm 0.067}}$ & $\mathbf{(0.413, 0.550)}$ &  $0.899_{\pm 0.002}$ & $\mathbf{0.686_{\pm 0.025}}$ & $\mathbf{0.263_{\pm 0.023}}$ \\

\arrayrulecolor{gray}\cline{2-10}\\[-4mm]

& Target Score &\multirow[c]{4}{*}{1.5} & \no &$0.136_{\pm 0.046}$ & $0.582_{\pm 0.067}$ & $\mathbf{(0.490, 0.640)}$ &  $\mathbf{0.886_{\pm 0.003}}$ & $0.639_{\pm 0.019}$ & $0.241_{\pm 0.017}$  \\

& Target Score &  & \yes &$0.102_{\pm 0.031}$ & $\mathbf{0.620_{\pm 0.148}}$ & $(0.320, 0.541)$ &  $0.885_{\pm 0.006}$ & $0.659_{\pm 0.022}$ & $\mathbf{0.274_{\pm 0.028}}$  \\

 & Tempered Noise &  & \no & $0.132_{\pm 0.032}$ & $0.550_{\pm 0.036}$  &  $(0.280, 0.469)$ & $0.884_{\pm 0.001}$ & $0.650_{\pm 0.021}$ & $0.258_{\pm 0.020}$\\
 & Tempered Noise &  & \yes & $\mathbf{0.141_{\pm 0.020}}$ & $0.617_{\pm 0.040}$  & $(0.360, 0.655)$ & $0.884_{\pm 0.005}$ & $\mathbf{0.661_{\pm 0.018}}$ & $0.252_{\pm 0.014}$ \\
 \cline{1-10} 
\multirow[c]{8}{*}{\makecell[c]{\texttt{GSK3$\beta$}  \\ \texttt{DRD2}}} & \multirow[t]{2}{*}{Target Score} & \multirow[c]{2}{*}{0.5} & \no & $0.146_{\pm 0.034}$ & $0.528_{\pm 0.077}$ & $(0.051, 0.908)$ & $\mathbf{0.914_{\pm 0.001}}$ & $\mathbf{0.709_{\pm 0.021}}$ & $\mathbf{0.203_{\pm 0.015}}$  \\

& Tempered Score &  & \no & $\mathbf{0.162_{\pm 0.025}}$ & $\mathbf{0.543_{\pm 0.063}}$ & $\mathbf{(0.430, 0.965)}$ & $\mathbf{0.914_{\pm 0.001}}$ & $0.697_{\pm 0.013}$ & $0.198_{\pm 0.017}$  \\
\arrayrulecolor{gray}\cline{2-10}\\[-4mm]

& --- &\multirow[c]{2}{*}{1.0} & \no & $\mathbf{0.202_{\pm 0.023}}$ & $0.620_{\pm 0.057}$ & $\mathbf{(0.660, 0.726)}$ &  $\mathbf{0.908_{\pm 0.002}}$ & $0.773_{\pm 0.021}$ & $0.238_{\pm 0.021}$ \\
& --- &  & \yes & $0.190_{\pm 0.022}$ & $\mathbf{0.666_{\pm 0.093}}$ & $(0.240, 0.986)$ &  $0.907_{\pm 0.002}$ & $\mathbf{0.784_{\pm 0.010}}$ & $\mathbf{0.254_{\pm 0.019}}$ \\
\arrayrulecolor{gray}\cline{2-10}\\[-4mm]

& Target Score & \multirow[c]{4}{*}{1.5} & \no & $0.240_{\pm 0.030}$ & $0.636_{\pm 0.066}$  & $(0.350, 0.804)$ & $\mathbf{0.894_{\pm 0.002}}$ & $0.759_{\pm 0.015}$ & $0.290_{\pm 0.016}$ \\

& Target Score &  & \yes & $0.222_{\pm 0.036}$ & $0.584_{\pm 0.068}$  & $(0.630, 0.580)$ & $0.891_{\pm 0.003}$ & $0.740_{\pm 0.027}$ & $0.283_{\pm 0.020}$ \\

 & Tempered Score &  & \no & $0.228_{\pm 0.016}$ & $\mathbf{0.649_{\pm 0.084}}$ & $(0.550, 0.655)$ & $0.884_{\pm 0.002}$\ & $\mathbf{0.774_{\pm 0.015}}$ & $0.303_{\pm 0.012}$  \\
 & Tempered Score &  & \yes & $\mathbf{0.266_{\pm 0.061}}$ & $0.638_{\pm 0.036}$ & $\mathbf{(0.520,
 0.796)}$ & $0.885_{\pm 0.002}$ & $\mathbf{0.774_{\pm 0.017}}$  & $\mathbf{0.307_{\pm 0.012}}$  \\
\bottomrule
\end{tabular}
}
\label{tab:mol_table_app}
\end{table*}

%

\end{document}
